\let\oldparagraph\paragraph
\renewcommand{\paragraph}[1]{\oldparagraph{#1.}}
\title{Direct Nash Optimization:\\ \mbox{\!\!Teaching Language Models to Self-Improve with General Preferences}}
\author{
Corby Rosset\thanks{Correspondence to \texttt{\{\href{mailto:corbyrosset@microsoft.com}{corbyrosset},\href{mailto:hassanam@microsoft.com}{hassanam},\href{mailto:tengyangxie@microsoft.com}{tengyangxie}\}@microsoft.com}}
\quad
Ching-An Cheng
\quad
Arindam Mitra
\quad
Michael Santacroce \\
Ahmed Awadallah\footnotemark[1]
\quad
Tengyang Xie\footnotemark[1]
\\
\\
Microsoft Research
}
\date{}
\begin{document}

\maketitle

\algdef{SE}[SUBALG]{Indent}{EndIndent}{}{\algorithmicend\ }%
\algtext*{Indent}
\algtext*{EndIndent}

\begin{abstract}
This paper studies post-training large language models (LLMs) using preference feedback from a powerful oracle to help a model iteratively improve over itself.
The typical approach for post-training LLMs involves Reinforcement Learning from Human Feedback (RLHF), which traditionally separates reward learning and subsequent policy optimization. %
However, such a reward maximization approach is limited by %
the nature of ``point-wise'' rewards (such as that of the Bradley-Terry model), which fails to express complex intransitive or cyclic preference relations. While  advances on RLHF
show reward learning and policy optimization can be merged into a single contrastive objective
for stability, they yet still remain tethered to the reward maximization framework. %
Recently, a new wave of research sidesteps %
the reward maximization presumptions in favor of directly optimizing over ``pair-wise'' or \mbox{\textbf{general preferences}}. 
In this paper, we introduce \mbox{\emph{\DNOfull}~(\DNO)}, a \emph{provable} and \emph{scalable} algorithm that marries the \emph{simplicity} and \emph{stability} of contrastive learning with \emph{theoretical generality} from optimizing general preferences.
Because \DNO is a \emph{batched on-policy} algorithm using a regression-based objective, its implementation is straightforward and efficient. Moreover, \DNO enjoys \emph{monotonic improvement} across iterations which helps it improve even over a strong teacher (such as GPT-4). 
In our experiments, a resulting 7B parameter \orca model aligned by \DNO achieves the state-of-the-art win-rate against \gptturbo of 33\% on \alpaca (even after controlling for response length), an absolute gain of 26\% ($7\%\! \to\! 33\%$) over the initializing model. It outperforms models with far more parameters, including \mistralL, Self-Rewarding LM (70B parameters), and older versions of GPT-4.
Our ablation studies analyze critical design decisions surrounding the choice of preference pairs, and the use of LLMs-as-preference-annotators.
These results underscore the promise of \DNO in the LLMs post-training, as well as offer actionable insights for the AI research community.
\end{abstract}

\section{Introduction}

\begin{figure}[t]
    \centering
    \includegraphics[width=1.0\textwidth]{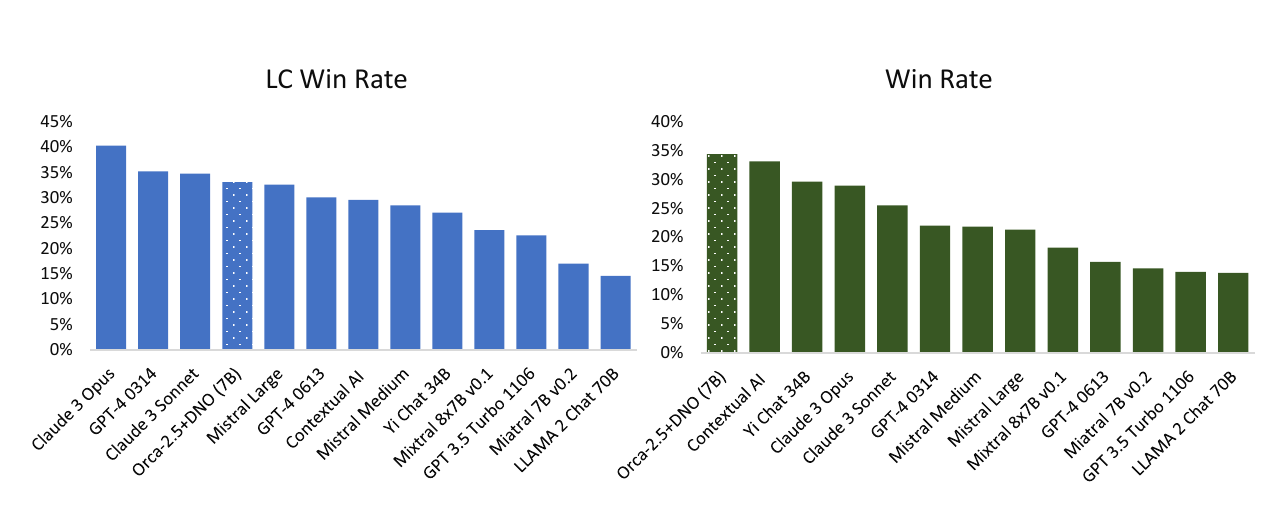}
    \caption{\DNOfull achieves state-of-the-art results for a 7B parameter large language model, being the first to surpass 30\%  in both raw win-rate and length-controlled (LC) win-rate against \gptturbo. Win Rate and LC Win Rate have $0.93$ to $0.98$ correlation with ChatBot Arena scores.}
    \label{fig:alpaca-2-bar-plot}
\end{figure}

The field of artificial intelligence is evolving towards advanced models that can understand, reason, follow complex instructions, and create nuanced content, while aligning with human values and preferences.
Large Language Models (LLMs) \citep[e.g.,][]{brown2020language,ouyang2022training,touvron2023llama,achiam2023gpt} have demonstrated remarkable capabilities in generating human-like text, answering questions, and coding, yet they still face challenges in tasks that require a high degree of reliability, safety, and ethical alignment.
To address these challenges, fine-tuning LLMs using Reinforcement Learning from Human Feedback (RLHF) \citep{christiano2017deep,bai2022training,ouyang2022training} has demonstrates strong potential for making LLMs more helpful by aligning them with human values.

The RLHF framework has been long studied in the context of  preference-based reinforcement learning (RL) or RL from human preferences \citep[e.g.,][]{knox2008tamer,akrour2012april,griffith2013policy,wirth2017survey,christiano2017deep}.
The conventional methods for RLHF typically assume that the preference is determined by a  scalar reward function through some  model, 
such as the frequently used Bradley-Terry (BT) model~\citep{bradley1952rank}.\footnote{We use ``reward model'' to denote a framework that translates preferences into rewards, e.g., Bradley-Terry, while ``reward function'' is a (possibly learned) function that outputs reward scalars.}
RLHF then optimizes toward the preference in a two-step procedure: reward learning, and policy optimization (through RL) to maximize the  learned reward. Under certain conditions, the two-step procedure can be streamlined into a single-step contrastive learning approach~\citep{rafailov2023direct}, eliminating the need for explicit reward learning. Algorithms of this kind \citep[e.g.,][DPO]{rafailov2023direct} leverage the insight that a policy can be expressed equivalently by an ``internal reward function'' that the policy is optimal to, so they reduce the RLHF problem to regressing the policy's internal reward function to that of the preference model. These algorithms are originally offline, and boast enhanced stability and ease of optimization. Nonetheless, two-step RLHF algorithms and their single-step contrastive variants still fundamentally rely on the reward maximization framework, wherein reward-based preferences are governed by, e.g., the BT model.

The reward maximization framing poses a major limitation. Reward functions, defined to output a scalar score $r(x, y)$ for a \emph{single} response $y$ to input $x$, cannot express general preferences $y \succ y' \mid x$ between a \emph{pair} of outputs in all cases, e.g., intransitive or cyclic preferences~\citep{elo1978rating}. Hence, LLMs trained under reward maximization cannot always align with human preference. Furthermore, recent works show that even in settings where preferences can be perfectly expressed under the reward-based BT models, optimizing towards rewards yields problematic behaviors; we refer the reader to~\cite{bertrand2023limitations,azar2023general,munos2023nash} for more details.
Lastly, reward functions in practice can quickly become ``stale'' as the distribution of the policy shifts under training~\citep{ross2011reduction,cheng2023adversarial,azar2023general,munos2023nash} -- leaving them vulnerable to ``reward hacking''~\citep{amodei2016concrete}%

In response to these weaknesses, an appealing line of work on RLHF proposes to directly optimize the \emph{general preference function} itself, instantiated as some oracle.
These studies re-frame RLHF as finding a Nash equilibrium of a two-player game with ``payoffs'' 
from a regularized~\citep{munos2023nash} or un-regularized~\citep{swamy2024minimaximalist} general preference function. 
To solve it, they further approximate such Nash equilibrium using single-player algorithms by leveraging the symmetry of the preference function.
Then, they instead define the reward of a response as the expected win-rate against the policy's own behavior, as judged by the preference function, e.g., $ r(x, y) = \E_{y' \sim \pi(\cdot \mid x)}\left[\Pcal(y \succ y' \mid x)\right]$. 
Hence, rewards are maximized by responses that are preferred over the policy's expected response, and a Nash equilibrium is achieved when both players deploy a $\pi^\star$ that is preferred over any competing policy. 
However, these proposed single-player algorithms are primarily (purely) on-policy, and they sometimes require a separately estimated preference function or a time-varying reward function. How to scale these algorithms up faithfully is still under-investigated.

We are motivated to overcome two separate challenges: the limited expressivity of reward-based RLHF, and the lack of clarity on how to scale up optimizing with respect to general preferences. Recent advances in reward-based optimization e.g., DPO, already have efficient and scalable implementations -- we seek a similarly efficient solution under the framework of general preferences. 

We propose a provable and scalable RLHF algorithm -- {\bf\em\DNOfull}~(\DNO) (\cref{alg:dno}) that achieves the best of both worlds, combining the scalability of contrastive objectives with the theoretical soundness of general preference optimization. \DNO is designed as a \emph{batched on-policy} algorithm with a regression-based learning objective; this design choice makes \DNO stable and scalable, striking  a balance between deployment efficiency and adaptability.

We summarize at a high level the key ingredients and insights of \DNO below.
\begin{enumerate}[nolistsep]
    \item To address the issue that reward functions cannot express general preferences, we leverage recent insights that the notion of reward of ought to be expressed as \emph{expected} win-rates with regard to a general preference function.\footnote{E.g., for a fixed $y \mid x$, the expected win-rate of $y$ against the policy itself is: $\E_{y' \sim \pi(\cdot \mid x)}\left[\Pcal(y \succ y' \mid x)\right]$.}
    \item To address the issue found in previous work that optimizing this more general objective with online algorithms is sample-inefficient or unstable, we decompose the learning procedure into a sequence of ``batched on-policy'' iterations, wherein each step instead optimizes a simple regression objective. 
    \item The regression objective (we choose binary cross-entropy) aligns the ``internal reward function'' of the policy to the expected win-rate compared with itself (as defined in \cref{step:def_rt} of \cref{alg:dno}). By sampling outputs from the current policy to use for training (i.e., ``self-play''), this procedure incentivizes self-improving behavior. 
    \item Our framework is general enough to admit off-policy samples into training, importantly, those from a more powerful teacher (See choice of $\mu_1$ and $\mu_2$ in ~\cref{alg:dno}).    
    \item Furthermore, to ensure stability and computational efficiency, we propose a filtering scheme such that the reward regression is only performed on preference pairs with a sufficiently large margin (for theoretical explanation, see \cref{paragraph:preference-pair-construction}; in practice, see \cref{sec:results}).
    \item \DNO repeats this procedure for multiple iterations to let the policy optimize toward the general preference. Since each step involves a regression problem it can be easily implemented at scale.

\end{enumerate}

Theoretically, we prove \DNO converges to the intended Nash equilibrium on average, and that  it can improve monotonically across iterations (see \cref{sec:monotonic-improvement}). Furthermore, our finite-sample analysis shows that approximation error at any iteration between the learned policy and the target is tightly bounded (\cref{thm:dno_guarantee_main}).

On the practical side, we provide a scalable implementation of \DNO (\cref{alg:NashDPO}): an iterative self-improving algorithm with contrastive updates, which approximates \cref{alg:dno} under several critical design choices. Those choices include: sampling multiple online outputs from the policy being trained, using GPT-4 as the preference oracle, comparing on-policy samples to GPT-4's own (teacher) outputs, and training only on pairs with ``large margin'' (for theoretical explanation, see \cref{paragraph:preference-pair-construction}; in practice, see \cref{sec:results}). 

The primary distinction of our work over related works of Nash-MD~\citep{munos2023nash} and SPO~\citep{swamy2024minimaximalist} is that they both exhibit sample efficiency issues (two timescale updates or sample-inefficient RL steps), and both use purely on-policy samples. We resolve the efficiency issue with a sample-efficient objective that works in practice, and \DNO is more flexible to incorporate off-policy samples from e.g., a powerful teacher.

Most importantly, \DNO works in practice -- we provide comprehensive empirical evaluations, resulting in state-of-the-art performance:
\begin{itemize}[nolistsep]
\item The resulting 7B parameter \orca model, aligned using the practical implementation of \DNO (\cref{alg:NashDPO}), achieves the state-of-the-art win-rate of any 7B model, exceeding $33\%$ against \gptturbo beyond on the \alpaca, even after controlling for length. This is an over $26\%$ absolute gain ($7\%\!\to\! 33\%$) compared to the initialized model. It outperforms several recent advanced closed-source models, including \mistralL and \gptold, as well as open-source models with far more ($10\times$) parameters, such as Self-Rewarding LM \citep{yuan2024selfrewarding} which has 70B parameters.
\item Our thorough ablation studies in~\cref{sec:results} examine critical design touchpoints surrounding choice of loss function (supervised finetuning or contrastive), training paradigm (with or without on-policy samples), preference annotator quality (large margin or not), and training pair construction (self-play, teacher-vs-student, etc). Our findings highlight that carefully-crafted methods encoded in \cref{alg:NashDPO} lead to substantial gains.
\item We show some examples of outputs across iterations which demonstrate qualitative improvements such as better addressing nuanced issues and presumptious questions (\cref{ex:example-1}), better organization and clarity while refraining from making misleading statements (\cref{ex:example-2}), and higher information density in answers (\cref{ex:example-3}).  
\end{itemize}
We hope that the results presented herein will provide clarity to the community regarding the use of AI feedback for post-training LLMs.

\section{Preliminaries}

This section provides an overview of the RL from human feedback (RLHF) pipeline. We do not differentiate between RLHF and RLAIF \citep[e.g.,][]{bai2022constitutional,lee2023rlaif}, as the distinction is outside our scope of discussion. Thus, we will uniformly refer to both concepts as RLHF. However, we want to make a clear delineation between two subtle differences: RLHF maximizing point-wise reward functions, and RLHF optimizing general preferences. It should be noted that this discussion is more broadly applicable in scope to general contextual bandits setup as well.

Throughout this paper, we use $x \in \Xcal$ to denote the input (i.e. the prompt) received by the LLM from a space $\Xcal$.
In this paper, we do not consider the distribution shift over the prompts, following the standard contextual bandits setup of RLHF \citep[e.g.,][]{ouyang2022training,rafailov2023direct}, and we use $\rho$ to denote the distribution of the prompts.
We use $y \in \Ycal$ to denote the response from the LLM given the prompt $x$ (this corresponds to action in the contextual bandits setup).
We also use $\pi:\Xcal \to \Delta(\Ycal)$ to denote the policy, which is a LLM here, and $\Pi$ is the policy class.

Our discussion throughout this paper will also regularly involve the following three learning paradigms, which are originally introduced and commonly used in the RL literature:
\begin{enumerate}[(1), nolistsep]
    \item \emph{Offline}: The learning algorithm operates without any active data collection, e.g., sampling from the current policy. The algorithm relies solely on an offline dataset for training.
    \item \emph{Purely on-policy}: technically,  online on-policy. In this setup, learning takes place by sampling outputs from the latest policy and immediately updating it based on the newly collected data. No data reuse or additional offline data is considered.
    \item \emph{Batched on-policy}\footnote{
    We acknowledge abuse of terminology.
    Our algorithm is not entirely online, as it only contains batched data collection.
    It is also not strictly on-policy because it uses examples from other policies, like a teacher. While ``offline'' or ``off-policy'' may be technically more relevant, they might lead to misunderstanding among readers and detract from the emphasis we want to place on the collection samples from the current policy, which constitute the majority of our training data.}:
    This setup is the middle of the offline and purely on-policy setups, striking a balance between deployment efficiency and adaptability.
    It involves iterative online data collection and can use other offline data.
    Its distinctive feature is that here the data collection in each iteration occurs in a batched fashion (e.g., akin to a dataset scale, much larger than the size of a typical mini-batch), %
    and the amount of policy change can be more significant (e.g., running gradient steps over multiple epochs of a dataset, as opposed to tens of updates).
\end{enumerate}

\subsection{RLHF Based on Reward Models}
One typical approach to conducting RLHF is a two-step procedure through a reward function \citep{christiano2017deep}.
Suppose a preference dataset $\Dcal_\pref \coloneqq \{(x,y^+,y^-)\}$ is given, where $(y^+,y^-) \sim \piref(\cdot \mid x)$, $\piref$ is some reference policy such as the policy obtained after supervised fine-tuning (SFT), and a preference $y^+ \succ y^- \mid x$ is labeled by some human or AI annotator.
In RLHF with reward functions, the preference is assumed to be generated based on some latent reward  $r^\star$.
The first step is to learn a reward function $r \in \Rcal$ under some reward model assumption, where $\Rcal$ is the reward class.
A number of reward model assumptions have been studied, and the Bradley-Terry (BT)  model \citep{bradley1952rank} is the most commonly used one.
The BT model assumes the probability of $y^+ \succ y^- \mid x$ satisfies
\begin{align*}
\Pcal(y^+ \succ y^- \mid x) \coloneqq \frac{\exp(r^\star(x,y^+))}{\exp(r^\star(x,y^+)) + \exp(r^\star(x,y^-))}.
\end{align*}
This leads to the maximum-likelihood reward learning objective:
\begin{align}
\label{eq:ppo_rm}
\widehat r \leftarrow \argmax_{r \in \Rcal} \E_{(x,y^+,y^-) \sim \Dcal}\left[\log\sigma(r(x,y^+) - r(x,y^-)) \right],
\end{align}
where $\sigma(\cdot) \coloneqq \frac{\exp(\cdot)}{1 + \exp(\cdot)}$ is the sigmoid function. After that, the LLM is finetuned using the learned $\rhat$ with RL,
\begin{align}
\label{eq:ppo_pg}
\pihat \leftarrow \argmax_{\pi \in \Pi} \mathbb{E}_{x \sim \Dcal_\pref, y \sim \pi(\cdot \mid x)} \left[\rhat(x, y) - \beta \log\frac{\pi(x, y)}{\piref(x, y)}\right],
\end{align}
where the KL penalty term, $\mathbb{E}_{x \sim \Dcal_\pref, y \sim \pi(\cdot \mid x)} \left[\beta \log\frac{\pi(x, y)}{\piref(x, y)}\right]$, is used to mitigate overoptimization of the reward model~\citep{ouyang2022training}, controlled by the coeffcient $\beta$. For the purposes of our discussion, we will call the objective above as ``\PPO objective'', and this two-step learning procedure as ``\PPO''.

\paragraph{\DPO}

Direct Preference Optimization (\DPO) is proposed by \citet{rafailov2023direct} as an alternative RLHF approach for combining the two-step procedure of \PPO into a single objective. It utilizes the closed form solution $\pihat$ in \cref{eq:ppo_pg}, so that solving $\pihat$ directly from \cref{eq:ppo_rm} becomes possible via
\begin{align*}
\pihat \leftarrow \argmax_{\pi \in \Pi} \E_{(x,y^+,y^-) \sim \Dcal_\pref} \log \left[ \sigma\left( \beta \log\frac{\pi(y^+ \mid x)}{\piref(y^+ \mid x)} - \beta \log\frac{\pi(y^- \mid x)}{\piref(y^- \mid x)} \right)\right].
\end{align*}

\subsection{RLHF with General Preferences}

We now introduce the setup for directly optimizing a general preference function, as well as provide an overview of existing solutions to achieve this goal (mostly by leveraging the symmetry of the preferences), especially those proposed by \citet{munos2023nash,swamy2024minimaximalist}.

Here we assume that the learner is given query access to a general preference function
$\Pcal(y \succ y' \mid x) \in [0,1]$, for any $(x,y,y') \in \Xcal \times \Ycal \times \Ycal$.
This function indicates the probability that action $y$ is preferred over $y'$ given the context $x$.
In practice, this setup can be viewed as the theoretical mode of RLAIF \citep[e.g.,][]{bai2022constitutional,yuan2024selfrewarding}, human-in-the-loop RLHF \citep[e.g.,][]{ouyang2022training}, or distillation fine-tuning \citep[e.g.,][]{tunstall2023zephyr}.

One common difficulty in optimizing a general preference function is its \emph{intransitivity}, e.g., it is possible that $\Pcal(a \succ b) = \Pcal(b \succ c) = \Pcal(c \succ a) = 1$, for some options $(a,b,c)$ \citep[details see, e.g.,][]{bertrand2023limitations, munos2023nash,swamy2024minimaximalist}. Therefore, the learning goal of optimizing general preferences can be the Nash equilibrium of the two-player zero-sum game with the payoffs as the general preference function $\Pcal$. The formal definition of such Nash equilibrium is defined by the \emph{Minimax Winner}, $\MW$~\citep[see, e.g.,][]{kreweras1965aggregation,simpson1969defining,kramer1973class,fishburn1984probabilistic}, or the \emph{von Neumann Winner} \citep[see, e.g.,][]{dudik2015contextual},
\begin{align}
\label{eq:defMW}
\MW(\Pcal) \coloneqq \argmax_{\pi \in \Pi} \argmin_{\pi' \in \Pi} \Pcal(\pi \succ \pi') = \left(\argmax_{\pi \in \Pi} \min_{\pi' \in \Pi} \Pcal(\pi \succ \pi'), \argmin_{\pi' \in \Pi} \max_{\pi \in \Pi} \Pcal(\pi \succ \pi') \right),
\end{align}
where
\begin{align*}
\Pcal(\pi \succ \pi') \coloneqq \E_{x \sim \rho, y \sim \pi(\cdot\mid x), y' \sim \pi'(\cdot\mid x)} \left[ \Pcal(y \succ y' \mid x) \right].
\end{align*}

\paragraph{\SPO}
To approximate the Nash equilibrium as defined in \cref{eq:defMW}, \citet{swamy2024minimaximalist} proposed a single-player algorithm, \SPO. This algorithm applies results from no-regret algorithms \citep[e.g.,][]{freund1997decision}. The \SPO algorithm is executed essentially using the following two-step iterative process: for each $t = 1,2,\dotsc,T$,
\begin{align*}
    \text{(i)} \quad &~ r_t(x,y) \leftarrow \E_{y' \sim \pi_t(\cdot \mid x)}\left[\Pcal(y \succ y' \mid x)\right],~~\forall (x,y) \in \Xcal \times \Ycal
    \\
    \label{eq:spo_spi}\numberthis
    \text{(ii)} \quad &~ \pi_{t+1}(\cdot \mid x) \leftarrow \frac{1}{Z_t(x)} \pi_t(\cdot \mid x) \exp\left(\frac{r_t(x,\cdot)}{\eta}\right),~~\forall x \in \Xcal,
\end{align*}
where $\eta$ is the learning rate, $\pi_1$ is the uniform policy, i.e., $\pi_1(\cdot \mid x) \leftarrow \unif(\Ycal),~\forall x \in \Xcal$, and $Z_t(x) \coloneqq \sum_{y \in \Ycal} \pi_t(y \mid x) \exp\left(\frac{r_t(x,y)}{\eta}\right)$ is the partition function for iteration $t$.

Using the no-regret update of soft policy iteration, as shown in \cref{eq:spo_spi}, \citet{swamy2024minimaximalist} proved that the uniform mixture of $\pi_{1:T}$ from \SPO is an approximation of the Nash equilibrium of $\MW(\Pcal)$, as defined in \cref{eq:defMW}.

\paragraph{\NashMD}

\citet{munos2023nash} proposed \NashMD to approximate the Nash equilibrium of a KL-regularized preference function,
\begin{align}
\label{eq:def_reg_pref}
\Pcal^{\pi,\pi'}_{\tau}(y \succ y' \mid x) \coloneqq &~ \Pcal(y \succ y' \mid x) - \tau \log\frac{\pi(y \mid x)}{\piref(y \mid x)} + \tau \log\frac{\pi'(y \mid x)}{\piref(y \mid x)},
\\
\Pcal_\tau(\pi \succ \pi') \coloneqq &~ \E_{x \sim \rho, y \sim \pi(\cdot\mid x), y' \sim \pi'(\cdot\mid x)} \left[ \Pcal^{\pi,\pi'}_{\tau}(y \succ y' \mid x) \right]
\\
= &~ \Pcal(\pi \succ \pi') - \tau \E_{x \sim \rho}\left[ \KL(\pi(\cdot \mid x) \midd \piref(\cdot \mid x)) \right] + \tau \E_{x \sim \rho}\left[ \KL(\pi'(\cdot \mid x) \midd \piref(\cdot \mid x)) \right]. \nonumber
\end{align}
Following this, \citet{munos2023nash} demonstrate that the Nash Equilibrium of $\MW(\Pcal_\tau)$ can be approximated using a mirror descent \citep{nemirovskij1983problem,bubeck2015convex,lattimore2020bandit} inspired algorithm, \NashMD, which has a last-iteration guarantee. The \NashMD algorithm can be viewed as a two-step iterative process: for each $t = 1,2,\dotsc,T$,
\begin{align*}
    \text{(i)} \quad &~ r_t(x,y) \leftarrow \E_{y' \sim \pi^{\tau}_t(\cdot \mid x)}\left[\Pcal(y \succ y' \mid x)\right],~~\forall (x,y) \in \Xcal \times \Ycal
    \\
    \text{(ii)} \quad &~ \pi_{t+1}(\cdot \mid x) \leftarrow \frac{1}{Z_t(x)} \pi_t^\tau(\cdot \mid x) \exp\left(\frac{r_t(x,\cdot)}{\eta}\right),~~\forall x \in \Xcal,
\end{align*}
where $\eta$ is the learning rate, $\pi_t^\tau$ is the geometric mixture between $\pi_t$ and $\piref$,
\begin{align}
\label{eq:def_smooth_pit}
\pi_t^\tau(y \mid x) \coloneqq \frac{\pi_t(y \mid x)^{1 - \nicefrac{\tau}{\eta}}\piref(y \mid x)^{\nicefrac{\tau}{\eta}}}{\sum_{y' \in \Ycal}\pi_t(y \mid x)^{1 - \nicefrac{\tau}{\eta}}\piref(y \mid x)^{\nicefrac{\tau}{\eta}}}, ~ \forall (x,y) \in \Xcal\times\Ycal,
\end{align}
and $Z_t(x) \coloneqq \sum_{y \in \Ycal} \pi_t^\tau(y \mid x) \exp\left(\frac{r_t(x,y)}{\eta}\right)$ is the partition function for iteration $t$. 

\section{\DNOfull}
\label{sec:dno}

While the no-regret update of soft policy iteration used in \SPO and \NashMD has inspired many standard (deep) reinforcement learning algorithms (e.g., \citealp[NPG,][]{kakade2001natural}; \citealp[TRPO,][]{schulman2015trust}; \citealp[PPO,][]{schulman2017proximal}; \citealp[SAC,][]{haarnoja2018soft}), its faithful implementation still usually involves the two-timescale update. This could potentially lead to complex hyperparameter tuning and unstable performance.
In this section, we propose a direct and iterative algorithm, {\em \DNOfull} (\cref{alg:dno}), to approximate the Nash equilibrium of $\MW(\Pcal)$. This algorithm is primarily inspired by \SPO. 
It can be readily adapted to \NashMD for approximating the Nash equilibrium of $\MW(\mathcal{P}_\tau)$ with the last-iteration guarantee, and we will discuss this in \cref{app:nash_md_ver}.

\begin{algorithm}[ht]
\caption{\DNOfull (\DNO)}
\label{alg:dno}
{\bfseries input:} General preference function $\Pcal$, learning rate $\eta$, number of iterations $T$, prompt distribution $\rho$.
\begin{algorithmic}[1]
\State Initialize $\pi_1 \leftarrow \unif(\Acal)$.
\For{iteration $t = 1,2,\dotsc,T$}
    \State\label{step:def_rt} Compute $r_t(x,y) \leftarrow \E_{y' \sim \pi_t(\cdot \mid x)}\left[\Pcal(y \succ y' \mid x)\right]$, ~$\forall (x,y)\in\Xcal\times\Ycal$.
    \State\label{step:dno_learnpi} Obtain $\pi_{t+1}$ by,
    \begin{equation}
    \label{eq:dnoloss}
    \begin{gathered}
    \pi_{t+1} \leftarrow \argmax_{\pi \in \Pi} \E_{(x,y_1,y_2) \sim \Dcal_t}\bigg\{ \sigma\left( r_t(x,y_1) - r_t(x,y_2) \right) \log \left[ \sigma\left( \eta \log\frac{\pi(y_1 \mid x)}{\pi_t(y_1 \mid x)} - \eta \log\frac{\pi(y_2 \mid x)}{\pi_t(y_2 \mid x)} \right)\right]
    \\
    \hspace{120pt} + \sigma\left( r_t(x,y_2) - r_t(x,y_1) \right) \log \left[ \sigma\left( \eta \log\frac{\pi(y_2 \mid x)}{\pi_t(y_2 \mid x)} - \eta \log\frac{\pi(y_1 \mid x)}{\pi_t(y_1 \mid x)} \right) \right] \bigg\},
    \end{gathered}
    \end{equation}
    where $\Dcal_t$ is generated by $x \sim \rho, y_1 \sim \mu_{1,t}(\cdot \mid x), y_2 \sim \mu_{2,t}(\cdot \mid x)$; $\mu_{1,t}$ and $\mu_{2,t}$ can be either off-policy (e.g., pre-defined) or on-policy (based on $\pi_t$).
\EndFor
\State \Return $\pibar = \unif(\pi_{1:T})$.
\end{algorithmic}
\end{algorithm}

\subsection{Derivation of \cref{alg:dno}}
\label{sec:derivation}
In most practical algorithms which are inspired by soft policy iteration, including the original practical version of \SPO, they typically adopt the following approach: ``pushing'' $\pi$ towards this subsequent learning goal in each iteration (we will refer to this as the soft policy iteration target throughout the paper):
\begin{align}
\label{eq:def_pistar}
\pi^\star_{t+1}(\cdot \mid x) \coloneqq \frac{1}{Z_t(x)} \pi_{t}(\cdot \mid x) \exp\left(\frac{r_t(x,\cdot)}{\eta}\right),
\end{align}
where $Z_t(x) = \sum_{y \in \Ycal} \pi_t(y \mid x) \exp\left(\frac{r_t(x,y)}{\eta}\right)$ is the partition function. 
It can be realized by minimizing a distance metric between $\pi_{t+1}$ and $\pi$. For example, the \PPO algorithm for RLHF \citep[e.g.,][]{christiano2017deep,ouyang2022training} essentially minimizes the reverse KL divergence as follows,
\begin{align*}
(\pi_{t+1}^\text{PPO} \leftarrow)
&~ \argmin_{\pi \in \Pi} \E_{x \sim \rho}\left[ \KL(\pi(\cdot \mid x) \midd \pi^\star_{t+1}(\cdot \mid x)) \right]
\\
= &~ \argmax_{\pi \in \Pi} \mathbb{E}_{x \sim \rho, y \sim \pi(\cdot \mid x)} \left[\eta\log\frac{\pi^\star_{t+1}(x, y)}{\pi_t(x, y)} - \eta \log\frac{\pi(x, y)}{\pi_t(x, y)}\right]
\\
= &~ \argmax_{\pi \in \Pi} \mathbb{E}_{x \sim \rho, y \sim \pi(\cdot \mid x)} \left[r_t(x, y) - \eta Z_t(x) - \eta \log\frac{\pi(x, y)}{\pi_t(x, y)}\right]
\\
= &~ \argmax_{\pi \in \Pi} \mathbb{E}_{x \sim \rho, y \sim \pi(\cdot \mid x)} \left[r_t(x, y) - \eta \log\frac{\pi(x, y)}{\pi_t(x, y)}\right].
\tag{$\Leftrightarrow$ \PPO objective, as $Z_t$ is independent of $\pi$}
\end{align*}
However, implementing the above approach typically necessitates \emph{on-policy} sampling from the current policy $\pi$. Ignoring the $Z_t(x)$ term could also lead to high variance in the empirical gradient estimation. This is a persistent issue in actor-critic style algorithms that usually suggests the need for an additional baseline \citep[details see, e.g.,][]{mnih2016asynchronous}, which also requires on-policy estimation.
When $r_t$ also varies over iterations, as in \SPO or \NashMD, we then need to update all of the policy, baseline, and reward online simultaneously.
These challenges have hindered the scalability of existing algorithms which are based on learning the Nash equilibrium of general preference functions.
\paragraph{Regressing ``internal rewards'' towards preference-based rewards}
Different from the mentioned approaches above which are mostly focusing on the concept of ``pushing'' $\pi \to \pi^\star_{t+1}$. We now consider the following mechanism: {\em regressing} $r_{\pi,t} \to r_t$, where $r_{\pi,t}$ is the internal reward function of a given $\pi$ at iteration $t$:
\begin{align}
\label{eq:def_rpit}
r_{\pi,t}(x,y) \coloneqq \eta \log\frac{\pi(y \mid x)}{\pi_t(y\mid x)} + \eta Z_t(x).
\end{align}
This can be interpreted as a reparameterization trick, where $\pi$ is exactly the soft policy iteration target (refer to \cref{eq:def_pistar}) induced by $\pi_t$ and the defined $r_{\pi,t}$.
Therefore, regressing that specifically parameterized $r_{\pi,t}$ to $r_t$ allows us to directly optimize the soft policy iteration target with respect to $r_{\pi,t}$ and $\pi_t$. This idea is inspired by techniques from inverse RL \citep[e.g.,][Guided Cost Learning]{finn2016guided,finn2016connection} as well as recent advances in RLHF \citep[DPO]{rafailov2023direct}.
To avoid the issues arising from the partition function $Z_t(x)$, we consider learning from the $(x, y_1, y_2)$ tuple, where $y_1$ and $y_2$ are both responses to textual input $x$.
Note that, due to the offline learning nature of the regressive objective, the sampling distribution of $y_1$ and $y_2$ does not impact the learning objective (i.e., $r_{\pi,t} \to r_t$, but it may affect the sample complexity from the coverage reason as we will discuss later), whereas pushing $\pi \to \pi_{t+1}^\star$ requires sampling $y$ on-policy, as previously discussed.
Therefore, given an arbitrary $(x,y_1,y_2)$ tuple, we regress the ``prediction'' $\hat{z}$ to the ``goal'' $z$ (both defined below), using binary logarithmic/cross-entropy loss to measure the prediction error \citep[see, e.g.,][]{foster2021efficient},
\begin{small}
\begin{gather*}
\label{eq:def_z}\numberthis
\zhat \coloneqq \sigma \left( r_{\pi,t}(x,y_1) - r_{\pi,t}(x,y_2) \right) = \sigma \bigg( \underbrace{\eta \log\frac{\pi(y_1 \mid x)}{\pi_t(y_1 \mid x)} - \eta \log\frac{\pi(y_2 \mid x)}{\pi_t(y_2 \mid x)}}_{\eqqcolon \Delta_{\pi,t}(x,y_1,y_2)} \bigg),\quad z \coloneqq \sigma\big(\underbrace{r_t(x,y_1) - r_t(x,y_2)}_{\eqqcolon \Delta_t^\star(x,y_1,y_2)}\big);
\\
\ell_{\pi,t}(x,y_1,y_2) \coloneqq z \log(1/\zhat) + (1 - z) \log(1/(1-\zhat))
\\
= - \sigma\big( \Delta_t^\star(x,y_1,y_2) \big) \log \left[ \sigma\big( \Delta_{\pi,t}(x,y_1,y_2) \big) \right] - \sigma\big( \Delta_t^\star(x,y_2,y_1) \big) \log \left[ \sigma\big( \Delta_{\pi,t}(x,y_2,y_1) \big) \right].
\end{gather*}
\end{small}
Therefore, we obtain the following objective to learn $\pi_{t+1}$,
\begin{align*}
&~ \argmin_{\pi \in \Pi} \Lcal_{\Dcal_t} (\pi; \pi_t) 
\\
\coloneqq &~ \argmin_{\pi \in \Pi} \E_{(x,y_1,y_2) \sim \Dcal_t}\left[ \ell_{\pi,t}(x,y_1,y_2) \right]
\\
= &~ \argmax_{\pi \in \Pi} \E_{(x,y_1,y_2) \sim \Dcal_t}\Big[ \sigma\big( \Delta_t^\star(x,y_1,y_2) \big) \log \left[ \sigma\big( \Delta_{\pi,t}(x,y_1,y_2) \big) \right] + \sigma\big( \Delta_t^\star(x,y_2,y_1) \big) \log \left[ \sigma\big( \Delta_{\pi,t}(x,y_2,y_1) \big) \right] \Big]
\\
\label{eq:reward_regression}\numberthis
= &~ \argmax_{\pi \in \Pi} \E_{(x,y_1,y_2) \sim \Dcal_t}\Bigg[ \sigma\left( r_t(x,y_1) - r_t(x,y_2) \right) \log \left[ \sigma\left( \eta \log\frac{\pi(y_1 \mid x)}{\pi_t(y_1 \mid x)} - \eta \log\frac{\pi(y_2 \mid x)}{\pi_t(y_2 \mid x)} \right) \right]
\\
&~ \hspace{100pt} + \sigma\left( r_t(x,y_2) - r_t(x,y_1) \right) \log \left[ \sigma\left( \eta \log\frac{\pi(y_2 \mid x)}{\pi_t(y_2 \mid x)} - \eta \log\frac{\pi(y_1 \mid x)}{\pi_t(y_1 \mid x)} \right) \right] \Bigg].
\end{align*}
Here, $\Dcal_t$ is generated by $x \sim \rho, y_1 \sim \mu_{1,t}(\cdot \mid x), y_2 \sim \mu_{2,t}(\cdot \mid x)$ with some policies $\mu_{1,t}$ and $\mu_{2,t}$. It should be noted that $\mu_{1,t}$ and $\mu_{2,t}$ for each $t \in [T]$ are parts of our algorithm's design decisions. We will provide choices for them in \cref{sec:dno_theory} to promote sample efficiency, which are informed by our finite-sample analysis.

\paragraph{Monotonic improvement from the \emph{batched on-policy} updates}
\label{sec:monotonic-improvement}
One key distinction between \DNO and existing algorithms for learning Nash equilibrium (such as \SPO and \NashMD) is that those algorithms aim to approach the Nash equilibrium in a purely on-policy manner, which can be potentially unstable and may need to incorporate two-timescale updates (that change the reward function used in the inner problem more frequently).
On the other hand, \DNO is a batched on-policy algorithm with single-timescale updates.

From a purely theoretical perspective, it seems that \DNO may require many iterations to ensure the convergence of $\pibar$ to the Nash equilibrium, which could potentially be costly.
Additionally, \DNO only converges on-average, and it is unrealistic to deploy in practice that uniform mixture policy $\pibar$ (note that, as inspired by \citet{munos2023nash}, \DNO could be extended to regularized preferences with last-iteration convergence, which is discussed in \cref{app:nash_md_ver}).
However, from a practical perspective, we can leverage the following two desirable properties from LLMs scenario to eliminate these concerns and ensure \emph{monotonic improvement} over the \DNO iterations:

Firstly, the soft policy iteration target \cref{eq:def_pistar} is actually the analytical solution for maximizing the following loss, $\ell_t(\pi) \coloneqq \Pcal(\pi \succ \pi_t) - \eta\E_{x \sim \rho}\left[ \KL(\pi(\cdot \mid x) \midd \pi_t(\cdot \mid x)) \right]$, and $\pi_{t+1}^\star = \argmax_{\pi}\ell_t(\pi)$. We can notice that $\ell_t(\pi_t) = 0.5$ and $\E_{x \sim \rho}\left[ \KL(\pi(\cdot \mid x) \midd \pi_t(\cdot \mid x)) \right] \geq 0$. This means $0.5 \leq \ell_t(\pi_{t+1}^\star) = \Pcal(\pi_{t+1}^\star \succ \pi_t) - \eta\E_{x \sim \rho}\left[ \KL(\pi_{t+1}^\star(\cdot \mid x) \midd \pi_t(\cdot \mid x)) \right] \Longrightarrow \Pcal(\pi_{t+1}^\star \succ \pi_t) \geq 0.5 + \eta\E_{x \sim \rho}\left[ \KL(\pi_{t+1}^\star(\cdot \mid x) \midd \pi_t(\cdot \mid x)) \right]$.
This means $\pi_{t+1}^\star$ is guaranteed to be more preferred than $\pi_t$ with respect to the preference $\Pcal$, and there is even a computable lower bound of the amount of improvement---$\eta\E_{x \sim \rho}\left[ \KL(\pi_{t+1}^\star(\cdot \mid x) \midd \pi_t(\cdot \mid x)) \right]$.
Therefore, if $\pi_{t+1}$ learned from \cref{step:dno_learnpi} of \cref{alg:dno} is a accurate enough approximation of $\pi_{t+1}^\star$ (which is proved in \cref{sec:dno_theory}), we could expect that the policy is monotonically improved over \DNO iterations. 
Note that the monotonic improvement guarantee is \emph{exclusive} to our design choice of \emph{batched on-policy} updates in \DNO, because the alternatives are either unclear or unstable: it is undefined how to perform iterative updates offline, and one gradient update from a purely online algorithm may not be able able to accurately approximate the the soft policy iteration target $\pi_{t+1}^\star$. Secondly, in practice, we usually have validation data available, which allows us to deploy the best policy over $\pi_{1:(T+1)}$.

\subsection{Theoretical Analysis}
\label{sec:dno_theory}

One of our major proposals is to use a regression-based objective to approximate the explicit soft policy iteration; in this section we show the approximation error from this regression is tightly bounded with finite-sample analysis. 
The following proposition discusses how well the solution of the regression-based objective (defined in \cref{eq:reward_regression} or \cref{step:dno_learnpi} of \cref{alg:dno}) can approximate the soft policy iteration (\cref{eq:def_pistar}) in terms of the total variation metric at each iteration.
\begin{theorem}[informal]
\label{thm:dno_guarantee_main}
Fix an arbitrary iteration $t \in [T]$. Suppose $\pi_{t+1}$ is from \cref{step:dno_learnpi} of \cref{alg:dno}, and $\pi_{t+1}^\star$ is defined in \cref{eq:def_pistar}. Then, under mild assumptions (realizability and boundedness, formally introduced in \cref{app:dno_proof}), we have
\begin{align*}
\E_{x \sim \rho}\left[ \left( \TV(\pi_{t+1}(\cdot \mid x),\pi_{t+1}^\star(\cdot \mid x)) \right)^2\right] \leq \Ocal \left( \frac{\Cfrak_t\Rmax^2\log(\nicefrac{|\Pi|}{\delta})}{N} \right),
\end{align*}
where the concentrability coefficient $\Cfrak_t$ is defined as below,
\begin{align*}
\Cfrak_t \coloneqq \frac{\E_{x \sim \rho, y_1 \sim \pi_{t+1}^\star(\cdot \mid x), y_2 \sim \pi_{t+1}(\cdot \mid x)}\left[ \left( \log\frac{\pi_{t+1}^\star(y_1 \mid x)}{\pi_{t+1}(y_1 \mid x)} - \log\frac{\pi_{t+1}^\star(y_2 \mid x)}{\pi_{t+1}(y_2 \mid x)} \right)^2\right]}{\E_{x \sim \rho, y_1 \sim \mu_{1,t}(\cdot \mid x), y_2 \sim \mu_{2,t}(\cdot \mid x)}\left[ \left( \log\frac{\pi_{t+1}^\star(y_1 \mid x)}{\pi_{t+1}(y_1 \mid x)} - \log\frac{\pi_{t+1}^\star(y_2 \mid x)}{\pi_{t+1}(y_2 \mid x)} \right)^2\right]}.
\end{align*}
\end{theorem}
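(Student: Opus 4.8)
The plan is to reduce the statement to a standard finite-sample generalization bound for the regression objective in \cref{eq:reward_regression}, then transport the resulting error from the data distribution $(\mu_{1,t},\mu_{2,t})$ to the target distribution $(\pi_{t+1}^\star,\pi_{t+1})$ through the concentrability coefficient $\Cfrak_t$, and finally convert log-ratio differences into total variation. The first step is an algebraic identity that makes the whole argument go through: since $\pi_{t+1}^\star(\cdot\mid x)\propto\pi_t(\cdot\mid x)\exp(r_t(x,\cdot)/\eta)$, the partition function cancels in the reward \emph{difference}, so $\Delta_t^\star(x,y_1,y_2)=\eta\log\frac{\pi_{t+1}^\star(y_1\mid x)}{\pi_t(y_1\mid x)}-\eta\log\frac{\pi_{t+1}^\star(y_2\mid x)}{\pi_t(y_2\mid x)}$. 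Subtracting the parameterized logit $\Delta_{\pi_{t+1},t}$ then yields $\Delta_{\pi_{t+1},t}-\Delta_t^\star=-\eta\big(f(x,y_1)-f(x,y_2)\big)$ with $f(x,y)\coloneqq\log\frac{\pi_{t+1}^\star(y\mid x)}{\pi_{t+1}(y\mid x)}$, which is precisely the integrand appearing in both the numerator and the denominator of $\Cfrak_t$.

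Next I would establish the regression bound under the data distribution. By realizability ($\pi_{t+1}^\star\in\Pi$), the population cross-entropy loss $\E_{\Dcal_t}[\ell_{\pi,t}]$ is minimized at $\pi_{t+1}^\star$, where $\hat z=z$ pointwise. Treating $\pi\mapsto\hat z$ as a conditional Bernoulli density estimator and invoking a finite-class log-loss / maximum-likelihood generalization argument (the Bernstein-type tool of \citet{foster2021efficient}) gives, with probability $1-\delta$, an excess-risk bound that translates into a squared-Hellinger bound between $\mathrm{Ber}(z)$ and $\mathrm{Ber}(\hat z)$ of order $\Ocal(\log(\nicefrac{|\Pi|}{\delta})/N)$. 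Boundedness (so that $|\Delta_t^\star|,|\Delta_{\pi_{t+1},t}|\lesssim\Rmax$, hence $z,\hat z$ are bounded away from $\{0,1\}$) lets me pass from squared Hellinger to squared logit error, yielding $\E_{(x,y_1,y_2)\sim\Dcal_t}\big[(\Delta_{\pi_{t+1},t}-\Delta_t^\star)^2\big]\le\Ocal(\Rmax^2\log(\nicefrac{|\Pi|}{\delta})/N)$. By the identity above this equals $\eta^2\,\E_{\mu_{1,t},\mu_{2,t}}[(f(x,y_1)-f(x,y_2))^2]$, i.e.\ a bound on the \emph{denominator} of $\Cfrak_t$. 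I expect this concentration-plus-log-loss-to-squared-error step to be the main obstacle, since it requires care with the self-bounding (variance $\lesssim$ mean) structure of the log loss to obtain the fast $1/N$ rate rather than $1/\sqrt N$, and to track the dependence on $\Rmax$ explicitly.

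The change of measure is then immediate from the definition of $\Cfrak_t$: multiplying the denominator bound by $\Cfrak_t$ controls the numerator, giving $\E_{x\sim\rho,\,y_1\sim\pi_{t+1}^\star,\,y_2\sim\pi_{t+1}}[(f(x,y_1)-f(x,y_2))^2]\le\Ocal(\Cfrak_t\Rmax^2\log(\nicefrac{|\Pi|}{\delta})/N)$. Finally I would convert this to total variation. Writing $p=\pi_{t+1}^\star(\cdot\mid x)$, $q=\pi_{t+1}(\cdot\mid x)$ and $f=\log(p/q)$, expanding the square gives $\E_{y_1\sim p,\,y_2\sim q}[(f(y_1)-f(y_2))^2]=\E_p[f^2]+\E_q[f^2]-2\,\E_p[f]\,\E_q[f]\ge\E_q[f^2]$, where I use $\E_p[f^2]\ge0$ and $\E_p[f]\,\E_q[f]=-\KL(p\midd q)\,\KL(q\midd p)\le0$. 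A bounded-ratio Pinsker-type estimate (again using boundedness of $f$, so $|e^{f}-1|\lesssim|f|$ and hence $\TV(p,q)=\tfrac12\E_{y\sim q}[|e^{f(y)}-1|]\lesssim\sqrt{\E_q[f^2]}$) then yields $\TV(p,q)^2\lesssim\E_q[f^2]$. Taking $\E_{x\sim\rho}$ and chaining the three displays produces the claimed bound $\E_{x\sim\rho}[\TV(\pi_{t+1},\pi_{t+1}^\star)^2]\le\Ocal(\Cfrak_t\Rmax^2\log(\nicefrac{|\Pi|}{\delta})/N)$. The point of this route is that it lower-bounds the numerator directly by $\E_q[f^2]$ rather than by the squared Jeffreys divergence; bounding $\TV^2$ through $\E_q[f^2]$ (instead of through $\KL^2$, which would only give $\TV^4$) is exactly what preserves the linear-in-$1/N$ rate.
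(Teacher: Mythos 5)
Your proposal follows essentially the same route as the paper's proof of \cref{thm:dno_guarantee_app}: a finite-class log-loss generalization bound converted into a squared-logit error using boundedness (the paper goes through Pinsker on Bernoullis plus a mean-value-theorem step where you invoke Hellinger, which is the same estimate), the partition-function-cancelling identity $\Delta_{\pi_{t+1},t}-\Delta_t^\star=-\eta\left(f(x,y_1)-f(x,y_2)\right)$ with $f=\log\frac{\pi_{t+1}^\star}{\pi_{t+1}}$, the change of measure through $\Cfrak_t$, and the expansion of $\E_{p\times q}\left[(f(y_1)-f(y_2))^2\right]$ whose cross term equals $2\KL(p \midd q)\KL(q \midd p)\geq 0$. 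The only cosmetic difference is the final inequality: you keep only $\E_q[f^2]$ and bound $\TV(p,q)\leq\tfrac12\E_q\left[|e^{f}-1|\right]$ directly (here you should restrict to the event $\{f<0\}$, where $1-e^{f}\leq|f|$, so the constant does not depend on $\sup f$ as your stated ``$|e^f-1|\lesssim|f|$ by boundedness'' would suggest), whereas the paper keeps $\E_p[f^2]+\E_q[f^2]$ and compares the $f$-divergence generated by $(1+u)|\log u|$ against $\tfrac12|u-1|$ --- the two are equivalent up to absolute constants.
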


If $\pi_t = \pi_t^\star$ for all $t \in [T]$, the reader can refer to \citep[Section 3]{swamy2024minimaximalist} for the convergence of $\pibar$ (returned by \cref{alg:dno}) to the Nash equilibrium. We expect the total variation difference between $\pi_t$ and $\pi_t^\star$ provided by \cref{thm:dno_guarantee_main} will be additive errors on top of the guarantees from \citet{swamy2024minimaximalist}.

Note that, we present the concentrability coefficient $\Cfrak_t$ as data-dependent, with $\pi_{t+1}$ (learned from data) as part of its definition. We aim to make this guiding the design choices of $\mu_{1,t}$ and $\mu_{2,t}$ from such $\Cfrak_t$ for the purpose of sample efficiency.
The formal statement and detailed proof of \cref{thm:dno_guarantee_main}, without involving $\pi_{t+1}$, are deferred to \cref{app:dno_proof}.
Although it shares a similar expression to the concentrability coefficient in offline reinforcement learning \citep[e.g.,][]{chen2019information,xie2021bellman}, the policies $\mu_{1,t}$ and $\mu_{2,t}$ are flexible here due to the generative nature of large language models. This flexibility allows for additional intervention, enhancing sample efficiency.

We can notice that the value of $\Cfrak_t$ can be always bounded by $\Cfrak_t \leq \max_{(x,y) \in \Xcal\times\Ycal} \frac{\pi_{t+1}^\star(y \mid x)\pi_{t+1}(y \mid x)}{\mu_{1,t}(y\mid x)\mu_{2,t}(y\mid x)}$ in the worst case. However, as $\pi_{t+1}$ is likely to be restricted within a certain region, for instance, because fine-tuning will not significantly alter the behavior of the language model, we anticipate that such a coefficient will not depend on the per-$(x,y)$ worst case.
On the other hand, as a direct observation, we notice that the ideal selection of $\mu_{1,t}$ and $\mu_{2,t}$ should be close to the target of soft policy iteration $\pi_{t+1}^\star$ (assuming $\pi_{t+1}^\star$ and $\pi_{t+1}$ are close). 
Interestingly, this theoretical observation coincides with recent empirical results. Here, \citet{liu2024statistical} suggests that using statistical rejection sampling to sample from the soft policy iteration target (which is almost equivalent to sampling $y_1$ and $y_2$ from $\pi_{t+1}^\star$) could benefit preference tuning.
However, in our case, if we use similar statistical rejection sampling techniques on $\pi_t$ to sample $\pi_{t+1}^\star$ (and $\pi_{t+1}$), the cost of rejection sampling is likely to be comparable to the concentrability coefficient $\Cfrak_t$ when choosing $\mu_{1,t}$ and $\mu_{2,t}$ to be $\pi_t$ \citep[see, e.g.,][]{wwen2013mcbook}. This suggests that both $\pi_t$ and $\pi_{t+1}^\star$ (via rejection sampling) as the choices of $\mu_{1,t}$ and $\mu_{2,t}$ will be comparable options in terms of sample efficiency.
On the other hand, as we will demonstrate in the next section, since $r_t$ is defined based on $\pi_t$ (as shown in \cref{step:def_rt} of \cref{alg:dno}), choosing $\mu_{1,t}$ and $\mu_{2,t}$ to be $\pi_t$ can easily adapt to such a reward of $r_t$.

Another interesting observation is that despite \cref{eq:reward_regression} sharing a similar form with Bradley-Terry style reward modeling with using MLE, the target distributions used to measure distribution shift appear to be quite different. This disparity is due to the different objectives: fitting soft policy iteration versus reward estimation. 
For the Bradley-Terry style reward modeling using MLE, the desired distribution of $y_1$ and $y_2$ should be two distinct distributions \citep[see, e.g.,][]{zhan2024provable,xiong2023iterative}. However, in our case where the learning goal is to fit the soft policy iteration, we may prefer $y_1$ and $y_2$ from two (near) on-policy distributions as discussed above, as long as we expect the learned $\pi_{t+1}$ will be accurate enough. To the best of our knowledge, this is the first theoretical result that illustrates the importance of on-policy sampling beyond policy optimization style algorithms for RLHF.

\section{\mbox{Practical Algorithm -- Iterative Contrastive Self-Improvement}}
\label{sec:prac_algo}

In this section, we shift our focus to the algorithmic design of the practically scalable version of \DNO, following the principles discussed in the last section.
A primary challenge encountered in the implementation of the conceptual algorithm \DNO (\cref{alg:dno}) stems from the necessity to compute the expectation with respect to the preference function $\mathcal{P}$ under the current policy $\pi_t$.
Perhaps surprisingly, as we will show, all we need is a properly implemented iterative \DPO-like contrastive learning algorithm.

\begin{algorithm}[th]
\caption{\NashDPO: Practical Implementation of \DNO via Iterative Contrastive Self-Improvement}
\label{alg:NashDPO}
{\bfseries input:} General preference function $\Pcal$, learning rate $\etatd$, iterations $T$, reference policy $\piref$, prompt distribution $\rho$.
\begin{algorithmic}[1]
\State Initialize $\pi_1 \leftarrow \piref$.
\For{iteration $t = 1,2,\dotsc,T$}
    \State {\bf Construct } $\Dcal_t = \{(x,y^\gold)\}$ where $x \sim \rho$ and $y \sim \pig(\cdot\mid x)$.
    \State {\bf Sample \emph{batched on-policy} responses:}\label{step:batched_op} Sample $K$ outputs per per prompt using the current $\pi_t$: $\{y_t^1,y_t^2,\dotsc,y_t^K\} \sim \pi_t(\cdot \mid x)$, $\forall x \in \Dcal_t$.
    \State {\bf Rank responses:}\label{step:dno_rank} For each $x \in \Dcal_t$, rank the corresponding $\{y_t^1,y_t^2,\dotsc,y_t^K,y^\gold\}$ using the pair-wise win-rate by sampling from the general preference function $\Pcal$.
    \State {\bf Filter preference pairs:}\label{step:filtering} Construct $\Dcal_{t+1} = \{(x,y_t^+,y_t^-)\}$, for all $x \in \Dcal_{t+1}$, and $(y_t^+,y_t^-)$ are large-margin pairs (based on the win-rate rank) within the responses for $x$ from the previous step.
    \State {\bf Contrastive learning:} Obtain $\pi_{t+1}$ by,
    \begin{gather}
    \label{eq:nashdpo}
    \pi_{t+1} \leftarrow \argmax_{\pi \in \Pi} \E_{(x,y_t^+,y_t^-) \sim \Dcal_{t+1}} \log \left[ \sigma\left( \etatd \log\frac{\pi(y_t^+ \mid x)}{\pi_t(y_t^+ \mid x)} - \etatd \log\frac{\pi(y_t^- \mid x)}{\pi_t(y_t^- \mid x)} \right)\right].
    \end{gather}
\EndFor
\State \Return best of $\pi_{1:(T+1)}$ on the validation data.
\end{algorithmic}
\end{algorithm}

We present our the practical implementation of \DNO in \cref{alg:NashDPO} (\NashDPO), which is a batched on-policy algorithm that conducts self-improvement iteratively via contrastive learning.
One key consideration in our algorithmic design is that we only need to implicitly use the reward function $r_t$. This comes from the specifically designed on-policy sampling, data filtering, and pair construction. 
While these specific design choices make \NashDPO seem similar to simply performing \DPO iteratively, there are significant reasons for these design decisions, as we will discuss below.

\paragraph{Batched on-policy sampling}
\label{paragraph:on-policy-sampling}
The use of batched on-policy sampling in \cref{step:batched_op} of \cref{alg:NashDPO} is crucial to avoid explicit use of $r_t$ (defined as $\E_{y' \sim \pi_t(\cdot \mid x)}\left[\Pcal(y \succ y' \mid x)\right]$ in \cref{step:def_rt} of \cref{alg:dno}). This means we essentially choose $\mu_1$ and $\mu_2$ in \DNO to be $\pi_t$ in \NashDPO, but we are free to let them vary slightly as a mixture of other policies, e.g., from a stronger teacher. 
Specifically, it is unrealistic to assume in practice that we can access the exact value of $\mathcal{P}(y \succ y' \mid x)$ given an $(x,y,y')$ tuple. 
Based on the definition of $r_t$ and the fact of $\{y_t^1,y_t^2,\dotsc,y_t^K\}$ are sampled from $\pi_t$, \NashDPO essentially uses the following sampled based approach to estimate $r_t$: $r_t(x,y) \approx \frac{1}{K}\sum_{y' \in \{y_t^1,y_t^2,\dotsc,y_t^K,y^\gold\} \setminus y} \1_\Pcal(\text{Is $y$ better than $y'$ on $x$}?)$, for any $x$ and $y \in \{y_t^1,y_t^2,\dotsc,y_t^K,y^\gold\}$, where $\1_\Pcal$ denotes one sample from $\Pcal$ and output $\{0,1\}$. This is implemented in \cref{step:dno_rank} of \cref{alg:NashDPO}, and its precise implementation on this is discussed in the \cref{sec:experiments}.
On the other hand, as we discussed in the last section, the batched on-policy sampling from $\pi_t$ is an appropriate option due to the consideration of sample efficiency when we use \cref{eq:nashdpo} to approximate the soft policy iteration (see \cref{thm:dno_guarantee_main} and its discussion). 

\paragraph{Preference pair construction}
\label{paragraph:preference-pair-construction}
Another key design choice in \cref{alg:NashDPO} is that \cref{eq:nashdpo} of \cref{alg:NashDPO} only uses the purely contrastive loss, whereas \cref{eq:dnoloss} of \cref{alg:dno} also contains the regression target $\sigma\left( r_t(x,y) - r_t(x,y') \right)$ (for a given $(x,y,y')$ tuple), which is not necessarily $\{0,1\}$.
As we discussed above, it is unrealistic to expect access to the exact value of $\mathcal{P}(y \succ y' \mid x)$, so it is also unlikely to get an accurate value of the regression target $\sigma( r_t(x,y) - r_t(x,y') )$. Thus, we add an additional data filtering step to address this issue as in \cref{step:filtering} of \cref{alg:NashDPO}.
Ideally, we want the selected $(x,y^+,y^-)$ tuple to satisfy $\sigma(r_t(x,y_t^+) - r_t(x,y_t^-)) \approx 1$, so that \cref{eq:dnoloss} can be approximated by \cref{eq:nashdpo}. However, one can notice that it requires $r_t(x,y_t^+) - r_t(x,y_t^-) \to \infty$, but we know $r_t(x,y) \in [0,1]$, $\forall (x,y) \in \Xcal \times \Ycal$.

From the derivation of \DNO in \cref{sec:dno}, it is clear that scaling up $r_t$ and $\eta$ with the same absolute constant $c$ does not affect the soft policy iteration target of \cref{eq:def_pistar}, but it will slightly change the \DNO objective (\cref{eq:dnoloss} in \cref{alg:dno}) by $r_t \to c \cdot r_t$ and $\eta \to c \cdot \eta \eqqcolon \etatd$.
This scaling strategy helps us sidestep the problem of bounded $r_t$, and in this sense, we may expect the proper $\etatd$ in \NashDPO to be relatively larger (than, e.g., $\eta$ in \cref{alg:dno}).
However, an enlarged $\etatd$ in \cref{eq:dnoloss} will worsen the sample complexity suggested in \cref{thm:dno_guarantee_main} (for details, refer to its proof in \cref{app:dno_proof}, especially for the derivation of \cref{eq:mean_value_thm}). So, to avoid the proper $\etatd$ being too large, \textbf{we only use pairs with large margin} as in \cref{step:filtering} of \cref{alg:NashDPO} to make sure $r_t(x,y_t^+) - r_t(x,y_t^-)$ is not too small. This decision is also supported empirically in techniques like RLCD~\citep{yang2023rlcd} and Axiomatic Preference Models~\citep{rosset2023axiomatic} which highlight the importance of having large margin or clear directional differences between positive and negative LLM responses when training preference models.

\paragraph{Relationship between \NashDPO and \DPO}
The reader may discern that \NashDPO~(\cref{alg:NashDPO})---the practical implementation of \DNO---can be described as an iterative  version of the \DPO algorithm.
Such similarity is by design, intended to harness the simplicity and effectiveness of DPO~\citep{rafailov2023direct} and build on  empirical advancements from recent work that applies DPO iteratively \citep[e.g.,][]{yuan2024selfrewarding,snorkelai2024}. Our experiments point to the importance of several design choices which help accommodate the general preferences, such as rankings derived from pair-wise win rates. More interestingly, our findings point to a surprising connection---that \textbf{\emph{``a meticulously designed iterative \DPO algorithm'' could approach the Nash equilibrium of any given general preferences.}}

Our general algorithmic framework---\DNO~(\cref{alg:dno})---is broader and fundamentally different from iterative \DPO. For example, the \DNO framework could also be directly extended to the regularized preference case (as discussed in \cref{app:nash_md_ver}) or equipped with other advanced sample techniques \citep[e.g.,][RSO]{liu2024statistical} as suggested by \cref{thm:dno_guarantee_main} for sample efficiency.
On the other hand, although the soft policy iteration (or the KL-regularized reward optimization) is used in both \DNO and \DPO, they arise from fundamentally different reasons.
For \DNO, KL-regularization originates from online learning, no-regret learning through mirror descent \citep{nemirovskij1983problem} or follow-the-regularized-leader (FTRL) \citep{kalai2005efficient,cesa2006prediction,shalev2012online,hazan2016introduction}.
For \DPO and \PPO, the KL-regularization is an approximation for the total variation penalty to ensure monotonic improvement of the policy \citep{kakade2002approximately,schulman2015trust}. Later, this approach was simplified by \citet[PPO]{schulman2017proximal}, and recently used for post-training LLMs \citep{ouyang2022training}.

\section{Experiments}
\label{sec:experiments}

\begin{figure}[t]
    \centering
    \includegraphics[width=0.75\textwidth]{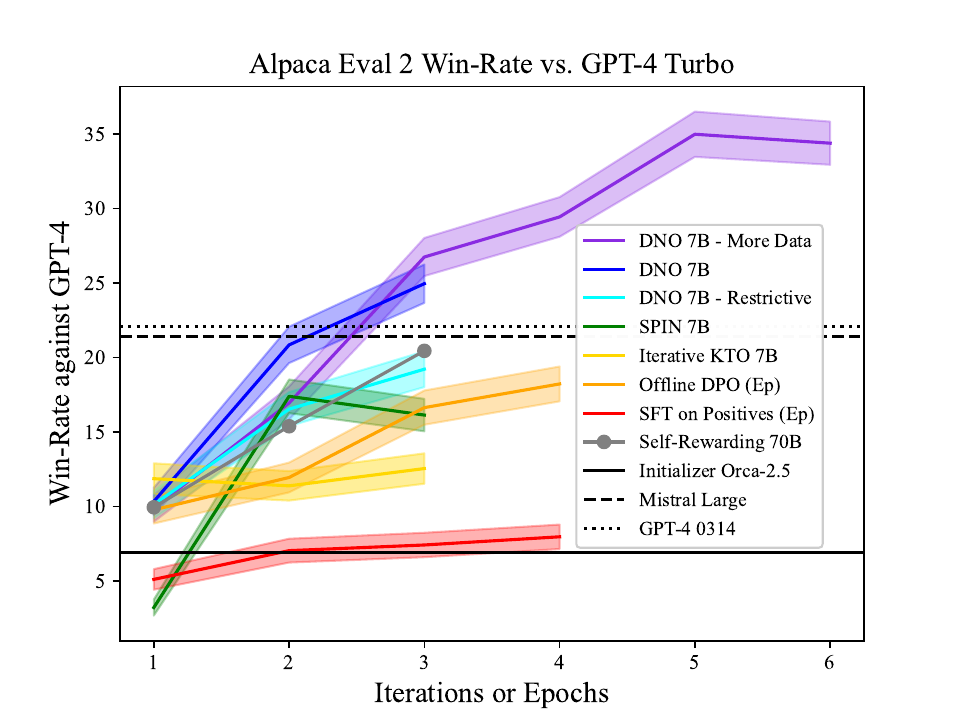}
    \caption{Comparison of various post-training techniques showing that \DNOfull (\DNO) is the most effective. All methods with colorful error bands are 1) implemented by ourselves, 2) initialized with a 7B parameter \orca LLM, and 3) are ``batched on-policy'' (except SFT and Offline \DPO which are epochs), all else being equal. }
    \label{fig:alpaca-2-plots-dno}
\end{figure}

\cref{alg:NashDPO} is chosen for its efficiency and simplicity from an implementation standpoint (in this section, we will use \DNO to denote \cref{alg:NashDPO} or \NashDPO for simplicity). Once the input dataset $\{x_i \in \Xcal\}$ is chosen, each iteration of \DNO proceeds in three phrases: sampling outputs from the current policy, annotating outputs for preference pair generation, and then training the next policy with the new training pairs. Iteration 0 is defined to start by sampling from the initial SFT model to produce training data for iteration 1. 

\subsection{Experimental Setup}
\textbf{Data}: We mainly use Ultrafeedback~\citep{cui2023ultrafeedback}, which consists of 60k prompts, several models' outputs to those prompts, and preference annotations from \gptturbo. This dataset thus provides a source of offline preferences. For our iterative experiments, we split this dataset into three non-overlapping partitions of the inputs to be used for separate iterations of batched on-policy learning. For each input, we also collect the \gptturbo output if it was not already present in the original dataset to be reserved for $y^{\text{gold}}$. 

Every experiment except one in this study solely uses UltraFeedback. The exception is one ``scaled up'' experiment with about 10x more data sourced from a mixture of datasets aggregated including  Anthropic HH ~\citep{bai2022training}, UltraChat ~\citep{ding2023enhancing}, MetaMathQA~\citep{yu2023metamath}, EvolInstruct~\citep{xu2023wizardlm},  UltraFeedback~\citep{cui2023ultrafeedback} and Orca-2~\citep{mitra2023orca}. Note that we only use the input prompts for these datasets and collect a \gptturbo responses for all ~600k of these input prompts.  

\textbf{Sampling from the Policy}: At the end of training, we sample 5 outputs from the resulting student policy using top p sampling with $p=0.95$ and temperature 0.7. Several works have shown the benefit of sampling and comparing multiple diverse outputs from the policy~\citep{yuan2023scaling, mitra2024orcamath, liu2024statistical, dong2023raft, wang2022self}. We implement a simple defect detection system which flags any sample that has a high amount of repeated n-grams as automatic negative. 

\textbf{Preference Annotation}: We use \gptturbo ``as a judge'' to label preferences among the 5 policy samples and 1 gold sample (which is also \gptturbo) as shown in \cref{fig:annotation_prompt}. This prompt contains a few minor modifications from the that used in~\citep{yuan2024selfrewarding}. It implements an additive scoring framework on a 6-point scale where a score of 6 represents the highest quality answer according to certain dimensions like ``correctness'', ``expert knowledge'', ``conciseness'' etc. By following this rubric, GPT-4 acting as an annotator represents a best-effort general preference model because it compares multiple candidate responses side-by-side in the context window, and stratifies them along meaningful dimensions of quality.

\begin{table}[t]
\centering
\begin{tabular}{ll|ccc|ccc}
 &  & \multicolumn{3}{c|}{Alpaca Eval 2} & \multicolumn{3}{c}{MT Bench} \\
Technique & \begin{tabular}{@{}c@{}}Epoch \\ or Iter\end{tabular} & \begin{tabular}{@{}c@{}}Len-control. \\ Win Rate\end{tabular} & \begin{tabular}{@{}c@{}}Win Rate \\ vs. GPT-4\end{tabular} & \begin{tabular}{@{}c@{}}Avg. len \\ (chars)\end{tabular} & \begin{tabular}{@{}c@{}}1st \\ Turn\end{tabular} & \begin{tabular}{@{}c@{}}2nd \\ Turn\end{tabular} & Avg \\ \hline
Orca-2.5 SFT & Epoch 1 & 10.76 & 6.99 & 1174 & 7.72 & 6.02 & 6.88 \\
Orca-2.5 SFT on Positives & Epoch 4 & 11.62 & 7.96 & 1420 & 7.62 & 6.23 & 6.92 \\
Offline \DPO (ours) & Epoch 4 & 19.49 & 18.22 & 1884 & 7.69 & 7.08 & 7.38 \\ \hline
Self-Rewarding 70B  & Iter 3 & -  & 20.44 & 2552 & - & - & 7.25 \\
\SPIN (ours)       & Iter 3 & 16.18 & 16.13 & 1922 & 7.58 & 7.53 & 7.55 \\
\DNOrs & Iter 3 & 21.61 & 19.21 & 1795 & 7.59 & 7.35 & 7.46 \\
\DNOlh & Epoch 1 & 18.58 & 18.28 & 1907 & 8.09 & 7.32 & \textbf{7.70} \\ \hline
\DNO             & Iter 3 & \textbf{22.59} & \textbf{24.97} & 2228 & 7.62 & 7.35 & 7.48 \\

\end{tabular}
\caption{\alpaca and MT-Bench results in our controlled setting after training on UltraFeedback.}
\label{tab:ablation-run-alpaca-win-rates}
\end{table}

\textbf{Training Pair Construction}: 
Adhering to~\cref{step:filtering} in \cref{alg:NashDPO} implies that not all pairs are suitable for training. Firstly, we must enforce the positives to be high quality in an absolute sense, and secondly, the negatives are directionally worse by a large margin. On the 6 point annotation scale, only samples that score a 5 or 6 are allowed to be positives. From the positives that meet this criteria, if any, we then construct all pairs such that the negative is at least 2 points lower. If the positive happens to be from the student, we relax this constraint to 1 point margin since the \gptturbo teacher outputs rarely receive a score less than 5 (as shown by the average teacher score in~\cref{tab:count-of-training-pairs}). 

Additionally, we are motivated to preserve the preference behavior from previous iterations so that new policies do not inadvertently regress to past bad behavior. To enforce this, we incorporate an exponentially decaying proportion of prior iterations' training pairs into the current iteration, i.e. we sample at most 30\% of training pairs from iteration $t-1$, 15\% from $t-2$, and so on. We do not re-inference outputs for those inputs from the most recent policy. Recall that previous iterations' inputs are non-overlapping with the splits for other iterations. 

\textbf{Training}: To prevent overfitting, we train our batched on-policy  methods for at most one epoch on newly constructed pairs. Our effective batch size is fixed to 64 for all experiments. Our learning rate, beta, and alpha are found with brief hyperparameter searches. For most experiments, the learning rate is 5E-5, beta is either 0.1 or 0.05, and alpha is 0.005. We found that at higher iterations, the learning rate needs to be lowered. In SFT (supervised fine-tuning) experiments, our learning rate is 5E-6 and we mask out loss for the inputs. We use the open-source \href{https://huggingface.co/docs/trl/main/en/dpo_trainer}{TRL} library's implementation to run our experiments. 

\textbf{Evaluation}: Our primary goal is to train a policy that is comparable to the most powerful state-of-the-art langauge models. Hence, \alpaca \citep{dubois2023alpacafarm} is an appropriate benchmark because it computes win-rate against \gptturbo in a head-to-head fashion on a dataset of 805 input prompts that is shown to correlate with human preferences (0.93 spearman correlation with Chatbot Arena). While it is known that auto-eval methods also correlate with spurious features such as length, a new version of \alpaca corrects for this with a length-controlled win-rate that has an even higher spearman correlation (0.98) with Chatbot Arena~\footnote{\url{https://github.com/tatsu-lab/alpaca_eval}}. 

We also evaluate on MT-Bench~\citep{zheng2023judging} which allows the llm-as-a-judge to first explain its reasoning before providing a scalar score on 1-10 for the candidate response to a bank of 80 questions. One crucial difference between \alpaca and MT Bench is that the former asks \gptturbo to predict which of two side-by-side responses humans would prefer, weighted by the \textit{logits} to represent its uncertainty, whereas MT-Bench asks the model to first \textit{generate} a justification and then output a score on 1-10, but it neither defines the ratings (e.g. how a 7 is different than a 5) nor accounts for uncertainty in the logits of the score.

We also evaluate on the OpenLLM leaderboard~\citep{open-llm-leaderboard}, which measures reasoning ability on downstream NLP tasks like coding and question answering by evaluating the accuracy of the multiple choice answer option with the highest logit. Since our training data is primarily instruction-following and not trained to output just the sole answer option, this benchmark is not the primary target of this study; nonetheless, \DNO on instruction tuning tasks ought to show no regression on reasoning tasks. 

\begin{table}[t]
\centering
\setlength{\tabcolsep}{4pt} %
\begin{tabular}{c|cc|cccc|ccc}
 & \multicolumn{1}{l}{} & \multicolumn{1}{l|}{} & \multicolumn{4}{c|}{Annotations of Training Data} & \multicolumn{3}{c}{New Training Pairs} \\
 & \multicolumn{1}{l}{inputs} & \multicolumn{1}{l|}{\begin{tabular}{@{}c@{}c@{}}student \\ length \\ (words)\end{tabular}} & \multicolumn{1}{l}{\begin{tabular}{@{}c@{}c@{}}best-of-n \\ student \\ win-rate\end{tabular}} & \multicolumn{1}{l}{\begin{tabular}{@{}c@{}c@{}}Avg. \# \\ student \\ wins\end{tabular}} & \multicolumn{1}{l}{\begin{tabular}{@{}c@{}c@{}}Avg. \\ student \\ score\end{tabular}} & \multicolumn{1}{l|}{\begin{tabular}{@{}c@{}c@{}}Avg. \\ teacher \\ score\end{tabular}} & \multicolumn{1}{l}{$T \succ S$} & \multicolumn{1}{l}{$S \succ T$} & \multicolumn{1}{l}{$S \succ S$} \\ \hline
\DNOrs Iter 0 & 19.6k & 162 +/- 190 & 15.9\% & 0.486 & 3.46 & 4.99 & 42.4k & {\color{red} 0} & {\color{red} 0} \\
\DNOrs Iter 1 & 19.9k & 359 +/- 350 & 34.2\% & 1.11 & 4.86 & 4.77 & 17.5k & {\color{red} 0} & {\color{red} 0} \\
\DNOrs Iter 2 & 19.8k & 256 +/- 207 & 35.0\% & 1.31 & 5.21 & 4.87 & {\color{red} 9.9k} & {\color{red} 0} & {\color{red} 0} \\ \hline
\DNO Iter 0 & 19.6k & 162 +/- 190 & 15.9\% & 0.486 & 3.46 & 4.99 & 30.7k & 4.2k & 25.9k \\
\DNO Iter 1 & 19.9k & 671 +/- 546 & 34.6\% & 1.22 & 4.61 & 4.62 & 20.3k & 19.4k & 62.9k \\
\DNO Iter 2 & 19.8k & 361 +/- 251 & 43.6\% & 1.90 & 5.25 & 4.59 & 7.1k & 32.4k & 10.9k
\end{tabular}
\caption{The dynamics of how sampled outputs from a previous iteration's policy compare to their teacher, and how many new training pairs they give rise to in the next iteration. The crucial point is that \DNO constructs new pairs where the student is compared to the teacher, whereas \DNOrs, \SPIN, and IPO-MD do not. }
\label{tab:count-of-training-pairs}
\end{table}

\subsection{Results and Analysis}
\label{sec:results}

We run several head-to-head experiments that control for hyperparameters and input data. We often refer to the policy being trained as the ``student'' and GPT-4 as a ``teacher''; GPT-4 is also used as an annotator when prompted. 

\textbf{SFT Baselines}
The first baseline is \orca itself, which is a \texttt{mistralai/Mistral-7B-v0.1} raw pretrained model fine-tuned on a new collection of Orca-2 data~\citep{mitra2023orca}. This model was finetuned for three epochs and achieves scores shown in the top of \cref{tab:big-run-alpaca-win-rates}. All other experiments in this study are initialized with Epoch 1 of \orca. This is the solid horizontal line in \cref{fig:alpaca-2-plots-dno}. 

The second baseline is continue-SFT of \orca training towards the positives in UltraFeedback (and masking out loss over the input prompts). If the original positive in that dataset was not from \gptturbo, we replace it with one that is. This is the red line in \cref{fig:alpaca-2-plots-dno}. It is clear that even offline contrastive training methods are more beneficial than additional SFT, showing that the \textit{difference} between the positive and negative output provides more valuable training signal than the positive in isolation. 

\textbf{Large Margin Filtering of Training Pairs}: We ran a simple experiment of Offline \DPO for one epoch on UltraFeedback data. In the control, we trained on all 63k preference pairs in the original dataset, whereas in the treatment we filtered the 42k pairs that met a large margin requirement enforcing that the positive's scores exceeded that of the negative by at least 1.0 (out of 10) according to their \gptturbo annotator. All else was equal. Even though the treatment was trained for fewer steps on less data, it achieved an \alpaca win rate of 11.60 vs 9.60 for the control, showing that fewer higher quality preference pairs is better than a higher quantity of noisy pairs (not shown in the tables). 

\textbf{On-Policy is Better than Off-Policy} One of the critical questions in this study whether to sample ``on-policy'' outputs from the current student to use in training pairs, or whether ``off-policy'' outputs collected from other models different than the student will suffice. We ran 4 epochs of Offline \DPO on UltraFeedback (filtered for large margin), and as shown in \cref{tab:ablation-run-alpaca-win-rates}, on-policy methods especially \DNO surpass the off-policy \DPO, even when trained for 4 epochs while the on-policy models were granted only three iterations. Recall that each iteration of batched on-policy training sees only a third of the UltraFeedback input data, whereas an epoch of Offline \DPO sees the entire dataset. 

\begin{table}[t]
\begin{tabular}{c|cccccc|l}
\multicolumn{1}{r|}{} & \begin{tabular}{@{}c@{}c@{}}ARC-C \\ (25-shot)\end{tabular} & \begin{tabular}{@{}c@{}c@{}}GSM8K \\ (5-shot)\end{tabular} & \begin{tabular}{@{}c@{}c@{}}HellaSwag \\ (10-shot)\end{tabular} & \begin{tabular}{@{}c@{}c@{}}MMLU \\ (5-shot)\end{tabular} & \begin{tabular}{@{}c@{}c@{}}TruthfulQA \\ (0-shot)\end{tabular} & \begin{tabular}{@{}c@{}c@{}}WinoGrande \\ (5-shot)\end{tabular} & \multicolumn{1}{c}{Avg} \\ \hline
Orca-2.5 Epoch 1 & 0.609 & 0.635 & 0.818 & 0.614 & 0.489 & 0.738 & 0.652 \\
Orca-2.5 Epoch 3 & 0.624 & 0.641 & 0.826 & 0.624 & 0.506 & 0.746 & 0.661 \\ 
\SPIN (ours) Iter 3 & 0.668 & 0.448 & 0.862 & 0.623 & 0.601 & 0.759 & 0.660 \\ \hline
\DNO Iter 1 & 0.657 & 0.572 & 0.834 & 0.623 & 0.568 & 0.755 & 0.668 \\
\DNO Iter 2 & 0.663 & 0.562 & 0.845 & 0.624 & 0.580 & 0.753 & 0.671 \\
\DNO Iter 3 & 0.672 & 0.542 & 0.852 & 0.622 & 0.606 & 0.753 & 0.675
\end{tabular}
\caption{Results on Open-LLM Leaderboard reasoning tasks, which we do not expect to decrease.}
\label{tab:open-llm-leaderboard}
\end{table}

\textbf{Higher Quality Annotators}
\label{sec:better-annotator}
In our study, we use \gptturbo to provide the annotations for preference pairs. However, the Self-Rewarding Language Model uses the \llama \citep{touvron2023llama} model trained to also give feedback as the annotator, which in their study starts off with a 65\% agreement rate with human-labeled preferences improving to 80\% in the last iteration~\citep{yuan2024selfrewarding}. While it was not reported how well \gptturbo's annotations agree with their held-out human labels, we believe that having a higher-quality annotator to start with will lead to higher quality policies. Since both our studies use UltraFeedback data, and our annotation prompt is based on their annotation prompt, we believe there is a valid comparison. 

We observe \DNO initialized with a 7B base model outperforms the 70B parameter Self-Rewarding model over the same number of training iterations (24.97 win-rate vs 20.44 on \alpaca, and 7.46 MT-Bench vs 7.25), at least in part due to the higher quality preference annotations. See the dark blue band versus the gray line in \cref{fig:alpaca-2-plots-dno} and the corresponding row in \cref{tab:ablation-run-alpaca-win-rates}. However, unlike Self-Rewarding LM, we saw a slight gain rather than a drop reasoning benchmarks like ARC-Challenge~\citep{allenai:arc} and HellaSwag~\citep{zellers2019hellaswag}. Granted, the evaluation of OpenLLM predicts the answer with the max logit corresponding to one of the multiple-choice options, which is not congruous with how these techniques are trained.

\textbf{Training Pair Construction}
One of the most critical implementation questions in this study is \textbf{how to construct training pairs that help the student policy exceed a strong teacher} like \gptturbo. One approach, Self-Play Finetuning (\SPIN), removes the preference annotation step and automatically assigns the teacher output to be the positive, and all student samples to be negative~\citep{chen2024selfplay}. We find in our re-implementation of \SPIN that this is detrimental, presumably because this automatic assignment could lead to noisy training pairs in cases where the student might actually be preferred. The resulting win-rate of \SPIN is only 16.13 after three epochs of iterative training compared to 24.97 for \DNO as shown in  \cref{tab:ablation-run-alpaca-win-rates}, all else being equal. Similar results hold in the OpenLLM results in~\cref{tab:open-llm-leaderboard}.

In a second experiment, which we denote \DNOrs, we annotate all preference pairs with \gptturbo as usual, but only admit training pairs where the teacher's output is the preferred one. The difference between \DNO and \DNOrs is illustrated in  \cref{tab:count-of-training-pairs} where 0 student-vs-teacher and student-vs-student pairs are created. The same is also true for \SPIN, but  \SPIN would admit a greater quantity of noisy teacher-vs-student examples even when they are dis-preferred: \cref{tab:count-of-training-pairs} shows that after Iteration 2 of \DNOrs, only 9.9k instances exist of the teacher being preferred over the student, whereas \SPIN would have automatically created about 100k (5 samples $\times$ 20k inputs). 

While \DNOrs is slightly better (19.21 win-rate) than \SPIN, it still does not give the student a chance to compare its behavior to a powerful teacher. Absence of this signal is a major oversight, since the last row of  \cref{tab:count-of-training-pairs} shows that by Iter 3, over 64\% of the \DNO training data (32k pairs) are cases where the student is in fact preferred over the teacher, a number which increases with iteration.  \textbf{We conclude it is imperative to ``\textit{allow the student to become the teacher}'' i.e. learn from comparisons where its own outputs are preferred over a more powerful teacher}. 

One curious phenomenon in~\cref{tab:count-of-training-pairs} is that while the teacher outputs are fixed ahead of time, the annotator gives slightly lower scores to the teacher as the student improves; we are not sure if this is an innocuous artifact of preference annotations, or symptomatic of a deeper problem. Also, the total quantity of new ``large margin'' training pairs (not counting those sampled from previous iterations) in \DNO tends to decrease as the policy improves across iterations, but we do not have enough data to quantify how this relates to a change in quality.

\textbf{Lookahead to Future Iterations}
As a curiosity, we experimented with whether a model could benefit from the knowledge of which training pairs it would generate if it could look into the future.  
We tested this by running three-iterations of \DNO, accumulating all the preference pairs across iterations, combining and shuffling them, and then re-starting training from the initial model. In essence, this turns the batch-online \DNO into an offline learning algorithm we denote as \DNOlh. We trained for one epoch on the three iterations' worth of preference data. It deteriorated more than we expected on \alpaca win-rate (24.97 to 18.18), however, even more surprisingly, the MT-Bench numbers improved significantly (7.48 to 7.70). 
While the reasons for the relatively low correlation between MT-Bench and \alpaca are not entirely clear, it is important to consider the disparity in the size of the datasets. Given that MT-Bench consists of merely 80 examples, whereas \alpaca contains 10x more, we conjecture that the statistical significance and reliability of the findings from \alpaca are regarded with greater confidence.

\textbf{DNO Scales with More Data}:
One of the reasons we split UltraFeedback into three non-overlapping partitions is to avoid overfitting. Another strategy to avoid overfitting is to collect more data, so we increased by a factor of 10 the instruction data based on publicly available datasets. We split a large mixture of datasets into six non-overlapping partitions of roughly 100k inputs each (and inference \gptturbo outputs for all inputs), and show that \DNOmore scales well in this expanded regime (see the purple line in \cref{fig:alpaca-2-plots-dno} and the last row of \cref{tab:big-run-alpaca-win-rates}. 

We make some notes on the behavior of this experiment: because each iteration builds on outputs of the previous iteration, if there are any anomalies or errors in critical components such as preference annotation, those errors will propagate and the only way to combat them is ``roll back'' to the iteration that introduced them. This can result in wasted time and cost, which are both already very high as shown in~\cref{additional-experimental-details}. We suspect that the ``depth'' of iterations matters more than the ``width'' or number of samples within each iteration, and furthermore, that having equal number of inputs per iteration may not be optimal, but we did not test this thoroughly. From an efficiency standpoint, although this algorithm is ``batched'', some optimizations can be made, such as starting to annotate sampled policy outputs are soon as they are ready instead of waiting for all inference jobs to finish. 

\textbf{``Exploding'' Lengths}
It is known that contrastive LLM training techniques, especially \DPO, lead to longer outputs from the model which is widely suspected to be a form of ``reward hacking''. Curiously, ~\cref{tab:count-of-training-pairs} shows that the largest jump comes after the first round of contrastive training (Iteration 1), where lengths explode by at least a factor of 2 over the initializing SFT model, before inching down again in the next iteration. We interpret this ``length spike'' as wasted computation optimizing towards a spurious signal; we wish we were better equipped to control this phenomenon. 

\begin{table}[t]
\centering
\begin{tabular}{cc|ccc|ccc}
 &  & \multicolumn{3}{c|}{Alpaca Eval 2} & \multicolumn{3}{c}{MT Bench} \\
Technique & \begin{tabular}{@{}c@{}}Epoch \\ or Iter\end{tabular} & \begin{tabular}{@{}c@{}}Len-control. \\ Win Rate\end{tabular} & \begin{tabular}{@{}c@{}}Win Rate \\ vs. GPT-4\end{tabular} & \begin{tabular}{@{}c@{}}Avg. len \\ (chars)\end{tabular} & \begin{tabular}{@{}c@{}}1st \\ Turn\end{tabular} & \begin{tabular}{@{}c@{}}2nd \\ Turn\end{tabular} & Avg \\ \hline
Orca-2.5 SFT & Epoch 1 & 10.76 & 6.99 & 1174 & 7.72 & 6.02 & 6.88 \\
Orca-2.5 SFT & Epoch 2 & 15.29 & 7.88 & 1060 & 7.56 & 6.38 & 6.98 \\
Orca-2.5 SFT & Epoch 3 & 15.90 & 8.17 & 1058 & 7.53 & 6.73 & 7.13 \\ \hline
\DNOmore & Iter 1 & 8.96 & 10.67 & 2795 & 7.00 & 6.06 & 6.53 \\
\DNOmore & Iter 2 & 14.61 & 16.94 & 2782 & 7.62 & 7.23 & 7.42 \\
\DNOmore & Iter 3 & 21.81 & 26.74 & 2539 & 7.74 & 6.66 & 7.21 \\
\DNOmore & Iter 4 & 22.93 & 29.08 & 3033 & 7.54 & 6.92 & 7.24 \\
\DNOmore & Iter 5 & 32.06 & 34.98 & 2856 & 7.10 & 6.39 & 6.75 \\
\DNOmore & Iter 6 & 33.05 & 34.38 & 2683 & 7.28 & 6.65 & 6.97
\end{tabular}
\caption{\DNOmore is trained on 10x more instruction data than \DNO. It is still initialized with Epoch 1 of \orca SFT, so the delta it provides in \alpaca win rate is 27.39 absolute (22.29 length-controlled)}
\label{tab:big-run-alpaca-win-rates}
\end{table}

\section{Related Work}

We divide the space of related work into whehter or not the techniques use SFT or contrastive losses, in offline or online update settings. 

\textbf{Online RLHF algorithms}:
RLHF innovated how to align language models with human preferences~\citep{christiano2017deep,stiennon2020learning}, but it is unstable to train and memory-intensive, requiring all three of the parameterized policy model, reward model, and advantage model to be on device for training.

\textbf{Reward-model Augmented SFT}:
Since the introduction of RLHF, several emergent techniques apply reward models in various ways, such as to filter training data or rank responses. Reward rAnked Finetuning (RAFT)~\citep{dong2023raft} and RRHF~\citep{yuan2023rrhf} offer the conceptually simplest solution for offline preference learning, which is to sample multiple outputs from a policy, rank them with a reward model, and then finetune on the best sampled output using SFT. This resembles the iterative behavior-cloning technique DAgger~\citep{ross2011reduction}. 

\textbf{Offline Contrastive Preference Learning}:
There exist several loss functions for contrastive preference learning, first introduced in the offline setting, namely Direct Preference Optimization~\citep[DPO]{rafailov2023direct} and Calibrated Sequence Likelihood Estimation a.k.a.~SLiC~\citep{zhao2023calibrating}.~\citet{azar2023general} make it clear that point-wise reward estimates are no substitute for pair-wise preferences, and that a policy can easily overfit to deterministic preferences without proper regularization. They derive a more general objective for RLHF, IPO, to directly optimize offline preference probabilities. 

Statistical Rejection Sampling Optimization (RSO) generates multiple samples from an initial model, ranks them to create training pairs, and optimizes them under a unified framework encompassing DPO and SLiC~\citep{liu2024statistical}. Inspired by the learning-to-rank literature, Listwise preference optimization (LIPO) extends pair-wise preference learning to list-wise~\citep{liu2024lipo}. Preference Ranking Optimization (PRO) also learns towards list-wise preferences~\citep{song2024preference}. The KTO algorithm takes a different approach from DPO and does not assume that a pair of good-vs-bad outputs for the same input exist, but rather a pool of good outputs and a pool of bad outputs for any inputs exist and optimizes an ``unpaired'' loss~\citep{ethayarajh2024kto}. 

\textbf{Iterative Reward-based Finetuning}:
Reinforced Self-Training (ReST) is one of the first methods to explore iterative self-improving training strategies framed as a two-stage ``Grow'' step that samples from the current policy, and a ``Improve'' step that uses a reward model to filter ever-higher quality samples that are then used to improve the policy with offline RL~\citep{gulcehre2023reinforced}. A follow-up work explores the use of AI feedback rather than reward ranking~\citep{singh2023beyond}.

\textbf{On-policy Contrastive Learning}:
Self-Rewarding Language Models~\citep{yuan2024selfrewarding} is in practice very similar to \DNO. They study the benefits of batched iteratively training on preferences derived from a recent policy's sampled outputs, but in their work, they use the policy itself as the annotator, which starts off being able to provide only weak preference signals. Self-Play Fine-Tuning~\citep{chen2024selfplay} a.k.a \SPIN and Adversarial Preference Optimization a.k.a APO~\citep{cheng2023adversarial} are both iterative LLM training techniques that are compatible with contrastive losses, but they make a very limiting assumption that the teacher is better than the student (without regard to any annotator feedback).

The Cringe Loss~\citep{adolphs2022cringe} is a token-level loss function that contrasts the correct next token with a hard-negative token from the vocabulary that has high logit weight but still incorrect. The Pairwise Cringe Loss~\citep{xu2023some} applies the cringe loss to an iterative self-improving style of training.

\textbf{On-Policy General Preference Optimization:}
\citet{wang2023rlhf} consider finding the von Neumann winner of general preferences via multi-agent RL from the theoretical perspective.
Nash-MD optimizes a policy towards the Nash equilibrium of a generalized preference model using policy gradients, showing that by sampling from a mixture of policies, one can converge to the Nash equilibrium in the last iteration~\citep{munos2023nash}. Self-play Preference Optimization (SPO) is another online two-player mini-max game that converges to a Nash equilibrium with no-regret guarantees~\citep{swamy2024minimaximalist}. However, these techniques are not as data efficient as contrastive losses and are difficult to implement faithfully without cumbersome two-timescale updates~\citep{munos2023nash}. 
A concurrent improvement, IPO-MD, mitigates these difficulties by using purely on-policy IPO updates and is empirically evaluated on an article summarization task \citep{calandriello2024human}.
\citet{guo2024direct} also propose to eliminate rewards in online AI-feedback (OAIF) by using another LLM to annotate which of two online-sampled outputs from the current policy is preferred. However, all the above studies only consider training pairs constructed between self-play ``student vs student'' samples, and between student and initial $\pi_{\text{ref}}$. That is, there is no concept of a more powerful ``teacher'' to compare against in their training pairs. We showed in~\cref{tab:count-of-training-pairs} that omitting these ``student vs teacher'' preferences may hinder performance.

\section{Conclusion}
\label{sec:conclusion}
In this paper we achieve dual goals of post-training LLMs against a more general class of preference models while providing a practical and scalable implementation with finite-sample analysis.
Our strong empirical results are based on the insight that optimizing general preference functions can be reduced to finding the Nash equilibrium of a two-player game with the payoff as the preference, and further solved by a single-play algorithm.
Most techniques to optimize for this objective use soft policy iteration, which is difficult to implement faithfully and may require unstable on-policy and two-timescale updates. Our contribution, \DNOfull, addresses these challenges by approximating soft policy iteration updates with a regression-based contrastive objective in a batched manner, which is a much more stable and forgiving learning objective, and we establish a concentration bound of $\widetilde{O}(\nicefrac{1}{N})$ on the squared total variation error between the learned policy and its target of the soft policy iteration update at any given iteration $t$. Theoretically, \DNO converges to the Nash equilibrium on-average, but in practice enjoys monotonic improvement across iterations. 
Training a 7B parameter LLM with \DNO achieves state-of-the-art performance on \alpaca, exceeding both Mistral Large and older versions of GPT-4. We illuminate many of the practical design choices that will aid future development of iterative self-improving algorithms.

\clearpage
\bibliographystyle{plainnat}
\bibliography{ref}

\begin{thebibliography}{84}
\providecommand{\natexlab}[1]{#1}
\providecommand{\url}[1]{\texttt{#1}}
\expandafter\ifx\csname urlstyle\endcsname\relax
  \providecommand{\doi}[1]{doi: #1}\else
  \providecommand{\doi}{doi: \begingroup \urlstyle{rm}\Url}\fi

\bibitem[Adolphs et~al.(2022)Adolphs, Gao, Xu, Shuster, Sukhbaatar, and Weston]{adolphs2022cringe}
Leonard Adolphs, Tianyu Gao, Jing Xu, Kurt Shuster, Sainbayar Sukhbaatar, and Jason Weston.
\newblock The cringe loss: Learning what language not to model.
\newblock \emph{arXiv preprint arXiv:2211.05826}, 2022.

\bibitem[Akrour et~al.(2012)Akrour, Schoenauer, and Sebag]{akrour2012april}
Riad Akrour, Marc Schoenauer, and Mich{\`e}le Sebag.
\newblock April: Active preference learning-based reinforcement learning.
\newblock In \emph{Machine Learning and Knowledge Discovery in Databases: European Conference, ECML PKDD 2012, Bristol, UK, September 24-28, 2012. Proceedings, Part II 23}, pages 116--131. Springer, 2012.

\bibitem[Amodei et~al.(2016)Amodei, Olah, Steinhardt, Christiano, Schulman, and Man{\'e}]{amodei2016concrete}
Dario Amodei, Chris Olah, Jacob Steinhardt, Paul Christiano, John Schulman, and Dan Man{\'e}.
\newblock Concrete problems in ai safety.
\newblock \emph{arXiv preprint arXiv:1606.06565}, 2016.

\bibitem[Azar et~al.(2023)Azar, Rowland, Piot, Guo, Calandriello, Valko, and Munos]{azar2023general}
Mohammad~Gheshlaghi Azar, Mark Rowland, Bilal Piot, Daniel Guo, Daniele Calandriello, Michal Valko, and R{\'e}mi Munos.
\newblock A general theoretical paradigm to understand learning from human preferences.
\newblock \emph{arXiv preprint arXiv:2310.12036}, 2023.

\bibitem[Bai et~al.(2022{\natexlab{a}})Bai, Jones, Ndousse, Askell, Chen, DasSarma, Drain, Fort, Ganguli, Henighan, Joseph, Kadavath, Kernion, Conerly, El-Showk, Elhage, Hatfield-Dodds, Hernandez, Hume, Johnston, Kravec, Lovitt, Nanda, Olsson, Amodei, Brown, Clark, McCandlish, Olah, Mann, and Kaplan]{bai2022training}
Yuntao Bai, Andy Jones, Kamal Ndousse, Amanda Askell, Anna Chen, Nova DasSarma, Dawn Drain, Stanislav Fort, Deep Ganguli, Tom Henighan, Nicholas Joseph, Saurav Kadavath, Jackson Kernion, Tom Conerly, Sheer El-Showk, Nelson Elhage, Zac Hatfield-Dodds, Danny Hernandez, Tristan Hume, Scott Johnston, Shauna Kravec, Liane Lovitt, Neel Nanda, Catherine Olsson, Dario Amodei, Tom Brown, Jack Clark, Sam McCandlish, Chris Olah, Ben Mann, and Jared Kaplan.
\newblock Training a helpful and harmless assistant with reinforcement learning from human feedback.
\newblock \emph{arXiv preprint arXiv:2204.05862}, 2022{\natexlab{a}}.

\bibitem[Bai et~al.(2022{\natexlab{b}})Bai, Kadavath, Kundu, Askell, Kernion, Jones, Chen, Goldie, Mirhoseini, McKinnon, Chen, Olsson, Olah, Hernandez, Drain, Ganguli, Li, Tran-Johnson, Perez, Kerr, Mueller, Ladish, Landau, Ndousse, Lukosuite, Lovitt, Sellitto, Elhage, Schiefer, Mercado, DasSarma, Lasenby, Larson, Ringer, Johnston, Kravec, Showk, Fort, Lanham, Telleen-Lawton, Conerly, Henighan, Hume, Bowman, Hatfield-Dodds, Mann, Amodei, Joseph, McCandlish, Brown, and Kaplan]{bai2022constitutional}
Yuntao Bai, Saurav Kadavath, Sandipan Kundu, Amanda Askell, Jackson Kernion, Andy Jones, Anna Chen, Anna Goldie, Azalia Mirhoseini, Cameron McKinnon, Carol Chen, Catherine Olsson, Christopher Olah, Danny Hernandez, Dawn Drain, Deep Ganguli, Dustin Li, Eli Tran-Johnson, Ethan Perez, Jamie Kerr, Jared Mueller, Jeffrey Ladish, Joshua Landau, Kamal Ndousse, Kamile Lukosuite, Liane Lovitt, Michael Sellitto, Nelson Elhage, Nicholas Schiefer, Noemi Mercado, Nova DasSarma, Robert Lasenby, Robin Larson, Sam Ringer, Scott Johnston, Shauna Kravec, Sheer~El Showk, Stanislav Fort, Tamera Lanham, Timothy Telleen-Lawton, Tom Conerly, Tom Henighan, Tristan Hume, Samuel~R. Bowman, Zac Hatfield-Dodds, Ben Mann, Dario Amodei, Nicholas Joseph, Sam McCandlish, Tom Brown, and Jared Kaplan.
\newblock Constitutional ai: Harmlessness from ai feedback.
\newblock \emph{arXiv preprint arXiv:2212.08073}, 2022{\natexlab{b}}.

\bibitem[Beeching et~al.(2023)Beeching, Fourrier, Habib, Han, Lambert, Rajani, Sanseviero, Tunstall, and Wolf]{open-llm-leaderboard}
Edward Beeching, Clémentine Fourrier, Nathan Habib, Sheon Han, Nathan Lambert, Nazneen Rajani, Omar Sanseviero, Lewis Tunstall, and Thomas Wolf.
\newblock Open llm leaderboard.
\newblock \url{https://huggingface.co/spaces/HuggingFaceH4/open_llm_leaderboard}, 2023.

\bibitem[Bertrand et~al.(2023)Bertrand, Czarnecki, and Gidel]{bertrand2023limitations}
Quentin Bertrand, Wojciech~Marian Czarnecki, and Gauthier Gidel.
\newblock On the limitations of the elo, real-world games are transitive, not additive.
\newblock In \emph{International Conference on Artificial Intelligence and Statistics}, pages 2905--2921. PMLR, 2023.

\bibitem[Bradley and Terry(1952)]{bradley1952rank}
Ralph~Allan Bradley and Milton~E. Terry.
\newblock Rank analysis of incomplete block designs: I. the method of paired comparisons.
\newblock \emph{Biometrika}, 39\penalty0 (3/4):\penalty0 324--345, 1952.

\bibitem[Brown et~al.(2020)Brown, Mann, Ryder, Subbiah, Kaplan, Dhariwal, Neelakantan, Shyam, Sastry, Askell, Agarwal, Herbert-Voss, Krueger, Henighan, Child, Ramesh, Ziegler, Wu, Winter, Hesse, Chen, Sigler, Litwin, Gray, Chess, Clark, Berner, McCandlish, Radford, Sutskever, and Amodei]{brown2020language}
Tom~B. Brown, Benjamin Mann, Nick Ryder, Melanie Subbiah, Jared Kaplan, Prafulla Dhariwal, Arvind Neelakantan, Pranav Shyam, Girish Sastry, Amanda Askell, Sandhini Agarwal, Ariel Herbert-Voss, Gretchen Krueger, Tom Henighan, Rewon Child, Aditya Ramesh, Daniel~M. Ziegler, Jeffrey Wu, Clemens Winter, Christopher Hesse, Mark Chen, Eric Sigler, Mateusz Litwin, Scott Gray, Benjamin Chess, Jack Clark, Christopher Berner, Sam McCandlish, Alec Radford, Ilya Sutskever, and Dario Amodei.
\newblock Language models are few-shot learners.
\newblock \emph{Advances in neural information processing systems}, 33:\penalty0 1877--1901, 2020.

\bibitem[Bubeck(2015)]{bubeck2015convex}
S{\'e}bastien Bubeck.
\newblock Convex optimization: Algorithms and complexity.
\newblock \emph{Foundations and Trends{\textregistered} in Machine Learning}, 8\penalty0 (3-4):\penalty0 231--357, 2015.

\bibitem[Calandriello et~al.(2024)Calandriello, Guo, Munos, Rowland, Tang, Pires, Richemond, Lan, Valko, Liu, et~al.]{calandriello2024human}
Daniele Calandriello, Daniel Guo, Remi Munos, Mark Rowland, Yunhao Tang, Bernardo~Avila Pires, Pierre~Harvey Richemond, Charline~Le Lan, Michal Valko, Tianqi Liu, et~al.
\newblock Human alignment of large language models through online preference optimisation.
\newblock \emph{arXiv preprint arXiv:2403.08635}, 2024.

\bibitem[Cesa-Bianchi and Lugosi(2006)]{cesa2006prediction}
Nicolo Cesa-Bianchi and G{\'a}bor Lugosi.
\newblock \emph{Prediction, learning, and games}.
\newblock Cambridge university press, 2006.

\bibitem[Chen and Jiang(2019)]{chen2019information}
Jinglin Chen and Nan Jiang.
\newblock Information-theoretic considerations in batch reinforcement learning.
\newblock In \emph{International Conference on Machine Learning}, pages 1042--1051. PMLR, 2019.

\bibitem[Chen et~al.(2024)Chen, Deng, Yuan, Ji, and Gu]{chen2024selfplay}
Zixiang Chen, Yihe Deng, Huizhuo Yuan, Kaixuan Ji, and Quanquan Gu.
\newblock Self-play fine-tuning converts weak language models to strong language models.
\newblock \emph{arXiv preprint arXiv:2401.01335}, 2024.

\bibitem[Cheng et~al.(2023)Cheng, Yang, Li, Dai, and Du]{cheng2023adversarial}
Pengyu Cheng, Yifan Yang, Jian Li, Yong Dai, and Nan Du.
\newblock Adversarial preference optimization.
\newblock \emph{arXiv preprint arXiv:2311.08045}, 2023.

\bibitem[Christiano et~al.(2017)Christiano, Leike, Brown, Martic, Legg, and Amodei]{christiano2017deep}
Paul~F. Christiano, Jan Leike, Tom Brown, Miljan Martic, Shane Legg, and Dario Amodei.
\newblock Deep reinforcement learning from human preferences.
\newblock \emph{Advances in neural information processing systems}, 30, 2017.

\bibitem[Clark et~al.(2018)Clark, Cowhey, Etzioni, Khot, Sabharwal, Schoenick, and Tafjord]{allenai:arc}
Peter Clark, Isaac Cowhey, Oren Etzioni, Tushar Khot, Ashish Sabharwal, Carissa Schoenick, and Oyvind Tafjord.
\newblock Think you have solved question answering? try arc, the ai2 reasoning challenge.
\newblock \emph{arXiv:1803.05457v1}, 2018.

\bibitem[Cui et~al.(2023)Cui, Yuan, Ding, Yao, Zhu, Ni, Xie, Liu, and Sun]{cui2023ultrafeedback}
Ganqu Cui, Lifan Yuan, Ning Ding, Guanming Yao, Wei Zhu, Yuan Ni, Guotong Xie, Zhiyuan Liu, and Maosong Sun.
\newblock Ultrafeedback: Boosting language models with high-quality feedback.
\newblock \emph{arXiv preprint arXiv:2310.01377}, 2023.

\bibitem[Ding et~al.(2023)Ding, Chen, Xu, Qin, Zheng, Hu, Liu, Sun, and Zhou]{ding2023enhancing}
Ning Ding, Yulin Chen, Bokai Xu, Yujia Qin, Zhi Zheng, Shengding Hu, Zhiyuan Liu, Maosong Sun, and Bowen Zhou.
\newblock Enhancing chat language models by scaling high-quality instructional conversations.
\newblock \emph{arXiv preprint arXiv:2305.14233}, 2023.

\bibitem[Dong et~al.(2023)Dong, Xiong, Goyal, Zhang, Chow, Pan, Diao, Zhang, KaShun, and Zhang]{dong2023raft}
Hanze Dong, Wei Xiong, Deepanshu Goyal, Yihan Zhang, Winnie Chow, Rui Pan, Shizhe Diao, Jipeng Zhang, SHUM KaShun, and Tong Zhang.
\newblock Raft: Reward ranked finetuning for generative foundation model alignment.
\newblock \emph{Transactions on Machine Learning Research}, 2023.

\bibitem[Dubois et~al.(2023)Dubois, Li, Taori, Zhang, Gulrajani, Ba, Guestrin, Liang, and Hashimoto]{dubois2023alpacafarm}
Yann Dubois, Chen~Xuechen Li, Rohan Taori, Tianyi Zhang, Ishaan Gulrajani, Jimmy Ba, Carlos Guestrin, Percy~S. Liang, and Tatsunori~B. Hashimoto.
\newblock Alpacafarm: A simulation framework for methods that learn from human feedback.
\newblock \emph{Advances in Neural Information Processing Systems}, 36, 2023.

\bibitem[Dud{\'\i}k et~al.(2015)Dud{\'\i}k, Hofmann, Schapire, Slivkins, and Zoghi]{dudik2015contextual}
Miroslav Dud{\'\i}k, Katja Hofmann, Robert~E. Schapire, Aleksandrs Slivkins, and Masrour Zoghi.
\newblock Contextual dueling bandits.
\newblock In \emph{Conference on Learning Theory}, pages 563--587. PMLR, 2015.

\bibitem[Elo(1978)]{elo1978rating}
Arpad~E. Elo.
\newblock \emph{The rating of chessplayers, past and present}.
\newblock Arco Pub., New York, 1978.
\newblock ISBN 0668047216 9780668047210.
\newblock URL \url{http://www.amazon.com/Rating-Chess-Players-Past-Present/dp/0668047216}.

\bibitem[Ethayarajh et~al.(2024)Ethayarajh, Xu, Muennighoff, Jurafsky, and Kiela]{ethayarajh2024kto}
Kawin Ethayarajh, Winnie Xu, Niklas Muennighoff, Dan Jurafsky, and Douwe Kiela.
\newblock Kto: Model alignment as prospect theoretic optimization.
\newblock \emph{arXiv preprint arXiv:2402.01306}, 2024.

\bibitem[Finn et~al.(2016{\natexlab{a}})Finn, Christiano, Abbeel, and Levine]{finn2016connection}
Chelsea Finn, Paul Christiano, Pieter Abbeel, and Sergey Levine.
\newblock A connection between generative adversarial networks, inverse reinforcement learning, and energy-based models.
\newblock \emph{arXiv preprint arXiv:1611.03852}, 2016{\natexlab{a}}.

\bibitem[Finn et~al.(2016{\natexlab{b}})Finn, Levine, and Abbeel]{finn2016guided}
Chelsea Finn, Sergey Levine, and Pieter Abbeel.
\newblock Guided cost learning: Deep inverse optimal control via policy optimization.
\newblock In \emph{International conference on machine learning}, pages 49--58. PMLR, 2016{\natexlab{b}}.

\bibitem[Fishburn(1984)]{fishburn1984probabilistic}
Peter~C. Fishburn.
\newblock Probabilistic social choice based on simple voting comparisons.
\newblock \emph{The Review of Economic Studies}, 51\penalty0 (4):\penalty0 683--692, 1984.

\bibitem[Foster and Krishnamurthy(2021)]{foster2021efficient}
Dylan~J. Foster and Akshay Krishnamurthy.
\newblock Efficient first-order contextual bandits: Prediction, allocation, and triangular discrimination.
\newblock \emph{Advances in Neural Information Processing Systems}, 34:\penalty0 18907--18919, 2021.

\bibitem[Freund and Schapire(1997)]{freund1997decision}
Yoav Freund and Robert~E. Schapire.
\newblock A decision-theoretic generalization of on-line learning and an application to boosting.
\newblock \emph{Journal of computer and system sciences}, 55\penalty0 (1):\penalty0 119--139, 1997.

\bibitem[Griffith et~al.(2013)Griffith, Subramanian, Scholz, Isbell, and Thomaz]{griffith2013policy}
Shane Griffith, Kaushik Subramanian, Jonathan Scholz, Charles~L. Isbell, and Andrea~L. Thomaz.
\newblock Policy shaping: Integrating human feedback with reinforcement learning.
\newblock \emph{Advances in neural information processing systems}, 26, 2013.

\bibitem[Gulcehre et~al.(2023)Gulcehre, Paine, Srinivasan, Konyushkova, Weerts, Sharma, Siddhant, Ahern, Wang, Gu, Macherey, Doucet, Firat, and de~Freitas]{gulcehre2023reinforced}
Caglar Gulcehre, Tom~Le Paine, Srivatsan Srinivasan, Ksenia Konyushkova, Lotte Weerts, Abhishek Sharma, Aditya Siddhant, Alex Ahern, Miaosen Wang, Chenjie Gu, Wolfgang Macherey, Arnaud Doucet, Orhan Firat, and Nando de~Freitas.
\newblock Reinforced self-training (rest) for language modeling.
\newblock \emph{arXiv preprint arXiv:2308.08998}, 2023.

\bibitem[Guo et~al.(2024)Guo, Zhang, Liu, Liu, Khalman, Llinares, Rame, Mesnard, Zhao, Piot, Ferret, and Blondel]{guo2024direct}
Shangmin Guo, Biao Zhang, Tianlin Liu, Tianqi Liu, Misha Khalman, Felipe Llinares, Alexandre Rame, Thomas Mesnard, Yao Zhao, Bilal Piot, Johan Ferret, and Mathieu Blondel.
\newblock Direct language model alignment from online ai feedback.
\newblock \emph{arXiv preprint arXiv:2402.04792}, 2024.

\bibitem[Haarnoja et~al.(2018)Haarnoja, Zhou, Abbeel, and Levine]{haarnoja2018soft}
Tuomas Haarnoja, Aurick Zhou, Pieter Abbeel, and Sergey Levine.
\newblock Soft actor-critic: Off-policy maximum entropy deep reinforcement learning with a stochastic actor.
\newblock In \emph{International conference on machine learning}, pages 1861--1870. PMLR, 2018.

\bibitem[Hazan et~al.(2016)]{hazan2016introduction}
Elad Hazan et~al.
\newblock Introduction to online convex optimization.
\newblock \emph{Foundations and Trends{\textregistered} in Optimization}, 2\penalty0 (3-4):\penalty0 157--325, 2016.

\bibitem[Kakade and Langford(2002)]{kakade2002approximately}
Sham Kakade and John Langford.
\newblock Approximately optimal approximate reinforcement learning.
\newblock In \emph{Proceedings of the Nineteenth International Conference on Machine Learning}, pages 267--274, 2002.

\bibitem[Kakade(2001)]{kakade2001natural}
Sham~M. Kakade.
\newblock A natural policy gradient.
\newblock \emph{Advances in neural information processing systems}, 14, 2001.

\bibitem[Kalai and Vempala(2005)]{kalai2005efficient}
Adam Kalai and Santosh Vempala.
\newblock Efficient algorithms for online decision problems.
\newblock \emph{Journal of Computer and System Sciences}, 71\penalty0 (3):\penalty0 291--307, 2005.

\bibitem[Knox and Stone(2008)]{knox2008tamer}
W~Bradley Knox and Peter Stone.
\newblock Tamer: Training an agent manually via evaluative reinforcement.
\newblock In \emph{2008 7th IEEE international conference on development and learning}, pages 292--297. IEEE, 2008.

\bibitem[Kramer(1973)]{kramer1973class}
Gerald~H. Kramer.
\newblock On a class of equilibrium conditions for majority rule.
\newblock \emph{Econometrica: Journal of the Econometric Society}, pages 285--297, 1973.

\bibitem[Kreweras(1965)]{kreweras1965aggregation}
Germain Kreweras.
\newblock Aggregation of preference orderings.
\newblock In \emph{Mathematics and Social Sciences I: Proceedings of the seminars of Menthon-Saint-Bernard, France (1--27 July 1960) and of G{\"o}sing, Austria (3--27 July 1962)}, pages 73--79, 1965.

\bibitem[Lattimore and Szepesv{\'a}ri(2020)]{lattimore2020bandit}
Tor Lattimore and Csaba Szepesv{\'a}ri.
\newblock \emph{Bandit algorithms}.
\newblock Cambridge University Press, 2020.

\bibitem[Lee et~al.(2023)Lee, Phatale, Mansoor, Lu, Mesnard, Bishop, Carbune, and Rastogi]{lee2023rlaif}
Harrison Lee, Samrat Phatale, Hassan Mansoor, Kellie Lu, Thomas Mesnard, Colton Bishop, Victor Carbune, and Abhinav Rastogi.
\newblock Rlaif: Scaling reinforcement learning from human feedback with ai feedback.
\newblock \emph{arXiv preprint arXiv:2309.00267}, 2023.

\bibitem[Liu et~al.(2024{\natexlab{a}})Liu, Qin, Wu, Shen, Khalman, Joshi, Zhao, Saleh, Baumgartner, Liu, Liu, and Wang]{liu2024lipo}
Tianqi Liu, Zhen Qin, Junru Wu, Jiaming Shen, Misha Khalman, Rishabh Joshi, Yao Zhao, Mohammad Saleh, Simon Baumgartner, Jialu Liu, Peter~J. Liu, and Xuanhui Wang.
\newblock Lipo: Listwise preference optimization through learning-to-rank.
\newblock \emph{arXiv preprint arXiv:2402.01878}, 2024{\natexlab{a}}.

\bibitem[Liu et~al.(2024{\natexlab{b}})Liu, Zhao, Joshi, Khalman, Saleh, Liu, and Liu]{liu2024statistical}
Tianqi Liu, Yao Zhao, Rishabh Joshi, Misha Khalman, Mohammad Saleh, Peter~J. Liu, and Jialu Liu.
\newblock Statistical rejection sampling improves preference optimization.
\newblock In \emph{The Twelfth International Conference on Learning Representations}, 2024{\natexlab{b}}.

\bibitem[Mitra et~al.(2023)Mitra, Corro, Mahajan, Codas, Simoes, Agarwal, Chen, Razdaibiedina, Jones, Aggarwal, Palangi, Zheng, Rosset, Khanpour, and Awadallah]{mitra2023orca}
Arindam Mitra, Luciano~Del Corro, Shweti Mahajan, Andres Codas, Clarisse Simoes, Sahaj Agarwal, Xuxi Chen, Anastasia Razdaibiedina, Erik Jones, Kriti Aggarwal, Hamid Palangi, Guoqing Zheng, Corby Rosset, Hamed Khanpour, and Ahmed Awadallah.
\newblock Orca 2: Teaching small language models how to reason.
\newblock \emph{arXiv preprint arXiv:2311.11045}, 2023.

\bibitem[Mitra et~al.(2024)Mitra, Khanpour, Rosset, and Awadallah]{mitra2024orcamath}
Arindam Mitra, Hamed Khanpour, Corby Rosset, and Ahmed Awadallah.
\newblock Orca-math: Unlocking the potential of slms in grade school math.
\newblock \emph{arXiv preprint arXiv:2402.14830}, 2024.

\bibitem[Mnih et~al.(2016)Mnih, Badia, Mirza, Graves, Lillicrap, Harley, Silver, and Kavukcuoglu]{mnih2016asynchronous}
Volodymyr Mnih, Adria~Puigdomenech Badia, Mehdi Mirza, Alex Graves, Timothy Lillicrap, Tim Harley, David Silver, and Koray Kavukcuoglu.
\newblock Asynchronous methods for deep reinforcement learning.
\newblock In \emph{International conference on machine learning}, pages 1928--1937. PMLR, 2016.

\bibitem[Munos et~al.(2023)Munos, Valko, Calandriello, Azar, Rowland, Guo, Tang, Geist, Mesnard, Michi, Selvi, Girgin, Momchev, Bachem, Mankowitz, Precup, and Piot]{munos2023nash}
R{\'e}mi Munos, Michal Valko, Daniele Calandriello, Mohammad~Gheshlaghi Azar, Mark Rowland, Zhaohan~Daniel Guo, Yunhao Tang, Matthieu Geist, Thomas Mesnard, Andrea Michi, Marco Selvi, Sertan Girgin, Nikola Momchev, Olivier Bachem, Daniel~J. Mankowitz, Doina Precup, and Bilal Piot.
\newblock Nash learning from human feedback.
\newblock \emph{arXiv preprint arXiv:2312.00886}, 2023.

\bibitem[Nemirovskij and Yudin(1983)]{nemirovskij1983problem}
Arkadij~Semenovi{\v{c}} Nemirovskij and David~Borisovich Yudin.
\newblock \emph{Problem complexity and method efficiency in optimization}.
\newblock Wiley-Interscience, 1983.

\bibitem[OpenAI et~al.(2023)OpenAI, Achiam, Adler, Agarwal, Ahmad, Akkaya, Aleman, Almeida, Altenschmidt, Altman, Anadkat, Avila, Babuschkin, Balaji, Balcom, Baltescu, Bao, Bavarian, Belgum, Bello, Berdine, Bernadett-Shapiro, Berner, Bogdonoff, Boiko, Boyd, Brakman, Brockman, Brooks, Brundage, Button, Cai, Campbell, Cann, Carey, Carlson, Carmichael, Chan, Chang, Chantzis, Chen, Chen, Chen, Chen, Chen, Chess, Cho, Chu, Chung, Cummings, Currier, Dai, Decareaux, Degry, Deutsch, Deville, Dhar, Dohan, Dowling, Dunning, Ecoffet, Eleti, Eloundou, Farhi, Fedus, Felix, Fishman, Forte, Fulford, Gao, Georges, Gibson, Goel, Gogineni, Goh, Gontijo-Lopes, Gordon, Grafstein, Gray, Greene, Gross, Gu, Guo, Hallacy, Han, Harris, He, Heaton, Heidecke, Hesse, Hickey, Hickey, Hoeschele, Houghton, Hsu, Hu, Hu, Huizinga, Jain, Jain, Jang, Jiang, Jiang, Jin, Jin, Jomoto, Jonn, Jun, Kaftan, Łukasz Kaiser, Kamali, Kanitscheider, Keskar, Khan, Kilpatrick, Kim, Kim, Kim, Kirchner, Kiros, Knight, Kokotajlo, Łukasz Kondraciuk, Kondrich,
  Konstantinidis, Kosic, Krueger, Kuo, Lampe, Lan, Lee, Leike, Leung, Levy, Li, Lim, Lin, Lin, Litwin, Lopez, Lowe, Lue, Makanju, Malfacini, Manning, Markov, Markovski, Martin, Mayer, Mayne, McGrew, McKinney, McLeavey, McMillan, McNeil, Medina, Mehta, Menick, Metz, Mishchenko, Mishkin, Monaco, Morikawa, Mossing, Mu, Murati, Murk, Mély, Nair, Nakano, Nayak, Neelakantan, Ngo, Noh, Ouyang, O'Keefe, Pachocki, Paino, Palermo, Pantuliano, Parascandolo, Parish, Parparita, Passos, Pavlov, Peng, Perelman, de~Avila Belbute~Peres, Petrov, de~Oliveira~Pinto, Michael, Pokorny, Pokrass, Pong, Powell, Power, Power, Proehl, Puri, Radford, Rae, Ramesh, Raymond, Real, Rimbach, Ross, Rotsted, Roussez, Ryder, Saltarelli, Sanders, Santurkar, Sastry, Schmidt, Schnurr, Schulman, Selsam, Sheppard, Sherbakov, Shieh, Shoker, Shyam, Sidor, Sigler, Simens, Sitkin, Slama, Sohl, Sokolowsky, Song, Staudacher, Such, Summers, Sutskever, Tang, Tezak, Thompson, Tillet, Tootoonchian, Tseng, Tuggle, Turley, Tworek, Uribe, Vallone, Vijayvergiya,
  Voss, Wainwright, Wang, Wang, Wang, Ward, Wei, Weinmann, Welihinda, Welinder, Weng, Weng, Wiethoff, Willner, Winter, Wolrich, Wong, Workman, Wu, Wu, Wu, Xiao, Xu, Yoo, Yu, Yuan, Zaremba, Zellers, Zhang, Zhang, Zhao, Zheng, Zhuang, Zhuk, and Zoph]{achiam2023gpt}
OpenAI, Josh Achiam, Steven Adler, Sandhini Agarwal, Lama Ahmad, Ilge Akkaya, Florencia~Leoni Aleman, Diogo Almeida, Janko Altenschmidt, Sam Altman, Shyamal Anadkat, Red Avila, Igor Babuschkin, Suchir Balaji, Valerie Balcom, Paul Baltescu, Haiming Bao, Mohammad Bavarian, Jeff Belgum, Irwan Bello, Jake Berdine, Gabriel Bernadett-Shapiro, Christopher Berner, Lenny Bogdonoff, Oleg Boiko, Madelaine Boyd, Anna-Luisa Brakman, Greg Brockman, Tim Brooks, Miles Brundage, Kevin Button, Trevor Cai, Rosie Campbell, Andrew Cann, Brittany Carey, Chelsea Carlson, Rory Carmichael, Brooke Chan, Che Chang, Fotis Chantzis, Derek Chen, Sully Chen, Ruby Chen, Jason Chen, Mark Chen, Ben Chess, Chester Cho, Casey Chu, Hyung~Won Chung, Dave Cummings, Jeremiah Currier, Yunxing Dai, Cory Decareaux, Thomas Degry, Noah Deutsch, Damien Deville, Arka Dhar, David Dohan, Steve Dowling, Sheila Dunning, Adrien Ecoffet, Atty Eleti, Tyna Eloundou, David Farhi, Liam Fedus, Niko Felix, Simón~Posada Fishman, Juston Forte, Isabella Fulford, Leo
  Gao, Elie Georges, Christian Gibson, Vik Goel, Tarun Gogineni, Gabriel Goh, Rapha Gontijo-Lopes, Jonathan Gordon, Morgan Grafstein, Scott Gray, Ryan Greene, Joshua Gross, Shixiang~Shane Gu, Yufei Guo, Chris Hallacy, Jesse Han, Jeff Harris, Yuchen He, Mike Heaton, Johannes Heidecke, Chris Hesse, Alan Hickey, Wade Hickey, Peter Hoeschele, Brandon Houghton, Kenny Hsu, Shengli Hu, Xin Hu, Joost Huizinga, Shantanu Jain, Shawn Jain, Joanne Jang, Angela Jiang, Roger Jiang, Haozhun Jin, Denny Jin, Shino Jomoto, Billie Jonn, Heewoo Jun, Tomer Kaftan, Łukasz Kaiser, Ali Kamali, Ingmar Kanitscheider, Nitish~Shirish Keskar, Tabarak Khan, Logan Kilpatrick, Jong~Wook Kim, Christina Kim, Yongjik Kim, Jan~Hendrik Kirchner, Jamie Kiros, Matt Knight, Daniel Kokotajlo, Łukasz Kondraciuk, Andrew Kondrich, Aris Konstantinidis, Kyle Kosic, Gretchen Krueger, Vishal Kuo, Michael Lampe, Ikai Lan, Teddy Lee, Jan Leike, Jade Leung, Daniel Levy, Chak~Ming Li, Rachel Lim, Molly Lin, Stephanie Lin, Mateusz Litwin, Theresa Lopez, Ryan
  Lowe, Patricia Lue, Anna Makanju, Kim Malfacini, Sam Manning, Todor Markov, Yaniv Markovski, Bianca Martin, Katie Mayer, Andrew Mayne, Bob McGrew, Scott~Mayer McKinney, Christine McLeavey, Paul McMillan, Jake McNeil, David Medina, Aalok Mehta, Jacob Menick, Luke Metz, Andrey Mishchenko, Pamela Mishkin, Vinnie Monaco, Evan Morikawa, Daniel Mossing, Tong Mu, Mira Murati, Oleg Murk, David Mély, Ashvin Nair, Reiichiro Nakano, Rajeev Nayak, Arvind Neelakantan, Richard Ngo, Hyeonwoo Noh, Long Ouyang, Cullen O'Keefe, Jakub Pachocki, Alex Paino, Joe Palermo, Ashley Pantuliano, Giambattista Parascandolo, Joel Parish, Emy Parparita, Alex Passos, Mikhail Pavlov, Andrew Peng, Adam Perelman, Filipe de~Avila Belbute~Peres, Michael Petrov, Henrique~Ponde de~Oliveira~Pinto, Michael, Pokorny, Michelle Pokrass, Vitchyr~H. Pong, Tolly Powell, Alethea Power, Boris Power, Elizabeth Proehl, Raul Puri, Alec Radford, Jack Rae, Aditya Ramesh, Cameron Raymond, Francis Real, Kendra Rimbach, Carl Ross, Bob Rotsted, Henri Roussez,
  Nick Ryder, Mario Saltarelli, Ted Sanders, Shibani Santurkar, Girish Sastry, Heather Schmidt, David Schnurr, John Schulman, Daniel Selsam, Kyla Sheppard, Toki Sherbakov, Jessica Shieh, Sarah Shoker, Pranav Shyam, Szymon Sidor, Eric Sigler, Maddie Simens, Jordan Sitkin, Katarina Slama, Ian Sohl, Benjamin Sokolowsky, Yang Song, Natalie Staudacher, Felipe~Petroski Such, Natalie Summers, Ilya Sutskever, Jie Tang, Nikolas Tezak, Madeleine~B. Thompson, Phil Tillet, Amin Tootoonchian, Elizabeth Tseng, Preston Tuggle, Nick Turley, Jerry Tworek, Juan Felipe~Cerón Uribe, Andrea Vallone, Arun Vijayvergiya, Chelsea Voss, Carroll Wainwright, Justin~Jay Wang, Alvin Wang, Ben Wang, Jonathan Ward, Jason Wei, CJ~Weinmann, Akila Welihinda, Peter Welinder, Jiayi Weng, Lilian Weng, Matt Wiethoff, Dave Willner, Clemens Winter, Samuel Wolrich, Hannah Wong, Lauren Workman, Sherwin Wu, Jeff Wu, Michael Wu, Kai Xiao, Tao Xu, Sarah Yoo, Kevin Yu, Qiming Yuan, Wojciech Zaremba, Rowan Zellers, Chong Zhang, Marvin Zhang, Shengjia
  Zhao, Tianhao Zheng, Juntang Zhuang, William Zhuk, and Barret Zoph.
\newblock Gpt-4 technical report.
\newblock \emph{arXiv preprint arXiv:2303.08774}, 2023.

\bibitem[Ouyang et~al.(2022)Ouyang, Wu, Jiang, Almeida, Wainwright, Mishkin, Zhang, Agarwal, Slama, Ray, Schulman, Hilton, Kelton, Miller, Simens, Askell, Welinder, Christiano, Leike, and Lowe]{ouyang2022training}
Long Ouyang, Jeffrey Wu, Xu~Jiang, Diogo Almeida, Carroll Wainwright, Pamela Mishkin, Chong Zhang, Sandhini Agarwal, Katarina Slama, Alex Ray, John Schulman, Jacob Hilton, Fraser Kelton, Luke Miller, Maddie Simens, Amanda Askell, Peter Welinder, Paul~F. Christiano, Jan Leike, and Ryan Lowe.
\newblock Training language models to follow instructions with human feedback.
\newblock \emph{Advances in neural information processing systems}, 35:\penalty0 27730--27744, 2022.

\bibitem[Owen(2013)]{wwen2013mcbook}
Art~B. Owen.
\newblock \emph{Monte Carlo theory, methods and examples}.
\newblock \url{https://artowen.su.domains/mc/}, 2013.

\bibitem[Rafailov et~al.(2023)Rafailov, Sharma, Mitchell, Ermon, Manning, and Finn]{rafailov2023direct}
Rafael Rafailov, Archit Sharma, Eric Mitchell, Stefano Ermon, Christopher~D. Manning, and Chelsea Finn.
\newblock Direct preference optimization: Your language model is secretly a reward model.
\newblock \emph{Advances in Neural Information Processing Systems}, 36, 2023.

\bibitem[Ross et~al.(2011)Ross, Gordon, and Bagnell]{ross2011reduction}
St{\'e}phane Ross, Geoffrey Gordon, and Drew Bagnell.
\newblock A reduction of imitation learning and structured prediction to no-regret online learning.
\newblock In \emph{Proceedings of the fourteenth international conference on artificial intelligence and statistics}, pages 627--635. JMLR Workshop and Conference Proceedings, 2011.

\bibitem[Rosset et~al.(2023)Rosset, Zheng, Dibia, Awadallah, and Bennett]{rosset2023axiomatic}
Corby Rosset, Guoqing Zheng, Victor Dibia, Ahmed Awadallah, and Paul Bennett.
\newblock Axiomatic preference modeling for longform question answering.
\newblock In \emph{Proceedings of the 2023 Conference on Empirical Methods in Natural Language Processing}, pages 11445--11475, 2023.

\bibitem[Schulman et~al.(2015)Schulman, Levine, Abbeel, Jordan, and Moritz]{schulman2015trust}
John Schulman, Sergey Levine, Pieter Abbeel, Michael Jordan, and Philipp Moritz.
\newblock Trust region policy optimization.
\newblock In \emph{International conference on machine learning}, pages 1889--1897. PMLR, 2015.

\bibitem[Schulman et~al.(2017)Schulman, Wolski, Dhariwal, Radford, and Klimov]{schulman2017proximal}
John Schulman, Filip Wolski, Prafulla Dhariwal, Alec Radford, and Oleg Klimov.
\newblock Proximal policy optimization algorithms.
\newblock \emph{arXiv preprint arXiv:1707.06347}, 2017.

\bibitem[Shalev-Shwartz et~al.(2012)]{shalev2012online}
Shai Shalev-Shwartz et~al.
\newblock Online learning and online convex optimization.
\newblock \emph{Foundations and Trends{\textregistered} in Machine Learning}, 4\penalty0 (2):\penalty0 107--194, 2012.

\bibitem[Simpson(1969)]{simpson1969defining}
Paul~B. Simpson.
\newblock On defining areas of voter choice: Professor tullock on stable voting.
\newblock \emph{The Quarterly Journal of Economics}, 83\penalty0 (3):\penalty0 478--490, 1969.

\bibitem[Singh et~al.(2023)Singh, Co-Reyes, Agarwal, Anand, Patil, Garcia, Liu, Harrison, Lee, Xu, Parisi, Kumar, Alemi, Rizkowsky, Nova, Adlam, Bohnet, Elsayed, Sedghi, Mordatch, Simpson, Gur, Snoek, Pennington, Hron, Kenealy, Swersky, Mahajan, Culp, Xiao, Bileschi, Constant, Novak, Liu, Warkentin, Qian, Bansal, Dyer, Neyshabur, Sohl-Dickstein, and Fiedel]{singh2023beyond}
Avi Singh, John~D. Co-Reyes, Rishabh Agarwal, Ankesh Anand, Piyush Patil, Xavier Garcia, Peter~J. Liu, James Harrison, Jaehoon Lee, Kelvin Xu, Aaron Parisi, Abhishek Kumar, Alex Alemi, Alex Rizkowsky, Azade Nova, Ben Adlam, Bernd Bohnet, Gamaleldin Elsayed, Hanie Sedghi, Igor Mordatch, Isabelle Simpson, Izzeddin Gur, Jasper Snoek, Jeffrey Pennington, Jiri Hron, Kathleen Kenealy, Kevin Swersky, Kshiteej Mahajan, Laura Culp, Lechao Xiao, Maxwell~L. Bileschi, Noah Constant, Roman Novak, Rosanne Liu, Tris Warkentin, Yundi Qian, Yamini Bansal, Ethan Dyer, Behnam Neyshabur, Jascha Sohl-Dickstein, and Noah Fiedel.
\newblock Beyond human data: Scaling self-training for problem-solving with language models.
\newblock \emph{arXiv preprint arXiv:2312.06585}, 2023.

\bibitem[Song et~al.(2024)Song, Yu, Li, Yu, Huang, Li, and Wang]{song2024preference}
Feifan Song, Bowen Yu, Minghao Li, Haiyang Yu, Fei Huang, Yongbin Li, and Houfeng Wang.
\newblock Preference ranking optimization for human alignment.
\newblock In \emph{Proceedings of the AAAI Conference on Artificial Intelligence}, volume~38, pages 18990--18998, 2024.

\bibitem[Stiennon et~al.(2020)Stiennon, Ouyang, Wu, Ziegler, Lowe, Voss, Radford, Amodei, and Christiano]{stiennon2020learning}
Nisan Stiennon, Long Ouyang, Jeffrey Wu, Daniel Ziegler, Ryan Lowe, Chelsea Voss, Alec Radford, Dario Amodei, and Paul~F. Christiano.
\newblock Learning to summarize with human feedback.
\newblock \emph{Advances in Neural Information Processing Systems}, 33:\penalty0 3008--3021, 2020.

\bibitem[Swamy et~al.(2024)Swamy, Dann, Kidambi, Wu, and Agarwal]{swamy2024minimaximalist}
Gokul Swamy, Christoph Dann, Rahul Kidambi, Zhiwei~Steven Wu, and Alekh Agarwal.
\newblock A minimaximalist approach to reinforcement learning from human feedback.
\newblock \emph{arXiv preprint arXiv:2401.04056}, 2024.

\bibitem[Touvron et~al.(2023)Touvron, Martin, Stone, Albert, Almahairi, Babaei, Bashlykov, Batra, Bhargava, Bhosale, Bikel, Blecher, Ferrer, Chen, Cucurull, Esiobu, Fernandes, Fu, Fu, Fuller, Gao, Goswami, Goyal, Hartshorn, Hosseini, Hou, Inan, Kardas, Kerkez, Khabsa, Kloumann, Korenev, Koura, Lachaux, Lavril, Lee, Liskovich, Lu, Mao, Martinet, Mihaylov, Mishra, Molybog, Nie, Poulton, Reizenstein, Rungta, Saladi, Schelten, Silva, Smith, Subramanian, Tan, Tang, Taylor, Williams, Kuan, Xu, Yan, Zarov, Zhang, Fan, Kambadur, Narang, Rodriguez, Stojnic, Edunov, and Scialom]{touvron2023llama}
Hugo Touvron, Louis Martin, Kevin Stone, Peter Albert, Amjad Almahairi, Yasmine Babaei, Nikolay Bashlykov, Soumya Batra, Prajjwal Bhargava, Shruti Bhosale, Dan Bikel, Lukas Blecher, Cristian~Canton Ferrer, Moya Chen, Guillem Cucurull, David Esiobu, Jude Fernandes, Jeremy Fu, Wenyin Fu, Brian Fuller, Cynthia Gao, Vedanuj Goswami, Naman Goyal, Anthony Hartshorn, Saghar Hosseini, Rui Hou, Hakan Inan, Marcin Kardas, Viktor Kerkez, Madian Khabsa, Isabel Kloumann, Artem Korenev, Punit~Singh Koura, Marie-Anne Lachaux, Thibaut Lavril, Jenya Lee, Diana Liskovich, Yinghai Lu, Yuning Mao, Xavier Martinet, Todor Mihaylov, Pushkar Mishra, Igor Molybog, Yixin Nie, Andrew Poulton, Jeremy Reizenstein, Rashi Rungta, Kalyan Saladi, Alan Schelten, Ruan Silva, Eric~Michael Smith, Ranjan Subramanian, Xiaoqing~Ellen Tan, Binh Tang, Ross Taylor, Adina Williams, Jian~Xiang Kuan, Puxin Xu, Zheng Yan, Iliyan Zarov, Yuchen Zhang, Angela Fan, Melanie Kambadur, Sharan Narang, Aurelien Rodriguez, Robert Stojnic, Sergey Edunov, and Thomas
  Scialom.
\newblock Llama 2: Open foundation and fine-tuned chat models.
\newblock \emph{arXiv preprint arXiv:2307.09288}, 2023.

\bibitem[Tran et~al.(2024)Tran, Glaze, and Hancock]{snorkelai2024}
Hoang Tran, Chris Glaze, and Braden Hancock.
\newblock snorkelai/snorkel-mistral-pairrm-dpo, 2024.
\newblock \url{https://huggingface.co/snorkelai/Snorkel-Mistral-PairRM-DPO}.

\bibitem[Tunstall et~al.(2023)Tunstall, Beeching, Lambert, Rajani, Rasul, Belkada, Huang, von Werra, Fourrier, Habib, Sarrazin, Sanseviero, Rush, and Wolf]{tunstall2023zephyr}
Lewis Tunstall, Edward Beeching, Nathan Lambert, Nazneen Rajani, Kashif Rasul, Younes Belkada, Shengyi Huang, Leandro von Werra, Clémentine Fourrier, Nathan Habib, Nathan Sarrazin, Omar Sanseviero, Alexander~M. Rush, and Thomas Wolf.
\newblock Zephyr: Direct distillation of lm alignment.
\newblock \emph{arXiv preprint arXiv:2310.16944}, 2023.

\bibitem[Wang et~al.(2022)Wang, Wei, Schuurmans, Le, Chi, Narang, Chowdhery, and Zhou]{wang2022self}
Xuezhi Wang, Jason Wei, Dale Schuurmans, Quoc Le, Ed~Chi, Sharan Narang, Aakanksha Chowdhery, and Denny Zhou.
\newblock Self-consistency improves chain of thought reasoning in language models.
\newblock \emph{arXiv preprint arXiv:2203.11171}, 2022.

\bibitem[Wang et~al.(2023)Wang, Liu, and Jin]{wang2023rlhf}
Yuanhao Wang, Qinghua Liu, and Chi Jin.
\newblock Is rlhf more difficult than standard rl?
\newblock \emph{arXiv preprint arXiv:2306.14111}, 2023.

\bibitem[Wirth et~al.(2017)Wirth, Akrour, Neumann, and F{\"u}rnkranz]{wirth2017survey}
Christian Wirth, Riad Akrour, Gerhard Neumann, and Johannes F{\"u}rnkranz.
\newblock A survey of preference-based reinforcement learning methods.
\newblock \emph{Journal of Machine Learning Research}, 18\penalty0 (136):\penalty0 1--46, 2017.

\bibitem[Xie et~al.(2021)Xie, Cheng, Jiang, Mineiro, and Agarwal]{xie2021bellman}
Tengyang Xie, Ching-An Cheng, Nan Jiang, Paul Mineiro, and Alekh Agarwal.
\newblock Bellman-consistent pessimism for offline reinforcement learning.
\newblock \emph{Advances in neural information processing systems}, 34:\penalty0 6683--6694, 2021.

\bibitem[Xie et~al.(2023)Xie, Foster, Bai, Jiang, and Kakade]{xie2023role}
Tengyang Xie, Dylan~J. Foster, Yu~Bai, Nan Jiang, and Sham~M. Kakade.
\newblock The role of coverage in online reinforcement learning.
\newblock In \emph{The Eleventh International Conference on Learning Representations}, 2023.

\bibitem[Xiong et~al.(2023)Xiong, Dong, Ye, Zhong, Jiang, and Zhang]{xiong2023iterative}
Wei Xiong, Hanze Dong, Chenlu Ye, Han Zhong, Nan Jiang, and Tong Zhang.
\newblock Iterative preference learning from human feedback: Bridging theory and practice for rlhf under kl-constraint.
\newblock \emph{arXiv preprint arXiv:2312.11456}, 2023.

\bibitem[Xu et~al.(2023{\natexlab{a}})Xu, Sun, Zheng, Geng, Zhao, Feng, Tao, and Jiang]{xu2023wizardlm}
Can Xu, Qingfeng Sun, Kai Zheng, Xiubo Geng, Pu~Zhao, Jiazhan Feng, Chongyang Tao, and Daxin Jiang.
\newblock Wizardlm: Empowering large language models to follow complex instructions, 2023{\natexlab{a}}.

\bibitem[Xu et~al.(2023{\natexlab{b}})Xu, Lee, Sukhbaatar, and Weston]{xu2023some}
Jing Xu, Andrew Lee, Sainbayar Sukhbaatar, and Jason Weston.
\newblock Some things are more cringe than others: Preference optimization with the pairwise cringe loss.
\newblock \emph{arXiv preprint arXiv:2312.16682}, 2023{\natexlab{b}}.

\bibitem[Yang et~al.(2023)Yang, Klein, Celikyilmaz, Peng, and Tian]{yang2023rlcd}
Kevin Yang, Dan Klein, Asli Celikyilmaz, Nanyun Peng, and Yuandong Tian.
\newblock Rlcd: Reinforcement learning from contrast distillation for language model alignment.
\newblock \emph{arXiv preprint arXiv:2307.12950}, 2023.

\bibitem[Yu et~al.(2023)Yu, Jiang, Shi, Yu, Liu, Zhang, Kwok, Li, Weller, and Liu]{yu2023metamath}
Longhui Yu, Weisen Jiang, Han Shi, Jincheng Yu, Zhengying Liu, Yu~Zhang, James~T. Kwok, Zhenguo Li, Adrian Weller, and Weiyang Liu.
\newblock Metamath: Bootstrap your own mathematical questions for large language models, 2023.

\bibitem[Yuan et~al.(2024)Yuan, Pang, Cho, Sukhbaatar, Xu, and Weston]{yuan2024selfrewarding}
Weizhe Yuan, Richard~Yuanzhe Pang, Kyunghyun Cho, Sainbayar Sukhbaatar, Jing Xu, and Jason Weston.
\newblock Self-rewarding language models.
\newblock \emph{arXiv preprint arXiv:2401.10020}, 2024.

\bibitem[Yuan et~al.(2023{\natexlab{a}})Yuan, Yuan, Li, Dong, Tan, and Zhou]{yuan2023scaling}
Zheng Yuan, Hongyi Yuan, Chengpeng Li, Guanting Dong, Chuanqi Tan, and Chang Zhou.
\newblock Scaling relationship on learning mathematical reasoning with large language models.
\newblock \emph{arXiv preprint arXiv:2308.01825}, 2023{\natexlab{a}}.

\bibitem[Yuan et~al.(2023{\natexlab{b}})Yuan, Yuan, Tan, Wang, Huang, and Huang]{yuan2023rrhf}
Zheng Yuan, Hongyi Yuan, Chuanqi Tan, Wei Wang, Songfang Huang, and Fei Huang.
\newblock Rrhf: Rank responses to align language models with human feedback without tears.
\newblock \emph{arXiv preprint arXiv:2304.05302}, 2023{\natexlab{b}}.

\bibitem[Zellers et~al.(2019)Zellers, Holtzman, Bisk, Farhadi, and Choi]{zellers2019hellaswag}
Rowan Zellers, Ari Holtzman, Yonatan Bisk, Ali Farhadi, and Yejin Choi.
\newblock Hellaswag: Can a machine really finish your sentence?
\newblock \emph{arXiv preprint arXiv:1905.07830}, 2019.

\bibitem[Zhan et~al.(2024)Zhan, Uehara, Kallus, Lee, and Sun]{zhan2024provable}
Wenhao Zhan, Masatoshi Uehara, Nathan Kallus, Jason~D. Lee, and Wen Sun.
\newblock Provable offline preference-based reinforcement learning.
\newblock In \emph{The Twelfth International Conference on Learning Representations}, 2024.

\bibitem[Zhao et~al.(2023)Zhao, Khalman, Joshi, Narayan, Saleh, and Liu]{zhao2023calibrating}
Yao Zhao, Misha Khalman, Rishabh Joshi, Shashi Narayan, Mohammad Saleh, and Peter~J. Liu.
\newblock Calibrating sequence likelihood improves conditional language generation.
\newblock In \emph{The Eleventh International Conference on Learning Representations}, 2023.

\bibitem[Zheng et~al.(2023)Zheng, Chiang, Sheng, Zhuang, Wu, Zhuang, Lin, Li, Li, Xing, Zhang, Gonzalez, and Stoica]{zheng2023judging}
Lianmin Zheng, Wei-Lin Chiang, Ying Sheng, Siyuan Zhuang, Zhanghao Wu, Yonghao Zhuang, Zi~Lin, Zhuohan Li, Dacheng Li, Eric~P. Xing, Hao Zhang, Joseph~E. Gonzalez, and Ion Stoica.
\newblock Judging llm-as-a-judge with mt-bench and chatbot arena.
\newblock \emph{Advances in Neural Information Processing Systems}, 36, 2023.

\end{thebibliography}

\clearpage

\appendix
\onecolumn

\begin{center}
{\LARGE Appendix}
\end{center}

\section{Extension to Regularized Preferences}
\label{app:nash_md_ver}

In this section, we discuss how to extend the \DNO framework to the case of regularized preferences (defined in \cref{eq:def_reg_pref}),
\begin{align*}
\Pcal^{\pi,\pi'}_{\tau}(y \succ y' \mid x) = \Pcal(y \succ y' \mid x) - \tau \log\frac{\pi(y \mid x)}{\piref(y \mid x)} + \tau \log\frac{\pi'(y \mid x)}{\piref(y \mid x)},
\end{align*}
which was first introduced and solved by \citet{munos2023nash} via \NashMD introduced earlier.

One can notice that the only difference between \SPO and \NashMD is that \SPO uses the last iteration policy $\pi_t$ for both constructing reward $r_t$ and performing a soft policy iteration update, whereas \NashMD uses the smoothed version $\pi_t^\tau$ (firstly defined in \cref{eq:def_smooth_pit}),
\begin{align}
\label{eq:def_pitsmooth_app}
\pi_t^\tau(y \mid x) \coloneqq \frac{\pi_t(y \mid x)^{1 - \nicefrac{\tau}{\eta}}\piref(y \mid x)^{\nicefrac{\tau}{\eta}}}{\sum_{y' \in \Ycal}\pi_t(y \mid x)^{1 - \nicefrac{\tau}{\eta}}\piref(y \mid x)^{\nicefrac{\tau}{\eta}}}, ~ \forall (x,y) \in \Xcal\times\Ycal,
\end{align}
for both. This allows \NashMD to obtain a late-iteration guarantee.

On the other hand, due to the symmetry of regularized preferences, if we consider on-average convergence case, it is likely that \SPO can be adapted with a simpler way as follows:
for each $t = 1,2,\dotsc,T$,
\begin{align*}
    \text{(i)} \quad &~ r_t(x,y) \leftarrow \E_{y' \sim \pi_t(\cdot \mid x)}\left[\Pcal(y \succ y' \mid x)\right],~~\forall (x,y) \in \Xcal \times \Ycal
    \\
    \text{(ii)} \quad &~ \pi_{t+1}(\cdot \mid x) \leftarrow \frac{1}{Z_t(x)} \pi_t^\tau(\cdot \mid x) \exp\left(\frac{r_t(x,\cdot)}{\eta}\right),~~\forall x \in \Xcal,
\end{align*}
where $Z_t(x) \coloneqq \sum_{y \in \Ycal} \pi_t^\tau(y \mid x) \exp\left(\frac{r_t(x,y)}{\eta}\right)$ is the partition function for iteration $t$. 
Here, the smoothed policy $\pi_t^\tau$ is only used in the soft policy iteration step, and this coincides with the OMD algorithm from \citet{munos2023nash}.

\begin{algorithm}[ht]
\caption{\DNO (Regularized Preferences Version)}
\label{alg:dno_reg}
{\bfseries input:} General preference function $\Pcal$, learning rate $\eta$, coefficient of KL-regularization $\tau$, number of iterations $T$, prompt distribution $\rho$.
\begin{algorithmic}[1]
\State Initialize $\pi_1 \leftarrow \unif(\Acal)$.
\For{iteration $t = 1,2,\dotsc,T$}
    \State Compute $r_t(x,y)$ by,
    \Statex\quad\qquad{\color{txblue}Option I:}\algocmt{for on-average convergence}
    \Statex\quad\qquad\qquad$r_t(x,y) \leftarrow \E_{y' \sim \pi_t(\cdot \mid x)}\left[\Pcal(y \succ y' \mid x)\right], ~\forall (x,y)\in\Xcal\times\Ycal$.
    \Statex\quad\qquad{\color{txblue}Option II:}\algocmt{for last-iteration convergence}
    \Statex\quad\qquad\qquad$r_t(x,y) \leftarrow \E_{y' \sim \pi_t^\tau(\cdot \mid x)}\left[\Pcal(y \succ y' \mid x)\right], ~\forall (x,y)\in\Xcal\times\Ycal$,
    where $\pi_t^\tau$ is defined in \cref{eq:def_pitsmooth_app}.
    \State\label{step:dno_learnpi_reg} Obtain $\pi_{t+1}$ by,
    \begin{equation*}
    \begin{gathered}
    \pi_{t+1} \leftarrow \argmax_{\pi \in \Pi} \E_{(x,y_1,y_2) \sim \Dcal_t}\bigg\{ \sigma\left( r_t(x,y_1) - r_t(x,y_2) \right) \log \left[ \sigma\left( \eta \log\frac{\pi(y_1 \mid x)}{\widetilde\pi_t^\tau(y_1 \mid x)} - \eta \log\frac{\pi(y_2 \mid x)}{\widetilde\pi_t^\tau(y_2 \mid x)} \right)\right]
    \\
    \hspace{120pt} + \sigma\left( r_t(x,y_2) - r_t(x,y_1) \right) \log \left[ \sigma\left( \eta \log\frac{\pi(y_2 \mid x)}{\widetilde\pi_t^\tau(y_2 \mid x)} - \eta \log\frac{\pi(y_1 \mid x)}{\widetilde\pi_t^\tau(y_1 \mid x)} \right) \right] \bigg\},
    \end{gathered}
    \end{equation*}
    where $\Dcal_t$ is generated by $x \sim \rho, y_1 \sim \mu_{1,t}(\cdot \mid x), y_2 \sim \mu_{2,t}(\cdot \mid x)$ with some policies $\mu_{1,t}$ and $\mu_{2,t}$, and $\widetilde\pi_t^\tau(y\mid x) \coloneqq \pi_t(y \mid x)^{1 - \nicefrac{\tau}{\eta}}\piref(y \mid x)^{\nicefrac{\tau}{\eta}}, \forall (x,y)\in\Xcal\times\Ycal$ (the unnormalized version of $\pi_t^\tau(y\mid x)$ defined in \cref{eq:def_pitsmooth_app}).
\EndFor
\State \Return $\pibar = \unif(\pi_{1:T})$.
\end{algorithmic}
\end{algorithm}

\begin{algorithm}[ht]
\caption{\NashDPO (Regularized Preferences Version)}
\label{alg:NashDPO_reg}
{\bfseries input:} General preference function $\Pcal$, learning rate $\etatd$, coefficient of KL-regularization $\tau$, number of iterations $T$, reference policy $\piref$, seed dataset $\Dcal_0 = \{(x,y^\gold)\}$ where $x \sim \rho$ and $y \sim \pig(\cdot\mid x)$, reference model $\piref$.
\begin{algorithmic}[1]
\State Initialize $\pi_1 \leftarrow \piref$.
\For{iteration $t = 1,2,\dotsc,T$}
    \State {\bf Sample \emph{batched on-policy} responses:}\label{step:batched_op_reg} 
    \Statex\quad\qquad {\color{txblue}Option I:} \algocmt{for on-average convergence}
    \Statex\quad\qquad\qquad Sample $K$ outputs per per prompt using the current $\pi_t$: $\{y_t^1,y_t^2,\dotsc,y_t^K\} \sim \pi_t(\cdot \mid x)$, $\forall x \in \Dcal_0$.
    \Statex\quad\qquad {\color{txblue}Option II:} \algocmt{for last-iteration convergence}
    \Statex\quad\qquad\qquad Sample $K$ outputs per per prompt using the smoothed current policy $\pi_t^\tau$: $\{y_t^1,y_t^2,\dotsc,y_t^K\} \sim \pi_t^\tau(\cdot \mid x)$, $\forall x \in \Dcal_0$, where $\pi_t^\tau$ is defined in \cref{eq:def_pitsmooth_app} with accommodating $\eta \to \etatd$.
    \State {\bf Rank responses:}\label{step:dno_rank_reg} For each $x \in \Dcal_0$, rank the corresponding $\{y_t^1,y_t^2,\dotsc,y_t^K,y^\gold\}$ using the pair-wise win-rate by sampling from the general preference function $\Pcal$.
    \State {\bf Filter and construct preference pairs:}\label{step:filtering_reg} Construct $\Dcal_t = \{(x,y_t^+,y_t^-)\}$, for all $x \in \Dcal_0$, and $(y_t^+,y_t^-)$ are large-margin pairs (based on the win-rate rank) within the responses for $x$ from the previous step.
    \State {\bf Contrastive learning:} Obtain $\pi_{t+1}$ by,
    \begin{gather*}
    \pi_{t+1} \leftarrow \argmax_{\pi \in \Pi} \E_{(x,y_t^+,y_t^-) \sim \Dcal_t} \log \left[ \sigma\left( \etatd \log\frac{\pi(y_t^+ \mid x)}{\widetilde\pi_t^\tau(y_t^+ \mid x)} - \etatd \log\frac{\pi(y_t^- \mid x)}{\widetilde\pi_t^\tau(y_t^- \mid x)} \right)\right],
    \end{gather*}
    where $\widetilde\pi_t^\tau(y\mid x) \coloneqq \pi_t(y \mid x)^{1 - \nicefrac{\tau}{\etatd}}\piref(y \mid x)^{\nicefrac{\tau}{\etatd}}, \forall (x,y)\in\Xcal\times\Ycal$ (the unnormalized version of $\pi_t^\tau(y\mid x)$ defined in \cref{eq:def_pitsmooth_app}, after accommodating $\eta \to \etatd$).
\EndFor
\State \Return best of $\pi_{1:(T+1)}$ on the validation data.
\end{algorithmic}
\end{algorithm}

Based on discuss above, we can then obtain the extension of \DNO to the regularized preferences in \cref{alg:dno_reg}, and its practical implementation in \cref{alg:NashDPO_reg}.
Note that, similar to \NashMD, the late-iteration option for both \cref{alg:dno_reg} and \cref{alg:NashDPO_reg} requires sampling from the smoothed policy $\pi_t^\tau$ (the mixture between $\pi_t$ and $\piref$, defined in \cref{eq:def_pitsmooth_app}). One solution to address this can be sampling from the token-level between $\pi_t$ and $\piref$ instead as suggested by \citet{munos2023nash}.

\section{Detailed Proofs}
\label{app:dno_proof}

In this section, we provide detailed proofs for our theoretical results.
Note that, the definitions and assumptions presented heavily adopts the ideas related to version space and concentrability from reinforcement learning theory literature \citep[esp.,][]{xie2021bellman,xie2023role}. Nevertheless, the descriptions provided herein are intentionally simplified to elucidate the core insights into the algorithmic design. A full and exhaustive theoretical analysis falls outside the primary scope of this paper.
We now make the following definitions and assumptions.
\begin{definition}[Feasible solution space]
\label{def:feasolspace}
For each iteration $t\in [T]$, we define $\Pi_t \subseteq \Pi$ as the feasible solution space for iteration $t$. The $\pi_t$ obtained by \cref{alg:dno} is always belong to $\Pi_t$, regardless of the randomness of the data sampling procedure in \cref{alg:dno}.
\end{definition}
Here, \cref{def:feasolspace} follows a similar spirit as the version space in RL theory literature, where $\Pi_t$ only contains policies that have a small empirical loss, which can be further converted to a small population loss under standard concentration procedures.
\begin{definition}[Concentrability coefficient over the feasible solution space]
\label{def:concentrability}
For all $t \in [T]$, suppose $\Pi_t$ is defined in \cref{def:feasolspace}, and $\mu_{1,t}$ and $\mu_{2,t}$ are some given data generate policy. Now, for any $t \in [T]$, we define $\Cfrak_t$ to be the concentrability coefficient at iteration $t$ over its feasible solution space, where
\begin{align*}
\frac{\E_{x \sim \rho, y_1 \sim \pi_{t+1}^\star(\cdot \mid x), y_2 \sim \pi_{t+1}(\cdot \mid x)}\left[ \left( \log\frac{\pi_{t+1}^\star(y_1 \mid x)}{\pi_{t+1}(y_1 \mid x)} - \log\frac{\pi_{t+1}^\star(y_2 \mid x)}{\pi_{t+1}(y_2 \mid x)} \right)^2\right]}{\E_{x \sim \rho, y_1 \sim \mu_{1,t}(\cdot \mid x), y_2 \sim \mu_{2,t}(\cdot \mid x)}\left[ \left( \log\frac{\pi_{t+1}^\star(y_1 \mid x)}{\pi_{t+1}(y_1 \mid x)} - \log\frac{\pi_{t+1}^\star(y_2 \mid x)}{\pi_{t+1}(y_2 \mid x)} \right)^2\right]} \leq \Cfrak_t,
\end{align*}
for any $\pi_{t+1} \in \Pi_{t+1}$ and any $\pi_{t+1}^\star \in \left\{\frac{1}{Z_\pi(x)} \pi(\cdot \mid x) \exp\left(\frac{r_\pi(x,\cdot)}{\eta}\right):\pi\in\Pi_t\right\}$; and here we use the definition of $r_\pi(x,y) \coloneqq \E_{y' \sim \pi(\cdot \mid x)}\left[\Pcal(y \succ y' \mid x)\right],\forall (x,y)\in\Xcal\times\Ycal$, and $Z_\pi(x) = \sum_{y \in \Ycal} \pi(y \mid x) \exp\left(\frac{r_\pi(x,y)}{\eta}\right),\forall x\in\Xcal$.
\end{definition}
\cref{def:concentrability} can be viewed as a natural extension of concentrability from the (offline) reinforcement learning literature to our setup.
\begin{assumption}[Realizability over the feasible solution space]
\label{asm:realizability}
For any $\pi \in \Pi_t$ where $\Pi_t$ is defined in \cref{def:feasolspace} for all $t \in [T]$, we assume the following soft-policy iteration update
\begin{align*}
\pi^{\sf SPI}(\cdot \mid x) \coloneqq \frac{1}{Z_\pi(x)} \pi(\cdot \mid x) \exp\left(\frac{r_\pi(x,\cdot)}{\eta}\right),
\end{align*}
where $r_\pi(x,y) \coloneqq \E_{y' \sim \pi(\cdot \mid x)}\left[\Pcal(y \succ y' \mid x)\right],\forall (x,y)\in\Xcal\times\Ycal$, and $Z_\pi(x) = \sum_{y \in \Ycal} \pi(y \mid x) \exp\left(\frac{r_\pi(x,y)}{\eta}\right),\forall x\in\Xcal$ is the partition function. 
\end{assumption}
\begin{assumption}[Boundedness over the feasible solution space]
\label{asm:boundedness}
Suppose $\Pi_t$ is defined in \cref{def:feasolspace} for all $t \in [T]$, then we assume $\log\frac{\pi(y \mid x)}{\pi_t(y\mid x)} \in [-\Rmax,\Rmax]$ for all $\pi \in \Pi$, $\pi_t \in \Pi_t$, and $(x,y)\in\Xcal\times\Ycal$.
\end{assumption}
\cref{asm:boundedness} may appear somewhat unconventional, as it explicitly assumes boundedness on the log probabilities. Nonetheless, it is important to note that the value of $\log\frac{\pi(y \mid x)}{\pi_t(y\mid x)}$ is directly measurable and controllable in practice, which is different from the common use case, such as maximum likelihood problems.

\begin{theorem}[Formal Version of \cref{thm:dno_guarantee_main}]
\label{thm:dno_guarantee_app}
Under \cref{asm:realizability,asm:boundedness}, and fix an arbitrary iteration $t \in [T]$. Suppose $\pi_{t+1}$ is from \cref{step:dno_learnpi} of \cref{alg:dno}, and $\pi_{t+1}^\star$ is defined in \cref{eq:def_pistar}. Then, we have
\begin{align*}
\E_{x \sim \rho}\left[ \left( \TV(\pi_{t+1}(\cdot \mid x),\pi_{t+1}^\star(\cdot \mid x)) \right)^2\right] \leq \Ocal \left( \frac{\Cfrak_t\Rmax^2\log(\nicefrac{|\Pi|}{\delta})}{N} \right),
\end{align*}
where $\Cfrak_t$ is defined in \cref{def:concentrability}.
\end{theorem}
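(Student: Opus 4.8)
The plan is to recognize the objective in \cref{eq:reward_regression} (equivalently \cref{step:dno_learnpi} of \cref{alg:dno}) as a well-specified conditional density estimation problem under the logarithmic loss, and then chain three estimates: a statistical generalization bound on the data distribution, a change of measure governed by $\Cfrak_t$, and a divergence inequality that delivers the total variation. First I would set $g_t(x,y) \coloneqq \log\frac{\pi_{t+1}^\star(y\mid x)}{\pi_{t+1}(y\mid x)}$ and record the exact reparameterization identity that drives everything: because $r_t(x,\cdot)$ equals $r_{\pi_{t+1}^\star,t}(x,\cdot)$ up to the ($y$-independent) log-partition term, the target logit and the learned logit satisfy $\Delta_t^\star(x,y_1,y_2) - \Delta_{\pi_{t+1},t}(x,y_1,y_2) = \eta\,(g_t(x,y_1) - g_t(x,y_2))$. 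Under \cref{asm:realizability}, $\pi_{t+1}^\star$ is the population minimizer of the cross-entropy objective and $z = \sigma(\Delta_t^\star)$ is the Bayes-optimal Bernoulli predictor, so the problem is realizable over the finite class $\Pi$.

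Next I would invoke a standard log-loss / maximum-likelihood generalization bound for a finite realizable class --- of the kind used in \citet{foster2021efficient} and the cited offline-RL literature --- to obtain, with probability $1-\delta$, $\E_{(x,y_1,y_2)\sim\Dcal_t}\big[(\,\sigma(\Delta_t^\star) - \sigma(\Delta_{\pi_{t+1},t})\,)^2\big] \leq \Ocal(\log(|\Pi|/\delta)/N)$; here \cref{asm:boundedness} keeps every logit inside $[-2\eta\Rmax,\,2\eta\Rmax]$, so the loss is bounded and the squared Hellinger distance between the two Bernoullis (which the excess log-loss lower-bounds) controls the squared probability error. The second step is to undo the sigmoid: a mean value theorem argument over the bounded logit range lower-bounds $\sigma'$ and converts the probability-space error into $\E_{\Dcal_t}[(\Delta_t^\star - \Delta_{\pi_{t+1},t})^2] = \eta^2\,\E_{\Dcal_t}[(g_t(x,y_1)-g_t(x,y_2))^2]$, at the cost of a factor controlled by $\Rmax$ (this is exactly where an enlarged learning rate degrades the rate). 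Since $\Dcal_t$ is drawn from $(\mu_{1,t},\mu_{2,t})$, this controls the \emph{denominator} quantity appearing in $\Cfrak_t$.

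I would then apply \cref{def:concentrability} to change measure from $(\mu_{1,t},\mu_{2,t})$ to $(\pi_{t+1}^\star,\pi_{t+1})$, paying the factor $\Cfrak_t$ and yielding $\E_x\,\E_{y_1\sim\pi_{t+1}^\star,\,y_2\sim\pi_{t+1}}[(g_t(x,y_1)-g_t(x,y_2))^2] \leq \Ocal(\Cfrak_t\Rmax^2\log(|\Pi|/\delta)/N)$. To discharge the subtlety that $\Cfrak_t$ depends on the random $\pi_{t+1}$, I would run the argument uniformly over the feasible solution space $\Pi_{t+1}$ of \cref{def:feasolspace}, which is a fixed set containing $\pi_{t+1}$ with high probability, so the concentrability bound applies deterministically. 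Finally, writing $g=\log\frac{\pi^\star}{\pi}$, I would use the decomposition $\E_{y_1\sim\pi^\star,y_2\sim\pi}[(g(y_1)-g(y_2))^2] = \mathrm{Var}_{\pi^\star}(g) + \mathrm{Var}_{\pi}(g) + \big(\KL(\pi^\star \midd \pi)+\KL(\pi \midd \pi^\star)\big)^2$ together with Pinsker's inequality to lower-bound the left side by $\TV(\pi_{t+1}^\star,\pi_{t+1})^2$, closing the chain.

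The hard part will be this last relating step. Bounding by the Jeffreys term $\big(\KL(\pi^\star \midd \pi)+\KL(\pi \midd \pi^\star)\big)^2$ alone is too weak in the near-agreement regime, where it scales like the fourth power of the perturbation and would only yield a $1/\sqrt{N}$ rate; conversely, the variance terms alone collapse when $\pi^\star$ and $\pi$ are far apart (e.g.\ when each concentrates where the other is tiny). One must therefore combine both contributions of the decomposition to recover $\TV^2$ at the $1/N$ rate, and the bounded-log-ratio assumption (\cref{asm:boundedness}) is what makes this lower bound uniform. The second delicate point is tracking the precise $\Rmax$ dependence through the sigmoid inversion: the honest constant from lower-bounding $\sigma'$ on $[-2\eta\Rmax,\,2\eta\Rmax]$ worsens with $\eta\Rmax$, which is exactly why the large-margin filtering of \cref{alg:NashDPO} is introduced --- it keeps the effective range, and hence this constant, under control.
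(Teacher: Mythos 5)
Your proposal tracks the paper's proof almost step for step: the reparameterization identity $\Delta_t^\star-\Delta_{\pi_{t+1},t}=\eta(g_t(x,y_1)-g_t(x,y_2))$, the finite-class log-loss generalization bound, the Bernoulli Pinsker/Hellinger conversion to a squared-probability error, the mean-value-theorem inversion of the sigmoid with an $\eta\Rmax$-dependent constant, the change of measure via $\Cfrak_t$ taken uniformly over the feasible solution space of \cref{def:feasolspace} to handle the data-dependence of $\pi_{t+1}$, and even the observation that the large-margin filtering exists precisely to keep the sigmoid-inversion constant under control. All of that matches the paper's Step 1 and the first half of its Step 2.

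The one genuine gap is exactly where you flag it: lower-bounding $\E_{y_1\sim\pi_{t+1}^\star,\,y_2\sim\pi_{t+1}}[(g(y_1)-g(y_2))^2]$ by $\TV(\pi_{t+1},\pi_{t+1}^\star)^2$. Your decomposition into $\mathrm{Var}_{\pi^\star}(g)+\mathrm{Var}_{\pi}(g)+(\KL(\pi^\star\midd\pi)+\KL(\pi\midd\pi^\star))^2$ is algebraically correct, and your diagnosis is right that the Jeffreys term alone only yields $\TV^4$ while the variance terms alone collapse when the two policies separate; but ``combine both contributions'' is not an argument, and your suggestion that \cref{asm:boundedness} is what rescues the combination points in the wrong direction --- the paper's closing step uses no boundedness at all. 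The resolution is to not split the second moments into variance plus squared mean in the first place: expanding the square gives $\E_{\pi^\star}[g^2]+\E_{\pi}[g^2]+2\,\KL(\pi^\star\midd\pi)\,\KL(\pi\midd\pi^\star)$, the cross term is a product of two nonnegative KLs and is simply dropped, Jensen turns each remaining second moment into a squared first absolute moment (and $a^2+b^2\gtrsim(a+b)^2$ recombines them), and the key observation is that $\E_{y\sim\pi^\star}[\,|\log(\pi^\star/\pi)|\,]+\E_{y\sim\pi}[\,|\log(\pi^\star/\pi)|\,]$ is itself the $f$-divergence $D_{f_1}$ with $f_1(u)=(1+u)\left|\log u\right|$, which dominates pointwise the $f$-divergence representation of total variation, $f_2(u)=\frac{1}{2}\left|u-1\right|$, for every $u>0$. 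That single pointwise comparison handles both of your problem regimes at once --- near agreement, $(1+u)\left|\log u\right|\approx 2\left|u-1\right|$, and far apart, $f_1$ blows up --- with no boundedness needed. Without this (or an equivalent) observation the chain does not close at the claimed $1/N$ rate.
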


\begin{proof}[\cpfname{thm:dno_guarantee_app}]
We will now present the proof using the following two-step procedure.

\textbf{Step 1: From regression with log loss to squared error bound.}\quad
By standard results on the regression with the logarithmic loss, we know,
\begin{footnotesize}
\begin{equation}
\label{eq:logloss_reg_bound}
\E_{(x,y_1,y_2) \sim \Dcal_t}\Big[ \sigma\big( \Delta_t^\star(x,y_1,y_2) \big) \log \left[ \sigma\big( \Delta_{\pi_{t+1},t}(x,y_1,y_2) \big) \right] + \sigma\big( \Delta_t^\star(x,y_2,y_1) \big) \log \left[ \sigma\big( \Delta_{\pi_{t+1},t}(x,y_2,y_1) \big) \right] \Big] \lesssim  \frac{\log(\nicefrac{|\Pi|}{\delta})}{N}.
\end{equation}
\end{footnotesize}

Note that similar results could also apply beyond finite $\Pi$. For simplicity, we omit the detailed discussion in our paper. For more in-depth discussions about regression with the logarithmic loss, the reader can refer to, e.g., \citet[][]{foster2021efficient}.

Next, by the Pinsker’s inequality, we have for any $z,\zhat \in [0,1]$,
\begin{align*}
\frac{(z - \zhat)^2}{2} \leq z \log\left( \frac{z}{\zhat} \right) + (1-z) \log\left( \frac{1-z}{1-\zhat} \right).
\end{align*}
Substituting the $z$ and $\zhat$ with \cref{eq:def_z} and combining with \cref{eq:logloss_reg_bound}, we obtain that
\begin{align}
\label{eq:reward_sq_Dt}
\E_{(x,y_1,y_2) \sim \Dcal_t}\left[ \big( \sigma \left( r_t(x,y_1) - r_t(x,y_2) \right) - \sigma \left( r_{\pi_{t+1},t}(x,y_1) - r_{\pi_{t+1},t}(x,y_2) \right)\big)^2\right] \lesssim \frac{\log(\nicefrac{|\Pi|}{\delta})}{N},
\end{align}
where $a \lesssim b$ means $a \leq c \cdot b$ for some absolute constant $c$. Then, by the standard concentration for squared loss, e.g., Lemma A.4 of \citet{xie2021bellman} with $\gamma = 0$, \cref{eq:reward_sq_Dt} implies
\begin{align}
\label{eq:reward_sq_pop}
\E_{(x,y_1,y_2) \sim \rho\times\mu_{1:2,t}}\left[ \big( \sigma \left( r_t(x,y_1) - r_t(x,y_2) \right) - \sigma \left( r_{\pi_{t+1},t}(x,y_1) - r_{\pi_{t+1},t}(x,y_2) \right)\big)^2\right] \lesssim \frac{\log(\nicefrac{|\Pi|}{\delta})}{N},
\end{align}
where we use ``$\times$'' as the shorthand of joint distribution for the sake of simplicity, for example, $(x,y_1,y_2) \sim \rho\times\mu_{1:2,t}$ is shorthand for $x \sim \rho, y_1 \sim \mu_{1,t}(\cdot \mid x), y_2 \sim \mu_{2,t}(\cdot \mid x)$.

By the definition of $r_t$ in \cref{step:def_rt} of \cref{alg:dno}, we know $r_t(x,y) \in [0,1]$ for all $(x,y)\in\Xcal\times\Ycal$. Thus, by a variant of mean value theorem, we know
\begin{equation}
\label{eq:mean_value_thm}
\begin{aligned}
&~ \big| r_t(x,y_1) - r_t(x,y_2)  -  r_{\pi_{t+1},t}(x,y_1) + r_{\pi_{t+1},t}(x,y_2) \big|
\\
\leq &~ \frac{\eta\Rmax}{1 - \sigma(1)} \big| \sigma \left( r_t(x,y_1) - r_t(x,y_2) \right) - \sigma \left( r_{\pi_{t+1},t}(x,y_1) - r_{\pi_{t+1},t}(x,y_2) \right)\big|,
\end{aligned}
\end{equation}
for any $(x,y_1,y_2) \in \Xcal\times\Ycal\times\Ycal$, where $\Rmax$ is introduced from \cref{asm:boundedness}.
This is because: let $a \coloneqq r_t(x,y_1) - r_t(x,y_2) \in [-1,1]$, and $b \coloneqq r_{\pi_{t+1},t}(x,y_1) - r_{\pi_{t+1},t}(x,y_2) \in [-\eta \Rmax, \eta \Rmax]$, and, then, we can directly verify that the slope we need to bound
$\nicefrac{\big|a  -  b \big|}{\big| \sigma \left( a \right) - \sigma \left( b \right)\big|}$ reaches its maximum at $a = 1$ and $b = \eta\Rmax$.

Combining \cref{eq:reward_sq_pop,eq:mean_value_thm}, we obtain
\begin{align}
\label{eq:dno_reward_bound}
\E_{(x,y_1,y_2) \sim \rho\times\mu_{1:2,t}}\left[ \big( r_t(x,y_1) - r_t(x,y_2) - r_{\pi_{t+1},t}(x,y_1) + r_{\pi_{t+1},t}(x,y_2)\big)^2\right] \lesssim \frac{\eta^2\Rmax^2\log(\nicefrac{|\Pi|}{\delta})}{N}.
\end{align}

\textbf{Step 2: Concentration in the policy space.}\quad
We now reason about the concentration of $\pi_{t+1} \to \pi_{t+1}^\star$ from \cref{eq:dno_reward_bound}, where $\pi_{t+1}^\star$ is defined in \cref{eq:def_pistar} and $\pi_{t+1}$ is the policy corresponding to the learned $r_{\pi_{t+1},t}$. By the definition of $r_{\pi,t}$ in \cref{eq:def_rpit}, we have
\begin{align*}
&~ r_t(x,y_1) - r_t(x,y_2) - r_{\pi_{t+1},t}(x,y_1) + r_{\pi_{t+1},t}(x,y_2)
\\
= &~ r_t(x,y_1) - r_t(x,y_2) - \eta \log\frac{\pi_{t+1}(y_1 \mid x)}{\pi_t(y_1 \mid x)} + \eta \log\frac{\pi_{t+1}(y_2 \mid x)}{\pi_t(y_2 \mid x)}
\\
= &~ \eta\log\frac{\pi_{t+1}^\star(y_1 \mid x)}{\pi_{t+1}(y_1 \mid x)} - \eta\log\frac{\pi_{t+1}^\star(y_2 \mid x)}{\pi_{t+1}(y_2 \mid x)}.
\end{align*}
This implies
\begin{gather}
\E_{(x,y_1,y_2) \sim \rho\times\mu_{1:2,t}}\left[ \left( \eta\log\frac{\pi_{t+1}^\star(y_1 \mid x)}{\pi_{t+1}(y_1 \mid x)} - \eta\log\frac{\pi_{t+1}^\star(y_2 \mid x)}{\pi_{t+1}(y_2 \mid x)} \right)^2\right] \lesssim \frac{\eta^2\Rmax^2\log(\nicefrac{|\Pi|}{\delta})}{N} \nonumber
\\
\label{eq:oneside_dno_bound}
\Longrightarrow \E_{(x,y_1,y_2) \sim \rho\times\pi_{t+1}^\star\times\pi_{t+1}}\left[ \left( \log\frac{\pi_{t+1}^\star(y_1 \mid x)}{\pi_{t+1}(y_1 \mid x)} - \log\frac{\pi_{t+1}^\star(y_2 \mid x)}{\pi_{t+1}(y_2 \mid x)} \right)^2\right] \lesssim \frac{\Cfrak_t\Rmax^2\log(\nicefrac{|\Pi|}{\delta})}{N},
\end{gather}
where the last step follows from the definition of $\Cfrak_t$ (\cref{def:concentrability}).

On the other hand, we have
\begin{align*}
&~ \E_{(x,y_1,y_2) \sim \rho\times\pi_{t+1}^\star\times\pi_{t+1}}\left[ \left(\log\frac{\pi_{t+1}^\star(y_1 \mid x)}{\pi_{t+1}(y_1 \mid x)} - \log\frac{\pi_{t+1}^\star(y_2 \mid x)}{\pi_{t+1}(y_2 \mid x)} \right)^2\right]
\\
= &~ \E_{(x,y_1,y_2) \sim \rho\times\pi_{t+1}^\star\times\pi_{t+1}}\left[ \left( \log\frac{\pi_{t+1}^\star(y_1 \mid x)}{\pi_{t+1}(y_1 \mid x)} \right)^2 + \left(\log\frac{\pi_{t+1}^\star(y_2 \mid x)}{\pi_{t+1}(y_2 \mid x)} \right)^2 - 2 \left( \log\frac{\pi_{t+1}^\star(y_1 \mid x)}{\pi_{t+1}(y_1 \mid x)} \right) \cdot \left(\log\frac{\pi_{t+1}^\star(y_2 \mid x)}{\pi_{t+1}(y_2 \mid x)} \right) \right]
\\
= &~ \E_{(x,y) \sim \rho\times\pi_{t+1}^\star}\left[ \left( \log\frac{\pi_{t+1}^\star(y \mid x)}{\pi_{t+1}(y \mid x)} \right)^2 \right]
+ \E_{(x,y) \sim \rho\times\pi_{t+1}}\left[\left(\log\frac{\pi_{t+1}^\star(y \mid x)}{\pi_{t+1}(y \mid x)} \right)^2 \right]
\\
&~ + 2 \E_{x \sim \rho}\Bigg[ \underbrace{\E_{y \sim \pi_{t+1}^\star(\cdot \mid x)}\left[ \log\frac{\pi_{t+1}^\star(y \mid x)}{\pi_{t+1}(y \mid x)} \right]}_{= \KL(\pi^\star_{t+1}(\cdot \mid x) \midd \pi_{t+1}(\cdot \mid x))} \cdot \underbrace{\E_{y \sim \pi_{t+1}(\cdot \mid x)}\left[ \log\frac{\pi_{t+1}(y \mid x)}{\pi_{t+1}^\star(y \mid x)} \right]}_{= \KL(\pi_{t+1}(\cdot \mid x) \midd \pi^\star_{t+1}(\cdot \mid x))} \Bigg]
\\
\label{eq:dno_mid}\numberthis
\geq &~ \E_{(x,y) \sim \rho\times\pi_{t+1}^\star}\left[ \left( \log\frac{\pi_{t+1}^\star(y \mid x)}{\pi_{t+1}(y \mid x)} \right)^2 \right]
+ \E_{(x,y) \sim \rho\times\pi_{t+1}}\left[\left(\log\frac{\pi_{t+1}^\star(y \mid x)}{\pi_{t+1}(y \mid x)} \right)^2 \right].
\end{align*}

Next, we fix an arbitrary $\xtilde \in \Xcal$, and we have
\begin{align}
&~ \E_{y \sim \pi_{t+1}^\star(\cdot \mid \xtilde)}\left[ \left( \log\frac{\pi_{t+1}^\star(y \mid \xtilde)}{\pi_{t+1}(y \mid \xtilde)} \right)^2 \right]
+ \E_{y \sim \pi_{t+1}(\cdot\mid \xtilde)}\left[\left(\log\frac{\pi_{t+1}^\star(y \mid \xtilde)}{\pi_{t+1}(y \mid \xtilde)} \right)^2 \right] \nonumber
\\
\geq &~ \left(\E_{y \sim \pi_{t+1}^\star(\cdot \mid \xtilde)}\left[ \left| \log\frac{\pi_{t+1}^\star(y \mid \xtilde)}{\pi_{t+1}(y \mid \xtilde)} \right| \right] \right)^2
+ \left( \E_{y \sim \pi_{t+1}(\cdot\mid \xtilde)}\left[\left| \log\frac{\pi_{t+1}^\star(y \mid \xtilde)}{\pi_{t+1}(y \mid \xtilde)} \right| \right] \right)^2
\tag{by Jensen's inequality}
\\
\label{eq:dno_fixx}
\gtrsim &~ \left(\E_{y \sim \pi_{t+1}^\star(\cdot \mid \xtilde)}\left[ \left| \log\frac{\pi_{t+1}^\star(y \mid \xtilde)}{\pi_{t+1}(y \mid \xtilde)} \right| \right] + \E_{y \sim \pi_{t+1}(\cdot\mid \xtilde)}\left[\left| \log\frac{\pi_{t+1}^\star(y \mid \xtilde)}{\pi_{t+1}(y \mid \xtilde)} \right| \right] \right)^2,
\end{align}
where $a \gtrsim b$ means $a \geq c \cdot b$ for some absolute constant $c$.

We now recall the definition of $f$-divergence: $D_f(p,q) \coloneqq \E_{y \sim q}[f(\nicefrac{p(y)}{q(y)})]$ for two distributions $p$ and $q$, where $f:\RR^+ \to \RR$ is convex with $f(1) = 0$. Thus, we can notice that,
\begin{align}
\label{eq:dno_df}
\E_{y \sim p}\left[ \left| \log\frac{p(y)}{q(y)} \right| \right] + \E_{y \sim q}\left[ \left| \log\frac{p(y)}{q(y)} \right| \right] = D_{f_1}(p,q), \quad \text{where}~f_1(u) \coloneqq (1 + u) \cdot \left|\log(u)\right|,~u\in\RR^+.
\end{align}
On the other hand, by the definition of total variation distance, we know
\begin{align}
\label{eq:tv_df}
\TV(p,q) = D_{f_2}(p,q), \quad \text{where}~f_2(u) \coloneqq \frac{1}{2} \left|u - 1\right|,~u\in\RR^+.
\end{align}
It is easy to verify that
\begin{align}
\label{eq:dnonorm_tv}
\text{$f_1(u) - f_2(u) = (1 + u) \cdot \left|\log(u)\right| - \frac{1}{2} \left|u - 1\right| \geq 0$, for all $u \in \RR^+$},
\end{align}
as it is a convex function on $\RR^+$ with a minimum of $0$ at $u=1$. Therefore, combining \cref{eq:dno_fixx,eq:dno_df,eq:tv_df,eq:dnonorm_tv}, we obtain
\begin{small}
\begin{align}
\label{eq:dno_perx_tv}
\E_{y \sim \pi_{t+1}^\star(\cdot \mid \xtilde)}\left[ \left( \log\frac{\pi_{t+1}^\star(y \mid \xtilde)}{\pi_{t+1}(y \mid \xtilde)} \right)^2 \right]
+ \E_{y \sim \pi_{t+1}(\cdot\mid \xtilde)}\left[\left(\log\frac{\pi_{t+1}^\star(y \mid \xtilde)}{\pi_{t+1}(y \mid \xtilde)} \right)^2 \right] \gtrsim \left[ \TV(\pi_{t+1}(\cdot \mid \xtilde),\pi_{t+1}^\star(\cdot \mid \xtilde)) \right]^2.
\end{align}
\end{small}
We then apply \cref{eq:dno_perx_tv} to \cref{eq:dno_mid},
\begin{align*}
&~ \E_{x \sim \rho}\left[ \left( \TV(\pi_{t+1}(\cdot \mid x),\pi_{t+1}^\star(\cdot \mid x)) \right)^2\right]
\\
\lesssim &~ \E_{(x,y_1,y_2) \sim \rho\times\pi_{t+1}^\star\times\pi_{t+1}}\left[ \left(\log\frac{\pi_{t+1}^\star(y_1 \mid x)}{\pi_{t+1}(y_1 \mid x)} - \log\frac{\pi_{t+1}^\star(y_2 \mid x)}{\pi_{t+1}(y_2 \mid x)} \right)^2\right]
\\
\lesssim &~ \frac{\Cfrak_t\Rmax^2\log(\nicefrac{|\Pi|}{\delta})}{N},
\end{align*}
where the last step follows from \cref{eq:oneside_dno_bound}. This completes the proof.
\end{proof}

\section{Additional Experimental Details}
\label{additional-experimental-details}

\textbf{Batched Prompting:} We also show in~\cref{fig:annotation_prompt} the prompt that we send to GPT-4 to annotate preferences. For the sake of efficiency, we ``batch'' requests to GPT-4, meaning that instead of sending every pair of candidate responses to be annotated, we show all candidates side-by-side and ask GPT-4 to apply the scoring rubric to each one in the context window. 

\begin{figure*}[ht]
\small
\begin{framed}
Review the user’s question and the corresponding response using the additive 6-point scoring system described below. Points are accumulated based on the satisfaction of each criterion:

\begin{enumerate}[nolistsep]
\item Give 1 point if the response is formatted as an answer, e.g. does not repeat itself over and over, is not a continuation of the question or other random question/answer pairs or examples, etc.
\item Add 1 more point if the response is relevant and provides some information related to the user’s inquiry, even if it is incomplete or contains some irrelevant content.
\item Add a third point if the response addresses a substantial portion of the user’s question without rambling, but does not completely resolve the query or provide a direct answer.
\item Award a fourth point if the response answers nearly all the elements of the user’s question correctly, in a useful and direct way, and is largely devoid of errors, extraneous content, or excess wordiness. 
\item Grant a fifth point if the response is entirely correct (without any errors or made-up facts), and directly answers all aspects of the user’s question in a helpful, well-organized, focused, and succinct style. 
\item Bestow a sixth point for a response that is entirely correct *AND* concise, impeccably tailored to the user’s question, without any unnecessary information or wordiness, reflecting expert knowledge, amounting to a superbly high-quality, focused, and insightful answer. 
\end{enumerate}

\#\#\# User: <instructions> 
\\

\#\#\# Candidate Responses:
\\

<candidate answers> 
\\

\#\#\# Instructions: 
After examining the user’s instruction and the response:
\begin{enumerate}
    \item Briefly justify your total score for EACH answer, up to 100 words.
    \item score EACH answer using the format: "Score: <totalpoints>"
\end{enumerate}

Remember to assess from the AI Assistant perspective, utilizing web search knowledge as necessary. 
To evaluate the response in alignment with this additive scoring model, we’ll systematically attribute points based on the outlined criteria.
Remember to score EACH candidate answer separately.
Format your justification and score only as a python dictionary with no extraneous output e.g. \{``id1\_justification'': ``...'', ``id1\_score'': 3, ``id2\_justification'' : ...\} with ``idX'' corresponding to the id of the X'th answer.

\{``id1\_justification'' :

\end{framed}
\caption{The annotation prompt submitted to GPT-4 to collect multiple side-by-side preference scores for \DNO. One of the candidates is also an output from GPT-4 itself, along with several outputs from the current policy.}
\label{fig:annotation_prompt}
\end{figure*}

\textbf{Cost Analysis}: We also do a brief cost analysis associated with the scaled-up experiment on 600k training inputs. The major line items are the cost of sampling outputs, annotating them with GPT-4 to construct training pairs, and then training the next iteration against those pairs. For \textit{each} of the six iterations:
\begin{enumerate}[nolistsep]
    \item Sampling: it took about 18-24 hours to inference 5 outputs for all 100k examples on 10 8xA100 80GB pods, depending on the average length, costing about \$6,000 based on spot pricing. 
    \item Annotation: the average number of prompt tokens sent to GPT-4 for annotation across iterations was about 450M, with an average of about 60M completion tokens, amounting to about \$34,000 based on the version of the endpoint we were using. 
    \item Training: ironically, training was the cheapest step, taking only 12-24 hours on two 8xA100 80GB nodes.
\end{enumerate}

\small
\begin{longtable}{|p{16cm}|}
\hline
{\color{Blue} Example 1} \\ \textbf{Question: When will the earth run out of fresh water?} \\ \hline
\begin{minipage}[t]{\linewidth}
{
{\color{Blue} Example 1: Initial SFT model output} \\
The earth will not run out of fresh water. There is a finite amount of fresh water on the planet, but it is constantly being replenished through the water cycle. The water cycle involves the evaporation of water from the earth's surface, its condensation into clouds, and its precipitation back to the earth's surface as rain, snow, or other forms of precipitation. This cycle ensures that fresh water is always available, although the distribution of fresh water may vary depending on factors such as climate, geography, and human use. \\
}
\end{minipage} \\ \hline
\begin{minipage}[t]{\linewidth}
{
{\color{Blue} Example 1: \DNO Iter 1} \\
It is difficult to predict exactly when the Earth will run out of fresh water, as it depends on various factors such as climate change, population growth, water usage, and technological advancements in water management and conservation. However, current estimates suggest that we are already facing a global water crisis, with many regions experiencing water scarcity and stress. \\

According to the United Nations, about 2.2 billion people worldwide lack access to safe drinking water, and this number is expected to increase due to population growth and climate change. The World Wildlife Fund (WWF) estimates that by 2025, half of the world's population will be living in water-stressed areas. \\

While the Earth's total water supply remains constant, the availability of fresh water is decreasing due to overuse, pollution, and climate change. It is important to note that fresh water is a renewable resource, but it takes time for it to replenish naturally. Therefore, it is crucial to manage and conserve water resources to ensure their sustainability for future generations. \\

To address the global water crisis, it is necessary to implement sustainable water management practices, improve water infrastructure, promote water conservation and efficiency, and raise awareness about the importance of water conservation. \\

In summary, while the Earth will not run out of water, the availability of fresh water is becoming increasingly scarce in many regions. It is important to take action to manage and conserve water resources to ensure their sustainability for future generations. \\

It is also important to note that the availability of fresh water varies depending on the location and the specific water sources. Some regions may face water scarcity sooner than others, and it is important to address these issues on a local level. \\

In conclusion, while it is difficult to predict exactly when the Earth will run out of fresh water, it is clear that we are already facing a global water crisis. It is important to take action to manage and conserve water resources to ensure their sustainability for future generations. \\

Reference:

United Nations. (2019). World Water Development Report 2019. Retrieved from https://www.unwater.org/publications/world-water-development-report-2019/

World Wildlife Fund. (2019). Water Scarcity. Retrieved from https://www.worldwildlife.org/issues/water-scarcity \\

Note: The information provided in this response is based on current scientific knowledge and available data. However, it is important to note that the field of water resources management is constantly evolving, and new research and findings may emerge in the future. \\

Note 2: The response provided is based on the information available at the time of writing and may not reflect the most current data or research. It is important to consult the latest scientific literature and data sources for the most up-to-date information on the topic. \\

Note 3: The response provided is based on the assumption that the question is asking about the availability of fresh water for human consumption. However, it is important to note that fresh water is also essential for ecosystems and other living organisms. The availability of fresh water for all purposes is a complex issue that requires a holistic approach to management and conservation. \\

Note 4: The response provided is based on the assumption that the question is asking about the availability of fresh water on a global scale. \\
}
\end{minipage} \\ \hline
\begin{minipage}[t]{\linewidth}
{
{\color{Blue} Example 1: \DNO Iter 2} \\
It is not accurate to say that the Earth will "run out" of fresh water, but rather that we could potentially deplete or contaminate fresh water resources to a point where they can no longer sustain us. Fresh water is a renewable resource, but its availability depends on various factors such as climate, geography, and human activities. Here are some key points to consider: \\

1. Water cycle: The water cycle, which includes evaporation, condensation, precipitation, and runoff, constantly replenishes fresh water resources. However, the rate of replenishment varies depending on the region and climate.

2. Groundwater: Groundwater is a significant source of fresh water, but it is not infinite. Over-extraction of groundwater can lead to depletion and contamination. It takes thousands of years for groundwater to replenish naturally.

3. Surface water: Surface water, such as rivers and lakes, is also a renewable resource, but it can be affected by climate change, deforestation, and other human activities. Droughts and changes in precipitation patterns can reduce the availability of surface water.

4. Contamination: Contamination from pollution, agricultural runoff, and other sources can render fresh water unusable. This includes both groundwater and surface water.

5. Population growth: As the global population continues to grow, the demand for fresh water is increasing. This can lead to over-extraction, contamination, and other issues that can impact the availability of fresh water.

6. Climate change: Climate change is expected to exacerbate water scarcity in many regions due to changes in precipitation patterns, increased evaporation, and melting of glaciers and snowpacks. \\

In conclusion, while fresh water is a renewable resource, its availability is not infinite and can be impacted by various factors. It is essential to manage and conserve fresh water resources sustainably to ensure their long-term availability for future generations.

So, instead of asking when the Earth will "run out" of fresh water, it is more accurate to ask how we can manage and conserve fresh water resources to ensure their long-term sustainability. \\
}
\end{minipage} \\ \hline
\begin{minipage}[t]{\linewidth}
{
{\color{Blue} Example 1: \DNO Iter 3} \\
The Earth as a planet does not "run out" of fresh water, as it continues to cycle through the hydrological cycle, which includes processes such as evaporation, condensation, precipitation, and infiltration. However, the availability and accessibility of fresh water for human use are significant concerns. \\
\\
The freshwater crisis is primarily driven by population growth, urbanization, agricultural expansion, and climate change, which lead to increased demand, pollution, and variability in water availability. According to the United Nations, approximately 2.2 billion people lack access to safe drinking water services, and 4.2 billion people experience severe water scarcity for at least one month a year. \\
\\
While the Earth will not run out of fresh water, the distribution and availability of fresh water for human use are becoming increasingly challenging. To address this issue, it is crucial to implement sustainable water management practices, reduce water waste, and promote equitable access to fresh water resources. \\
\\
In summary, the Earth will not run out of fresh water, but the availability and accessibility of fresh water for human use are becoming increasingly limited, and it is essential to address this challenge through sustainable water management practices and equitable access to resources. \\
}
\end{minipage} \\ \hline
\caption{Outputs of the various \DNO models across iterations for the question: {\color{Blue} When will the earth run out of fresh water?}. The output for Iter-1 is a bit too long as shown in ~\cref{tab:count-of-training-pairs}. We believe this could be addressed with better hyperparameter tuning or preference data. We fine the initial SFT model's response to be missing important points about how the premise of the question is best addressed by highlighting access to freshwater. The last reponse, for Iter 3 is more informative and specific than the initial SFT response. }
\label{ex:example-1}
\end{longtable}

\small
\begin{longtable}{|p{16cm}|}
\hline
\begin{minipage}[t]{\linewidth}
{
{\color{Green} Example 2} \\ 
\textbf{Question: what data structure would you use to design a url lookup system to quickly determine whether a url exists, or if not, its closest prefix. Explain.}
}
\end{minipage} \\ \hline
\begin{minipage}[t]{\linewidth}
{
{\color{Green} Example 2: Initial SFT model output} \\
To design a URL lookup system that can quickly determine whether a URL exists or find its closest prefix, we can use a Trie data structure. \\

A Trie is a tree-like data structure that stores strings in a hierarchical manner. Each node in the Trie represents a character in the string, and the edges represent the characters themselves. The root node represents an empty string, and each subsequent node represents a character appended to the previous node's string. \\

In our URL lookup system, we can use a Trie to store all the existing URLs. Each node in the Trie will represent a URL prefix, and the edges will represent the characters in the URL. \\

To determine whether a URL exists, we can traverse the Trie starting from the root node. If we reach a node that represents a complete URL, we know that the URL exists. If we reach a node that represents an incomplete URL, we know that the URL does not exist. \\

To find the closest prefix of a URL, we can traverse the Trie starting from the root node. We can keep track of the longest prefix we have encountered so far. Whenever we encounter a node that represents a URL prefix, we compare its length with the length of the longest prefix we have encountered so far. If the length of the current prefix is greater, we update the longest prefix. \\

By using a Trie, we can efficiently determine whether a URL exists or find its closest prefix in constant time complexity. This is because the Trie allows us to traverse the data structure in a hierarchical manner, and we can quickly determine whether a URL exists or find its closest prefix by following the appropriate edges. \\
}
\end{minipage} \\ \hline
\begin{minipage}[t]{\linewidth}
{
{\color{Green} Example 2: \DNO Iter 1} \\
To design a URL lookup system that quickly determines whether a URL exists or finds its closest prefix, you can use a combination of a Trie (Prefix Tree) and a Bloom Filter. This approach provides efficient lookup and prefix matching while maintaining a low memory footprint.
\begin{enumerate}
    \item Trie (Prefix Tree): A Trie is a tree-like data structure that stores strings as nodes. Each node in the Trie represents a character in the string, and the edges represent the characters themselves. The root node is empty, and each subsequent node contains a single character. This data structure allows for efficient prefix matching and string lookup.
    \item Bloom Filter: A Bloom Filter is a probabilistic data structure that uses a bit array and a set of hash functions to determine whether an element is a member of a set. It provides a fast way to check if a URL exists in the system, but it may produce false positives.
\end{enumerate}

Here's how you can use these data structures together: \\

\begin{enumerate}
    \item Create a Trie to store all the URLs in the system. Each URL is added as a node in the Trie, and the edges represent the characters in the URL.
    \item Create a Bloom Filter to store the URLs in the system. The Bloom Filter will have a fixed size and a set of hash functions. When a URL is added to the system, it is hashed using the hash functions, and the resulting bits are set in the Bloom Filter.
    \item To check if a URL exists, first check the Bloom Filter. If the URL is not present in the Bloom Filter, it does not exist in the system. If the URL is present in the Bloom Filter, look up the URL in the Trie. If the URL is found in the Trie, it exists in the system. If the URL is not found in the Trie, it may be a prefix of an existing URL.
    \item To find the closest prefix, look up the URL in the Trie. If the URL is not found, start from the root node and traverse the Trie until you find the longest prefix that matches the input URL. This will give you the closest prefix.
\end{enumerate}

This approach provides efficient lookup and prefix matching while maintaining a low memory footprint. The Trie allows for efficient string lookup, and the Bloom Filter provides a fast way to check if a URL exists in the system. However, the Bloom Filter may produce false positives, so it should be used in conjunction with the Trie to ensure accuracy. \\

Note: The Trie and Bloom Filter can be optimized further by using a compressed Trie or a Bloom Filter with a variable-size bit array. These optimizations can reduce the memory footprint and improve the performance of the system. \\
}
\end{minipage} \\ \hline
\begin{minipage}[t]{\linewidth}
{
{\color{Green} Example 2: \DNO Iter 2} \\
To design a URL lookup system that can quickly determine if a URL exists or find its closest prefix, you can use a data structure called a Trie (also known as a Prefix Tree). A Trie is a tree-like data structure that stores strings as nodes, where each edge is labeled with a single character. This data structure is particularly efficient for searching and inserting strings with common prefixes.

Here's how you can use a Trie to implement a URL lookup system:
\begin{enumerate}
    \item  Create a Trie node structure with properties like `is\_end\_of\_word` (indicates if the node represents a complete URL) and a dictionary to store child nodes.
    \item Insert URLs into the Trie by iterating over each character in the URL and adding it as a child node to the current node. If the character is not present in the current node's dictionary, create a new node and add it.
    \item To search for a URL, start at the root node and iterate over each character in the URL. If the character is present in the current node's dictionary, move to the corresponding child node. If the character is not present, return the closest prefix by traversing the Trie upwards from the last valid node.
\end{enumerate}

Here's an example implementation in Python:
\begin{python}
class TrieNode:
    def __init__(self):
        self.is_end_of_word = False
        self.children = {}

class Trie:
    def __init__(self):
        self.root = TrieNode()

    def insert(self, url):
        current_node = self.root
        for char in url:
            if char not in current_node.children:
                current_node.children[char] = TrieNode()
            current_node = current_node.children[char]
        current_node.is_end_of_word = True

    def search(self, url):
        current_node = self.root
        for char in url:
            if char in current_node.children:
                current_node = current_node.children[char]
            else:
                return self.find_closest_prefix(current_node)
        return url if current_node.is_end_of_word else None

    def find_closest_prefix(self, current_node):
        prefix = []
        while current_node:
            prefix.append(list(current_node.children.keys())[0])
            current_node = current_node.children[list(current_node.children.keys())[0]]]
        return ''.join(reversed(prefix))[::-1]
\end{python}

This implementation allows you to quickly determine if a URL exists or find its closest prefix by searching the Trie. The `insert` method inserts URLs into the Trie, and the `search` method searches for a URL or returns its closest prefix if it doesn't exist. \\

}
\end{minipage} \\ \hline
\begin{minipage}[t]{\linewidth}
{
{\color{Green} Example 2: \DNO Iter 3} \\
To design a URL lookup system that can quickly determine if a URL exists or find its closest prefix, you can use a data structure called the Trie (also known as Prefix Tree). A Trie is a tree-like data structure where each node represents a character in the key (URL in this case) and each branch corresponds to a possible character in the alphabet.

Here's how you can implement a URL lookup system using a Trie:

\begin{enumerate}
    \item Create a Trie node with an end-of-string (EOS) marker. This node will represent the end of a URL and will indicate that the URL exists.
    \item For each URL in your dataset, insert it into the Trie. While inserting, traverse the Trie by following the corresponding characters of the URL. If a character doesn't exist in the Trie, create a new node for it.
    \item When inserting a URL, follow these steps:
    \begin{enumerate}
        \item Traverse the Trie using the URL's characters.
        \item If a node doesn't have a child for the next character, create a new node for it.
        \item If a node doesn't have a child for the next character, create a new node for it.
        \item Repeat step a and b until you reach the end of the URL.
        \item Set the EOS marker for the last node you visited.
    \end{enumerate}
\end{enumerate}

To search for a URL or find its closest prefix, follow these steps:

\begin{enumerate}[nolistsep]
    \item Traverse the Trie using the characters of the URL you want to search.
    \item If a node doesn't have a child for the current character, stop the search. This URL doesn't exist in the Trie.
    \item If you reach the EOS marker, the URL exists in the Trie.
    \item If you stop the search before reaching the EOS marker, you've found the closest prefix of the URL.
\end{enumerate}~
}
\end{minipage} \\ \hline
\caption{Outputs of the various \DNO models across iterations for an interview question relating to the {\color{Green} design of a URL lookup system.}. The last response from Iter 3 is more informative and clearer, and doesn't contain misleading information (searching a trie runs in linear complexity in the length of the strings, not constant...). The response for Iter 1 contains an implementation of a Trie, which may be unnecessary because the user did not ask for it.}
\label{ex:example-2}
\end{longtable}

\small
\begin{longtable}{|p{16cm}|}
\hline
\begin{minipage}[t]{\linewidth}
{
{\color{Red} Example 3} \\ 
\textbf{Question: What factors led to the outbreak of ww1}
}
\end{minipage} \\ \hline
\begin{minipage}[t]{\linewidth}
{
{\color{Red} Example 3: Initial SFT model output} \\
There were several factors that led to the outbreak of World War I:

1. Nationalism: The rise of nationalism in Europe led to a desire for territorial expansion and the desire to unite people of the same ethnicity or language under one nation. This led to tensions between countries and a desire to assert dominance over neighboring territories.

2. Imperialism: The scramble for colonies in Africa and Asia led to competition between European powers, which further fueled tensions and rivalries.

3. Militarism: The arms race between European powers led to a buildup of military forces and weapons, which created a sense of insecurity and fear among neighboring countries.

4. Alliances: The complex system of alliances between European powers created a web of interdependencies that made it difficult for any one country to remain neutral in the event of a conflict.

5. Assassination of Archduke Franz Ferdinand: The assassination of Archduke Franz Ferdinand of Austria-Hungary by a Serbian nationalist in 1914 was the immediate trigger for the war, but it was the culmination of these other factors that led to the outbreak of the war.
}\\
\end{minipage} \\ \hline
\begin{minipage}[t]{\linewidth}
{
{\color{Red} Example 3: \DNOmore Iter 1} \\
The outbreak of World War I was the result of a complex interplay of factors, including:

\begin{enumerate}
    \item Imperialism and Colonialism: The competition for colonies and territories among European powers led to tensions and rivalries. The scramble for Africa and the race to acquire overseas territories created a sense of competition and mistrust among nations.
    \item Militarism and Arms Race: The arms race among European powers, particularly between Germany and Britain, led to a buildup of military power. The development of new technologies, such as machine guns, artillery, and submarines, increased the destructive potential of warfare.
    \item Alliances and Blocs: The complex web of alliances and blocs among European powers created a situation where conflicts in one region could escalate into a larger war. The Triple Entente (France, Russia, and Britain) and the Triple Alliance (Germany, Austria-Hungary, and Italy) were the main alliances that formed the basis of the conflict.
    \item Nationalism and Ethnic Tensions: The rise of nationalism and ethnic tensions in various regions, such as the Balkans, contributed to the outbreak of war. The desire for self-determination and independence led to conflicts and tensions among different ethnic groups.
    \item Assassination of Archduke Franz Ferdinand: The assassination of Archduke Franz Ferdinand of Austria-Hungary by a Serbian nationalist in 1914 was the immediate trigger for the war. The complex web of alliances and rivalries among European powers led to a series of diplomatic maneuvers and ultimatums, which ultimately resulted in the declaration of war.
    \item Economic Factors: Economic competition and trade disputes among European powers also contributed to the tensions leading up to the war. The economic rivalry between Germany and Britain, particularly in the context of colonial competition, added to the existing tensions.
    \item Political Instability: Political instability and power struggles within various countries, such as the assassination of King Alexander I of Serbia in 1903, created a sense of uncertainty and instability. These political upheavals contributed to the overall atmosphere of tension and conflict.
    \item Failure of Diplomacy: The failure of diplomatic efforts to resolve conflicts and tensions among European powers led to the breakdown of communication and trust. The inability of leaders to find peaceful solutions to disputes contributed to the escalation of tensions and the eventual outbreak of war.
    \item Miscalculation and Miscommunication: Miscalculations and miscommunications among military and political leaders also played a role in the outbreak of war. The lack of clear communication and misunderstandings between different countries led to a series of missteps and escalations.
    \item Systemic Factors: The complex system of alliances, blocs, and rivalries among European powers created a situation where conflicts in one region could have a domino effect on other regions. The interconnectedness of these factors led to a chain reaction of events that ultimately led to the outbreak of World War I.
\end{enumerate}~

}
\end{minipage} \\ \hline
\begin{minipage}[t]{\linewidth}
{
{\color{Red} Example 3: \DNOmore Iter 6} \\
The outbreak of World War 1, which started on July 28, 1914, resulted from a complex web of events and circumstances that had been developing for years prior. Here are some of the key factors that led to the war:
\begin{enumerate}
    \item **Militarism:** All major powers had been engaged in an arms race, building up their military forces and fortifications along their borders. This led to a sense of insecurity and a belief that preventive war was necessary to secure national interests.
    \item **Alliance System:** Europe was divided into two major alliances: the Triple Entente (France, Russia, and Britain) and the Triple Alliance (Germany, Austria-Hungary, and Italy). These alliances created a complex web of obligations that meant if one member was attacked, all members were obliged to come to their aid.
    \item  **Imperialism:** The scramble for colonies had led to territorial disputes and competition for resources, leading to tensions between the major powers.
    \item **Nationalism:** Strong feelings of national identity and pride led to a disregard for other nations' interests. The assassination of Archduke Franz Ferdinand of Austria, which triggered the war, was a result of nationalist tensions in the Balkans.
    \item **Security Dilemma:** The actions of one country led other countries to take defensive measures, which in turn made the first country feel more insecure, leading to a spiral of actions and reactions that escalated the situation.
    \item **Assassination of Archduke Franz Ferdinand:** The assassination by Gavrilo Princip, a member of the Black Hand, a Serbian nationalist group, provided the immediate trigger for the war.
    \item **Ultimatum to Serbia:** Austria-Hungary delivered an ultimatum to Serbia, which Serbia did not accept fully, leading to Austria-Hungary declaring war on Serbia.
    
\end{enumerate}

These factors combined to create a powder keg that was waiting for a spark to ignite the war. Once the spark was lit, the complex web of alliances meant that the conflict quickly escalated into a global war.\\
}
\end{minipage} \\ \hline
\caption{Outputs of first and last \DNOmore iterations for the third example {\color{Red} What factors led to the outbreak of ww1?}. The last response from Iter 6 has a higher information density; it recalls more key facts and entities, including the specific date of the start of WW1, Triple Entente, the Triple Alliance, Gavrilo Princip, and the Black Hand. Iteration 1 also contains some of this information, but is too wordy. The initial SFT model seems a lot more superficial.}
\label{ex:example-3}
\end{longtable}

\end{document}